\newtheorem{thm}{Theorem}[section]
\newtheorem{lem}[theorem]{Lemma}
\newtheorem{assumption}{Assumption}[section]
\newtheorem{rmk}{Remark}[section]
\newcommand{\vect}[1]{\boldsymbol{\mathbf{#1}}}
\newcommand{\bbR}{\mathbb{R}}
\newcommand{\bbP}{\mathbb{P}}
\newcommand{\tr}{\mathrm{tr}}
\newcommand{\bbE}{\mathbb{E}}
\newcommand{\ind}[1]{\mathbbm{1}_{\{#1\}}}
\begin{document}

\title{Gaussian Process Inference Using Mini-batch Stochastic Gradient Descent: Convergence Guarantees and Empirical Benefits}

\author{\name Hao Chen\thanks{Equal contribution.} \email haochen@stat.wisc.edu\\
        \name Lili Zheng\footnotemark[1] \email lilizheng@stat.wisc.edu\\
        \addr Department of Statistics\\
       University of Wisconsin-Madison\\
       1300 University Avenue\\
       Madison, WI 53706, USA\\
        \name Raed Al Kontar \thanks{Corresponding author.} \email alkontar@umich.edu\\
        \addr Department of Industrial and Operations Engineering\\
        University of Michigan\\
        1891 IOE Building 1205, Beal Ave\\
        Ann Arbor, MI 48109, USA\\
        \name Garvesh Raskutti \email raskutti@stat.wisc.edu\\
        \addr Department of Statistics\\
       University of Wisconsin-Madison\\
       1300 University Avenue\\
       Madison, WI 53706, USA}


\maketitle

\begin{abstract}
Stochastic gradient descent (SGD) and its variants have established themselves as the go-to algorithms for large-scale machine learning problems with independent samples due to their generalization performance and intrinsic computational advantage. However, the fact that the stochastic gradient is a biased estimator of the full gradient with correlated samples has led to the lack of theoretical understanding of how SGD behaves under correlated settings and hindered its use in such cases. In this paper, we focus on hyperparameter estimation for the Gaussian process (GP) and take a step forward towards breaking the barrier by proving minibatch SGD converges to a critical point of the full log-likelihood loss function, and recovers model hyperparameters with rate $O(\frac{1}{K})$ for $K$ iterations, up to a statistical error term depending on the minibatch size. Our theoretical guarantees hold provided that the kernel functions exhibit exponential or polynomial eigendecay which is satisfied by a wide range of kernels commonly used in GPs. Numerical studies on both simulated and real datasets demonstrate that minibatch SGD has better generalization over state-of-the-art GP methods while reducing the computational burden and opening a new, previously unexplored, data size regime for GPs. 
\end{abstract}

\begin{keywords}
  Stochastic Optimization, Gaussian Processes, Convergence Rate, Scalability
\end{keywords}

\section{Introduction}

The Gaussian process (GP) has seen many success stories in various domains, be it in optimization \citep{yue2019non,snoek2012practical}, reinforcement learning \citep{srinivas2009gaussian,krause2011contextual}, time series analysis \citep{kontar2020minimizing,alvarez2011computationally}, control theory \citep{kocijan2004gaussian,mesbah2016stochastic} and simulation meta-modeling \citep{zhou2011simple,qian2008bayesian}. One can attribute such success to its natural Bayesian interpretation, uncertainty quantification capability and highly flexible model priors. Yet its main limitation is the $O(n^3)$ computation and $O(n^2)$ storage for $n$ training points \citep{rasmussen2003gaussian}. Indeed, as mentioned in \cite{hensman2013gaussian}, a traditional large dataset for a GP is one with a few thousand data points and even those often require approximation techniques.

As a result, in the past two decades, a large body of work on GPs tackled approximate inference procedures to reduce the computational demands and numerical instabilities (mainly due to the need for matrix inversions). This push towards scalability dates back to the seminal paper by \citet{quinonero2005unifying} in 2005 which unified previous approximation methods into a single probabilistic framework based on inducing points. Since then, many new methods have also been introduced. Most notable are: variational inference procedures that laid the theoretical foundation for the class of inducing point methods \citep{damianou2016variational,nguyen2014collaborative,zhao2016variational,alvarez2010efficient,wilson2016stochastic}, mixture of experts models \citep{deisenroth2015distributed,tresp2000bayesian}, covariance tapering \citep{furrer2006covariance,kaufman2008covariance} and kernel expansions \citep{le2013fastfood,rahimi2008random, yang2015carte}. On the other hand, there has been a recent push to utilize increasing computational power and GPU acceleration to solve exact GPs. This recent literature includes distributed Cholesky factorizations \citep{nguyen2019exact}, preconditioned conjugate gradients (PCG) to solve linear systems \citep{gardner2018gpytorch} and kernel matrix partitioning to perform all matrix-vector multiplications \citep{wang2019exact}. Interestingly, \cite{wang2019exact} was able to fit a bit more than 1 million data points using 8 GPUs in a few days.

One possible solution to extend GPs far beyond what is currently possible is through stochastic gradient decent (SGD) and its variants: \emph{drawing $m<<n$ samples at each iteration and updating model parameters following the gradient of the log-likelihood loss function on the $m$ subsamples}. Indeed, SGD, or more generally the capability of inference via minibatches (possibly also with second order information), has been a key propeller behind the success of deep learning~\citep{lecun2015deep} in its various forms and other objectives. \textit{The caveat in GPs, however, is that, unlike empirical loss minimization, there exists correlation across all samples where any finite collection of the samples have a joint Gaussian distribution with covariance characterized by an empirical kernel matrix}. Hence the log-likelihood loss function is no longer the sum of losses evaluated at each sample, translating to the stochastic gradient being a biased estimator of the full gradient when taking expectation with respect to the random sampling. The lack of theoretical backing and understanding of how SGD behaves has long stood in the way of using SGD to conduct inference in GPs \citep{hensman2013gaussian} and in most settings where correlation amongst samples is high.

In this paper, we establish convergence guarantees of SGD for GPs for both the full gradient and the model parameters. Interestingly, without convexity or even Liptchitz conditions on the loss function, the structure of GP leads to an optimization error term of $O(\frac{1}{K})$ for converging to a critical point and recovering the true noise variance up to a statistical error that vanishes as $m$ tends to $+\infty$, for both RBF kernels and Mat\'{e}rn kernels. 
Our proof involves two key steps: first we concentrate the stochastic gradient to its conditional expectation using a covering argument and then we show that the latter satisfies a property similar to strong convexity by exploiting eigenvalues of the empirical kernel matrix. The proof and key findings offer standalone value beyond GPs and we hope they encourage researchers to further investigate SGD in other correlated settings such as L\'evy, It\^{o} and Markov processes.

Most importantly, our results open up a new data size regime to explore GPs. We are able to train $n \approx 1.2\times 10^6$ data points using a single CPU core in around 30 minutes. Recall, it took the most recent advancements in exact GPs a couple of days using 8 GPUs to train when $n \approx 10^6$, and $n$ is limited to approximately $10^4$ without GPU. We find that GPs inferred using SGD offer remarkably better performance in various case studies with different dataset sizes, noise levels and input dimensions. 



\subsection{Main Contributions} \label{keyfindings} 
We establish convergence guarantees for the minibatch SGD algorithm when training a GP under sampling with or without replacement and conduct numerical experiments to validate and supplement our theoretical results. Our main contributions are summarized as follows:
\begin{itemize}[leftmargin=*]
    \item \textbf{Convergence guarantees:} For a large enough minibatch size $m$, minibatch SGD converges to a {\em critical point} of the full log-likelihood loss function, and {\em recovers the true noise variance} up to a statistical error depending on $m$, when the kernel function exhibits exponential (RBF kernels) or polynomial eigendecay (Mat\'{e}rn kernels). 
    To be specific, the full gradient and the estimation error of the noise variance evaluated at the $K$th iterate are bounded by an {\em optimization error} term $O(\frac{1}{K})$ and a {\em statistical error} term $O(m^{-\frac{1}{2}})$, see Theorems \ref{thm:large_m_param_converg} to \ref{thm:large_m_grad_converg_poly}.  
    \begin{itemize}
        \item \textbf{Proof techniques for statistical error:} Since the stochastic gradient is biased for estimating full gradients, we instead bound the difference from its conditional expectation given the covariates in the corresponding minibatch, {\em uniformly} over all possible parameter iterates. We use novel truncation and covering arguments to prove the uniform error bound, in order to avoid the dependence between past parameter iterates and the minibatch in the current iteration. This contributes to the statistical error term $O(m^{-\frac{1}{2}})$ in the convergence error bound.
        \item \textbf{Proof techniques for optimization error:} To guarantee the $O(\frac{1}{K})$ optimization error bound, no convexity or even Liptchitz condition on the loss function are assumed. Instead, we prove that the conditional expectation of the loss function given covariates $\vect{X}_n$ satisfies a relaxed property of strong convexity (see e.g., Lemma \ref{lem:g_k_conv_property_abbrv}), where the ``curvature'' parameter is lower bounded by a constant regardless of minibatch size $m$. This proof relies on careful analysis for bounding the eigenvalues of empirical kernel matrices.
    \end{itemize}
   \item \textbf{Numerical findings:} Through benchmarking with state-of-the-art methods on various datasets we show that SGD offers great value from both computational and statistical perspectives. Computationally, we scale to dataset sizes previously unexplored in GPs in a fraction of time needed for competing methods. Meanwhile statistically, we find that SGD improves generalization in GPs, specifically in large data settings.  
\end{itemize}
\subsection{Related Work}

As mentioned earlier, there are several methods trying to tackle the {\em computational complexity of GPs}. Those can be roughly split into the following three categories, though it is by no means an exhaustive list (see the survey in [1]). 
\begin{itemize}
    \item \textbf{Exact inference via matrix vector multiplications (MVM):} This recent class of literature has had the most success in scaling GPs. Initially such approaches depended on a structured kernel matrix where data lies in a regularly spaced grid \citep{saatcci2012scalable,wilson2015kernel}. Then with the help of GPU acceleration, conjugate gradient and distributed Cholesky factorization, MVMs were applied to more general settings \citep{wang2019exact,gardner2018gpytorch,ubaru2017fast}. Such approaches have training complexity of $O(n^2)$ ($O(n\, \mbox{log} \, n)$ possible on spaced grids), yet amenable to distributed computation and GPU acceleration.
    \item \textbf{Sparse approximate inference:} This class of methods is based on a low rank approximation of the empirical kernel matrix where $\vect{K}_n \approx \vect{K}_{nz} \vect{K}^{-1}_{zz} \vect{K}_{zn}$ and $z$ denotes a set of inducing points with $\mbox{cardinality}(z)=n_z << n$ \citep{kontar2018nonparametric, alvarez2009sparse,damianou2016variational, zhao2016variational,snelson2006sparse}. Their time complexity is mainly $O(n_z^2 n)$ which can be reduced to $O(n + cn_z)$ for structured and regularly spaced grids. Indeed, sparse GPs have gained increased attention since variational inference (VI) laid the theoretical foundation of this class of inducing points/kernel approximations (starting from the early work of \citet{titsias2009variational}). 
    \item \textbf{Stochastic variational inference (SVI):} Following the work of \cite{hoffman2013stochastic}, SVI was introduced to GPs in \cite{hensman2013gaussian}. The key idea is to introduce a variational distribution over the inducing points so that the VI framework is amenable to stochastic optimization. This leads to a complexity of $O(n_z^3)$ at each iteration \citep{hoang2015unifying,blei2017variational}. Unfortunately, recent results in \cite{burt2019rates} show the need for at least $O(\mbox{log}^D n)$ inducing points for Gaussian kernels, which implies a superlinear growth with the input dimension $D$. Although many of the aforementioned methods are proposed in the context of model inference (prediction), the idea can often carry over to the model selection task, and the complexity is multiplied by the number of iterations.
\end{itemize}


The {\em theoretical analysis for SGD} has also been extensively studied under various assumptions~\citep{nemirovski2009robust,rakhlin2011making,frostig2015competing,bottou2018optimization}. In particular, in the context of empirical risk minimization where the objective loss function is summed over $n$ data points and stochastic gradients are calculated for i.i.d. sampled data at all iterations, it is known that the expected squared error of SGD iterates at iteration $K$ compared to the true minimizer is $O(1/K)$, with diminishing step size for strongly convex objectives. 
Furthermore, some recent literature~\citep{hardt2016train,keskar2016large} suggests that SGD has good generalization power, which has encouraged more practitioners to apply SGD to various application scenarios.

However, there are much fewer results when the stochastic gradients are {\em biased estimators} for the full gradients, despite the fact that unbiased estimates for the full gradient can be expensive or unavailable in certain cases. Some examples include: learning graph neural networks \citep{chen2018fastgcn}, distributed parallel optimization where sparsified stochastic gradient is applied \citep{stich2019error} and performing model selection for GPs. \cite{homem2008rates,chen2018stochastic,ajalloeian2020analysis} study the stochastic gradient algorithms under non-i.i.d. sampling or when the stochastic gradients are biased, and provide error bounds involving the bias term, or convergence guarantees built on consistency assumptions. Our paper does not make assumptions on the consistency of stochastic gradients or convexity of the full loss function, but exploits the nature of GP loss function and kernel matrices instead.

\subsection{Organization}
The paper is organized as follows: The problem setup is described in Section \ref{sec:problem_setup}, and theoretical guarantees are provided in Section \ref{sec:thm}; Section \ref{sec:proofoverview} is devoted to a proof outline, key lemmas and the proof of some main steps; Section \ref{sec:practical} presents practical considerations for applying minibatch SGD on GPs, while Section \ref{sec:numeric} includes our numerical results. We point out some open problems in Section \ref{sec:open_problem} and conclude in Section \ref{sec:conclusion}.
\section{Problem Setup}\label{sec:problem_setup}

\paragraph{Notation}
Vectors and matrices are denoted by boldface letters, e.g., $\vect{K}_n$, $\boldsymbol{\mathbf{\theta}}$, except for the full gradient $\nabla \ell(\vect{\theta})$ and stochastic gradient $g(\vect{\theta})$. For any vector $\vect{u}\in \bbR^p$, $u_i$ denotes its $i$th entry, and $\|\vect{u}\|_2=\left(\sum_{i=1}^p u_i^2\right)^{\frac{1}{2}}$ denotes its $\ell_2$ norm. For any square matrix $\vect{A}$, $\lambda_i(\vect{A})$ denotes its $i$th largest eigenvalue. Also see a table of important notations in Appendix~\ref{append:notations}.
%

We consider the Gaussian process model 
\begin{equation}\label{model}
\begin{split}
    f\sim \mathcal{GP}(\mu(\cdot), k(\cdot,\cdot)),&\quad \mathbf{x}_1,\dots,\mathbf{x}_n\overset{\text{i.i.d.}}{\sim}\bbP,\\
    y_i=f(\mathbf{x}_i) + \epsilon_i,&\quad \epsilon_i\overset{\text{i.i.d.}}{\sim} \mathcal{N}(0, \sigma_{\epsilon}^2),\quad 1\leq i\leq n, 
\end{split}
\end{equation}
where $\mathbf{x}_i\in \mathcal{X}\subset \mathbb{R}^D$ is the input, $\mu(\cdot): \mathcal{X}\rightarrow \mathbb{R}$ is the prior mean function, $k(\cdot,\cdot): \mathcal{X}\times \mathcal{X}\rightarrow \mathbb{R}$ is the prior covariance function, and $\epsilon_i$ is the observational noise with variance $\sigma_{\epsilon}^2$. Without loss of generality, we consider constant 0 mean function. In addition, the prior covariance function $k(\cdot,\cdot)=\sigma_f^2k_0(\cdot,\cdot)$ involves a known kernel function $k_0(\cdot,\cdot)$ and a signal variance parameter $\sigma_f^2$. 
We observe data points $\{(\mathbf{x}_i, y_i)\}_{i=1}^n$ generated from (\ref{model}) and organize them into $(\mathbf{X}_n, \mathbf{y}_n)=((\mathbf{x}_1,\dotsc,\mathbf{x}_n)^\top,(y_1,\dotsc,y_n)^\top)$, from which we aim to learn the hyperparameters in order to predict outputs from new inputs based on the posterior process.

Denote by $\vect{\theta}^*=(\sigma_{f}^2,\sigma_{\epsilon}^2)^\top\in\bbR^{2}$ 
the underlying hyperparameters to be determined, and for notational convenience, we may also use $\theta_1^*$ to denote $\sigma_f^2$ and $\theta_2^*$ to denote $\sigma_{\epsilon}^2$ in the following. 
One direct approach to estimate $\vect{\theta}^*$ is by applying gradient descent to minimize the scaled negative log marginal likelihood function
\begin{equation}\label{eq:objective}
\begin{split}
    \ell(\vect{\theta};\mathbf{X}_n, \mathbf{y}_n) & = -\frac{1}{n}\log p(\mathbf{y}_n| \mathbf{X}_n,\vect{\theta}) \\
    & = \frac{1}{2n}[\mathbf{y}_n^\top\mathbf{K}_n^{-1}(\vect{\theta})\mathbf{y}_n+\log|\mathbf{K}_n(\vect{\theta})|+n\log (2\pi)]
\end{split}
\end{equation}
over $\vect{\theta}\in (0,\infty)^{2}$, 
where $\mathbf{K}_n(\vect{\theta})=\theta_1\mathbf{K}_{f,n}+\theta_{2}\mathbf{I}_n\in \bbR^{n\times n}$ 
is the marginal covariance matrix for noisy observations $\mathbf{y}_n$ given $\vect{X}_n$ and $\mathbf{K}_{f,n}\in\bbR^{n\times n}$ is the kernel matrix of $k_0$ evaluated at $\mathbf{X}_n$, i.e. $(\vect{K}_{f,n})_{i,j}=k_0(\vect{x}_i,\vect{x}_j)$. For notational convenience we will omit $\vect{K}_n(\vect{\theta})$ to $\vect{K}_n$ when $\vect{\theta}$ is clear from the context and denote $\vect{K}_n(\vect{\theta}^*)$ by $\vect{K}_n^*$. In this case, the derivative of $\ell(\vect{\theta})$ is of particular interest to us where each of its entries takes the form
\begin{equation}\label{eq:gradient}
\begin{split} 
  \left(\nabla\ell(\vect{\theta};\mathbf{X}_n, \mathbf{y}_n)\right)_l & = \frac{1}{2n}\left[-\mathbf{y}_n^\top\mathbf{K}_n^{-1}\frac{\partial \mathbf{K}_n}{\partial \theta_l}\mathbf{K}_n^{-1}\mathbf{y}_n+\text{tr}\left(\mathbf{K}_n^{-1}\frac{\partial \mathbf{K}_n}{\partial \theta_l}\right)\right] \\
   & = \frac{1}{2n}\text{tr}\left[(\mathbf{K}_n^{-1}(\mathbf{I}_n-\mathbf{y}_n\mathbf{y}_n^T\mathbf{K}_n^{-1})\frac{\partial \mathbf{K}_n}{\partial \theta_l}\right],
\end{split}   
\end{equation}
where $\theta_l$ is the $l$th element of $\vect{\theta}$ and $(\partial \mathbf{K}_n/\partial \theta_l)_{jk}=\partial (\mathbf{K}_n)_{jk}/\partial \theta_l$. For notational convenience we will suppress $\vect{X}_n, \vect{y}_n$ and use $\nabla \ell(\vect{\theta})$ instead. Notice that the computation in (\ref{eq:gradient}) is dominated by the calculation of $\mathbf{K}_n^{-1}$ which requires $O(n^3)$ time. In order to reduce the computational cost of training, we consider the minibatch stochastic gradient descent approach to optimize ($\ref{eq:objective}$).
\subsection{Minibatch SGD algorithm}
Let $\xi$ be a random subset of $\{i\}_{i=1}^n$ of size $m$, then $\{(\mathbf{x}_i,y_i)\}_{i\in \xi}$ is the corresponding subset of data points which we organize into $(\mathbf{X}_{\xi}, \mathbf{y}_{\xi})$, where  $\mathbf{X}_{\xi}$ is the submatrix formed by the rows of $\mathbf{X}_n$ and $\mathbf{y}_{\xi}$ is the subvector of $\mathbf{y}_n$, both indexed by $\xi$. Define $g(\vect{\theta};\mathbf{X}_{\xi}, \mathbf{y}_{\xi})\in \bbR^{2}$ as an approximation to $\nabla\ell(\theta;\mathbf{X}_n, \mathbf{y}_n)$ that can be calculated from this subset, i.e.
\begin{equation}\label{eq:stochastic-gradient}
    \left(g(\vect{\theta};\mathbf{X}_{\xi}, \mathbf{y}_{\xi})\right)_l =  \frac{1}{2s_l(m)}\text{tr}\left[(\mathbf{K}_{\xi}^{-1}(\mathbf{I}_m-\mathbf{y}_{\xi}\mathbf{y}_{\xi}^\top\mathbf{K}_{\xi}^{-1})\frac{\partial \mathbf{K}_{\xi}}{\partial \theta_l}\right],\quad 1\leq l\leq 2,
\end{equation}
where $\mathbf{K}_{\xi}$ is the covariance matrix for $\mathbf{y}_{\xi}$ while also the principle submatrix formed by the rows and columns of $\mathbf{K}_n$ indexed by $\xi$. In the following we will also let $\vect{K}_{f,\xi}$ denote the $m\times m$ block of $\vect{K}_{f,n}$ indexed by $\xi$. A natural choice for $s_l(m)$ is $m$, but we will see in Section \ref{sec:thm} that if kernels $k_0$ have exponential eigendecay, setting $s_1(m)\asymp \log m$ and $s_{2}(m)=m$ would lead to the convergence of both $\theta^{(k)}_1, \theta^{(k)}_{2}$ to the true hyperparameters. 
Algorithm \ref{alg:algorithm1} summarizes the steps of minibatch SGD, where we do not specify whether minibatches are sampled with or without replacement since our convergence guarantees will hold true under both cases, if minibatch size $m$ is large enough (details provided in Section \ref{sec:thm}). We consider \emph{diminishing step sizes}: the step size at the $k$ th iteration is $\alpha_k=\frac{\alpha_1}{k}$. It is noteworthy that the time complexity of Algorithm \ref{alg:algorithm1} is $O(Km^3)$ , compared to $O(Kn^3)$ for running gradient descent with $K$ iterations.
\subsection{Sampling Methods}

In Algorithm \ref{alg:algorithm1} 
we conduct uniform sampling for each minibatch, that is, any subset of indices $\xi\subset[n]$ of size $m$ has the same probability of being selected. An alternative to uniform sampling is to sample data points that are close to each other, which we call {\em nearby sampling}. One particular nearby sampling strategy is nearest neighbor search, where a minibatch consists of a uniformly sampled data point and its $m-1$ nearest neighbors within the data pool. We may construct a $k$-$d$ tree to conduct nearest neighbor search, which finds the $m-1$ nearest neighbors for every data point in a given dataset of size $n$ in $O(n\log n)$ time and $O(n)$ space. That is to say, the time complexity for minibatch SGD with this nearby sampling method (only line 3 in Algorithm \ref{alg:algorithm1} changes) is $O(Km^3+n\log n)$ for $K$ iterations.

Our main theoretical contribution is establishing convergence guarantees for uniform sampling SGD in Algorithm \ref{alg:algorithm1}, but in addition to that, we will also provide some theoretical insights and numerical experiments for understanding the effect of nearby sampling. 
\begin{algorithm}[t]
\SetAlgoLined
 Input: $\vect{\theta}^{(0)}\in \bbR^{2}$, initial step size $\alpha_1>0$.\\
 \For{$k=1,2,\dotsc,K$}{
  Randomly sample a subset of indices $\xi_k$ of size $m$\; 
  Compute the stochastic gradient $g(\vect{\theta}^{(k-1)};\mathbf{X}_{\xi_k}, \mathbf{y}_{\xi_k})$\;
  $\alpha_k\leftarrow \frac{\alpha_1}{k}$\;
  $\vect{\theta}^{(k)}\leftarrow \vect{\theta}^{(k-1)} - \alpha_kg(\vect{\theta}^{(k-1)};\mathbf{X}_{\xi_k}, \mathbf{y}_{\xi_k})$\;
 }
 \caption{Minibatch SGD with uniform sampling}
 \label{alg:algorithm1}
\end{algorithm}\vspace{-2mm}
\section{Theoretical Guarantees}\label{sec:thm}
In this section, we present convergence guarantees for Algorithm \ref{alg:algorithm1}, including error bounds for $\theta_l^{(k)}-\theta_l^*, l=1,2$ and $\nabla \ell(\vect{\theta}^{(k)})$. Two types of kernels are considered: those with exponential eigendecay (Section \ref{sec:theory_exponential}) and those with polynomial eigendecay (Section \ref{sec:theory_polynomial}). For both types of kernels, the convergence of $\theta^{(k)}_{2}$ to the true noise variance $\sigma_{\epsilon}^2$ and the full gradient $\nabla \ell(\vect{\theta}^{(k)})$ to $0$ is guaranteed. In particular, for the former type of kernel, $\theta^{(k)}_1$ is also guaranteed to converge to the truth $\sigma_{f}^2$ under appropriate choice of $s_1(m)$. 

Furthermore, to understand the faster convergence observed in numerical studies when nearby sampling is applied rather than uniform sampling (Algorithm \ref{alg:algorithm1}), we provide theoretical insights into how nearby sampling influences the curvature parameter in Section \ref{sec:nearby_sampling}. 

First we state the assumptions needed for our convergence guarantees.
\begin{assumption}[Bounded iterates]\label{assump:param_bnd}
Both $\vect{\theta}^*$ and $\vect{\theta}^{(k)}$ for $0\leq k\leq K$ lie in $[\theta_{\min},\theta_{\max}]^{2}$, where $0<\theta_{\min}<\theta_{\max}$. 
\end{assumption}
\begin{remark}[Justification for Assumption~\ref{assump:param_bnd}]
The boundedness of parameter iterates is usually assumed in the literature of theoretical analysis for SGD~\citep[e.g.,][]{nemirovski2009robust}. As will be revealed by Theorem~\ref{thm:large_m_param_converg} and Theorem~\ref{thm:large_m_grad_converg_poly}, $\vect{\theta}^{(k)}$ is guaranteed to be bounded within a region around $\vect{\theta}^*$ that gets smaller as $k$ increases w.h.p., whenever the previous $\vect{\theta}^{(0)},\dots,\vect{\theta}^{(k-1)}$ are bounded within some $[\theta_{\min},\theta_{\max}]$. 
Hence, we only need to be careful about the initial steps of Algorithm~\ref{alg:algorithm1}, 
ensuring that the parameters $\theta^{(k)}_j, j=1,2$ are always positive and bounded. 
Moreover, our numerical experiments suggest that the iterate $\vect{\theta}^{(k)}$ of Algorithm~\ref{alg:algorithm1} is always bounded as long as the initial step size is chosen appropriately (see Figure~\ref{fig:param_unif} in Section~\ref{sec:numeric}). 

\end{remark}
\begin{assumption}[Bounded stochastic gradient]\label{assump:sg_bnd}
For all $0\leq k<K$, \begin{equation*}
    \|g(\vect{\theta}^{(k)};\vect{X}_{\xi_{k+1}},\vect{y}_{\xi_{k+1}})\|_2\leq G
\end{equation*} for some $G>0$.
\end{assumption}

\begin{remark}[Justification for Assumption~\ref{assump:sg_bnd}]
The boundedness assumption for stochastic gradients is also commonly seen in the literature~\citep[e.g.,][]{hazan2011beyond}. Furthermore, \textbf{Assumption~\ref{assump:param_bnd} can imply Assumption~\ref{assump:sg_bnd} with high probability}, under similar conditions to those in Theorem~\ref{thm:large_m_param_converg} or Theorem~\ref{thm:large_m_param_converg_poly}. The key idea is the stochastic gradients will be shown to be close to their conditional expectation $\mathbb{E}(g(\vect{\theta}^{(k)};\vect{X}_{\xi_{k+1}},\vect{y}_{\xi_{k+1}}|\vect{X}_{\xi_{k+1}})$, and the latter is bounded with high probability given Assumption~\ref{assump:param_bnd} and the eigendecay assumptions for kernels in Theorem~\ref{thm:large_m_param_converg} or Theorem~\ref{thm:large_m_param_converg_poly}. We defer the detailed explanation on this in Appendix~\ref{append:sg_bnd}. 

\end{remark}

\subsection{Kernels with Exponential Eigendecay}\label{sec:theory_exponential}
The exponential eigendecay assumption is stated in detail as follows:
\begin{assumption}[Exponential eigendecay]\label{assump:eigen_decay_exp}
Consider the kernel operator $\mathcal{K}: L^2(\mathbb{P})\rightarrow L^2(\mathbb{P})$ that satisfies $\mathcal{K}\phi(\cdot)= \int \phi(\vect{x})k_0(\cdot,\vect{x})d\mathbb{P}(\vect{x})$, where $\mathbb{P}$ is the probability measure of the input as defined in~\eqref{model}. The eigenvalues of $\mathcal{K}$ are $\{Ce^{-bj}\}_{j=0}^{\infty}$, where $b>0$ and $C\leq 1$ are regarded as constants.
\end{assumption}
The exponential eigendecay assumption is satisfied by the Radial Basis Function (RBF) kernels $k(\vect{x},\vect{x}')=\exp\{-\|\vect{x}-\vect{x}'\|_2^2/(2l^2)\}$ when the probability distribution $\mathbb{P}$ of the input is Gaussian (see Section 4.3.1 of \cite{rasmussen2003gaussian}), which is widely seen in the GP literature. The specific decay rate $b$ depends on the lengthscale parameter $l$ of the corresponding kernel $k_0$. 
The requirement $C\leq 1$ is only for theoretical convenience, and it suffices to have bounded $C$.
The following theorem guarantees the convergence of the parameter iterates under the aforementioned assumptions.
\begin{thm}[Convergence of parameter iterates, exponential eigendecay]\label{thm:large_m_param_converg}
Consider the output $\vect{\theta}^{(K)}$ of Algorithm~\ref{alg:algorithm1}, the minibatch SGD algorithm with diminishing step sizes. Under Assumptions \ref{assump:param_bnd} to \ref{assump:eigen_decay_exp}, when $m>C$ for some constant $C>0$, we have the following results under two corresponding conditions on $s_l(m)$:
 \begin{enumerate}[leftmargin=*]\vspace{-2mm}
 \item If $s_2(m)=m$, initial step size $\alpha_1$ satisfies $\frac{3}{2\gamma}\leq \alpha_1\leq \frac{2}{\gamma}$ where $\gamma=\frac{1}{4\theta_{\max}^2}$, then for any $0<\varepsilon<C\frac{\log\log m}{\log m}$, with probability at least $1-CK\exp\{-cm^{2\varepsilon}\}$,
\begin{equation}\label{eq:noise_var_convg_bnd}
    (\theta^{(K)}_2-\theta^*_2)^2\leq \frac{8G^2}{\gamma^2(K+1)}+Cm^{-\frac{1}{2}+\varepsilon}.
\end{equation}
\item If in addition to $s_2(m)=m$, $s_1(m)$ is set as $\tau\log m$ where $\tau>C$, $\frac{3}{2\gamma}\leq \alpha_1\leq \frac{2}{\gamma}$ where $\gamma$ depends on $\tau$, 
    then for any $0<\varepsilon<\frac{1}{2}$, with probability at least $1-CK\exp\{-c(\log m)^{2\varepsilon}\}$,
\begin{equation}
    (\theta^{(K)}_1-\theta^*_1)^2+(\theta^{(K)}_{2}-\theta^*_{2})^2\leq \frac{8G^2}{\gamma^2(K+1)}+C(\log m)^{-\frac{1}{2}+\varepsilon}.
\end{equation}
 \end{enumerate}
 Here $c,C>0$ depend only on $\theta_{\min},\theta_{\max},b$.
\end{thm}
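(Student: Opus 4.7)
The plan is to follow the two-step roadmap sketched in the introduction: first establish a uniform-in-$\vect{\theta}$ concentration of the stochastic gradient around its conditional mean, then combine this with the strong-convexity-like property of that conditional mean to drive a standard SGD recursion. Writing $g_k := g(\vect{\theta}^{(k-1)};\vect{X}_{\xi_k},\vect{y}_{\xi_k})$ and $G_k(\vect{\theta}) := \bbE[g(\vect{\theta};\vect{X}_{\xi_k},\vect{y}_{\xi_k})\mid \vect{X}_{\xi_k}]$, I would decompose $g_k = G_k(\vect{\theta}^{(k-1)}) + \Delta_k$ and then analyse the squared error via the standard expansion
\begin{equation*}
\|\vect{\theta}^{(k)} - \vect{\theta}^*\|_2^2 = \|\vect{\theta}^{(k-1)} - \vect{\theta}^*\|_2^2 - 2\alpha_k \langle g_k, \vect{\theta}^{(k-1)} - \vect{\theta}^*\rangle + \alpha_k^2 \|g_k\|_2^2,
\end{equation*}
restricting to the $\theta_2$ coordinate in Case~1 and to both coordinates in Case~2.

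For the statistical (noise) term, the difficulty is that $\vect{\theta}^{(k-1)}$ is adapted to past minibatches and therefore coupled with the current $\vect{X}_{\xi_k}$ under sampling without replacement. I would sidestep this by proving a \emph{uniform} inequality
\begin{equation*}
\sup_{\vect{\theta}\in [\theta_{\min},\theta_{\max}]^2}\bigl\|g(\vect{\theta};\vect{X}_{\xi_k},\vect{y}_{\xi_k}) - G_k(\vect{\theta})\bigr\|_2 \;\leq\; Cm^{-1/2+\varepsilon}
\end{equation*}
on an event of probability at least $1-C\exp\{-c m^{2\varepsilon}\}$, using three ingredients: (i) truncation of the Gaussian vector $\vect{y}_{\xi_k}$ at level $O(\sqrt{\log m})$ combined with the Hanson--Wright inequality applied to the quadratic forms $\vect{y}_{\xi}^\top \vect{K}_\xi^{-1}(\partial \vect{K}_\xi/\partial \theta_l)\vect{K}_\xi^{-1}\vect{y}_{\xi}$ appearing in \eqref{eq:stochastic-gradient}; (ii) a Lipschitz-in-$\vect{\theta}$ bound on $g(\cdot;\vect{X}_{\xi_k},\vect{y}_{\xi_k})$ based on $\theta_{\min}>0$, which controls $\|\vect{K}_\xi^{-1}\|_{\mathrm{op}}$; and (iii) an $\varepsilon$-net of the compact box $[\theta_{\min},\theta_{\max}]^2$ of polynomial cardinality. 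A union bound over the $K$ iterations yields the factor $K$ in the failure probability stated in the theorem.

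For the optimisation term I would invoke the strong-convexity-like property of $G_k$ from Lemma~\ref{lem:g_k_conv_property_abbrv}: in Case~1, $(G_k(\vect{\theta}))_2 (\theta_2-\theta_2^*)\geq \gamma(\theta_2-\theta_2^*)^2$ up to a residual that is negligible in $(\theta_1-\theta_1^*)^2$, while in Case~2 the analogous contractive bound holds for both coordinates after the rescaling $s_1(m)\asymp \log m$. This hinges on a careful analysis of the eigenvalues of the empirical kernel matrix: under exponential eigendecay of $\mathcal{K}$, $\vect{K}_{f,\xi}$ has $O(\log m)$ eigenvalues of order $m$ with the remainder exponentially small, so $m^{-1}\tr(\vect{K}_\xi^{-2}\vect{K}_{f,\xi})$ vanishes (explaining why the default normalisation $s_1(m)=m$ kills the signal in $\theta_1$) while $m^{-1}\tr(\vect{K}_\xi^{-2})$ stays bounded below, delivering $\gamma \asymp 1/\theta_{\max}^2$. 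Substituting the curvature bound and the uniform noise bound into the expansion above gives
\begin{equation*}
|\theta^{(k)}_2-\theta_2^*|^2 \leq (1-2\alpha_k\gamma)|\theta^{(k-1)}_2-\theta_2^*|^2 + 2\alpha_k \|\Delta_k\|_2\,|\theta^{(k-1)}_2-\theta_2^*| + \alpha_k^2 G^2,
\end{equation*}
and a standard induction on $k$ with $\alpha_k=\alpha_1/k$ and $\alpha_1\in[\tfrac{3}{2\gamma},\tfrac{2}{\gamma}]$ converts this to the stated $8G^2/(\gamma^2(K+1)) + Cm^{-1/2+\varepsilon}$ rate; Case~2 is handled identically, except that the effective sample size governing the $\theta_1$ component is $s_1(m)\asymp \log m$ rather than $m$, which rescales the statistical term to $C(\log m)^{-1/2+\varepsilon}$ and the tail to $\exp\{-c(\log m)^{2\varepsilon}\}$.

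The hard part will be the uniform concentration step: aligning the truncation level, Hanson--Wright tail, and net cardinality so that the logarithmic slack $m^{\varepsilon}$ cleanly absorbs all three contributions while still delivering the $m^{-1/2+\varepsilon}$ rate requires delicate tracking of constants. A secondary subtlety is Case~2, where the curvature constant $\gamma$ must be shown to stay bounded away from zero only when $\tau$ is chosen sufficiently large relative to the decay rate $b$ and $\theta_{\max}$; verifying this requires a quantitative lower bound on the sum of the top $\Theta(\log m)$ eigenvalues of $\vect{K}_{f,\xi}$ that survives the $s_1(m)^{-1}$ normalisation.
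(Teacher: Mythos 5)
Your overall architecture matches the paper's proof exactly: you decompose the stochastic gradient into its conditional expectation plus a fluctuation, establish (i) a uniform-in-$\vect{\theta}$ concentration bound of order $m^{-1/2+\varepsilon}$ for the fluctuation via Hanson--Wright plus a covering argument and (ii) a relaxed strong-convexity inequality for the conditional expectation via upper and lower bounds on $\sum_j \lambda_j^l(\theta_1\lambda_j+\theta_2)^{-2}$ derived from the exponential eigendecay (these are precisely Lemmas \ref{lem:statistical_err_full_grad_abbrv} and \ref{lem:g_k_conv_property_abbrv}), and then run the standard one-step expansion with $\alpha_k=\alpha_1/k$ and an induction giving $2\alpha_1^2G^2/(k+1)\leq 8G^2/(\gamma^2(k+1))$, with a union bound over the $K$ minibatches. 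Your eigenvalue picture ($O(\log m)$ eigenvalues of order $m$, hence $\frac{1}{m}\sum_j\lambda_j(\theta_1\lambda_j+\theta_2)^{-2}=O(\frac{\log m}{m})$ while $\frac{1}{m}\sum_j(\theta_1\lambda_j+\theta_2)^{-2}\gtrsim \theta_{\max}^{-2}$) and your treatment of Case 2 (rescaling by $s_1(m)=\tau\log m$, requiring $\tau$ large relative to $b$ and $\theta_{\max}$, and a quantitative $\Omega(\log m)$ lower bound on $\sum_j\lambda_j^2(\theta_1\lambda_j+\theta_2)^{-2}$) are exactly what the paper does.

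The one place you diverge is the implementation of the uniform concentration step, and it is worth flagging because your version is the weaker of the two. You propose truncating $\vect{y}_{\xi}$ and combining Hanson--Wright with a Lipschitz-in-$\vect{\theta}$ bound over a polynomially fine net. The paper instead whitens $\vect{y}_n$ to a standard Gaussian $\vect{z}_n=\vect{P}(\vect{K}_n^*)^{-1/2}\vect{y}_n$ (no truncation of the data), Taylor-expands the diagonal matrix $\vect{\Lambda}(\vect{\theta})$ to order $H=\log(1/x)$ around each net point, applies Hanson--Wright to each Taylor coefficient, and bounds the series remainder separately; the ``truncation'' in the paper's proof sketch refers to truncating this Taylor series, not the Gaussian vector. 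The payoff is that the net only needs mesh of order $1/H$, so its cardinality is polylogarithmic in the target accuracy. With your route, the naive Lipschitz constant of $g(\cdot;\vect{X}_\xi,\vect{y}_\xi)$ in the $\theta_1$ direction involves $\|\partial\vect{K}_\xi/\partial\theta_1\|_{\mathrm{op}}=\|\vect{K}_{f,\xi}\|_{\mathrm{op}}$, which is of order $m$, so the net must be polynomially fine in $m$ and the union bound picks up a $\mathrm{poly}(m)$ prefactor against the tail $\exp\{-cm^{2\varepsilon}\}$; since the theorem is asserted for all $0<\varepsilon<C\frac{\log\log m}{\log m}$ (so $m^{2\varepsilon}$ can be as small as a constant), that prefactor is not absorbed and your argument as stated would only recover the result for $\varepsilon\gtrsim \frac{\log\log m}{\log m}$. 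To close this you would either need to exploit cancellation in the Lipschitz constant of the \emph{difference} $g-G_k$ (which effectively reproduces the paper's Taylor-expansion device) or accept a restricted range of $\varepsilon$.
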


\begin{rmk}
Theorem \ref{thm:large_m_param_converg} suggests that the noise variance parameter $\theta^{(K)}_{2}$ is guaranteed to converge to the truth $\theta^*_{2}$, with the optimization error term $O(\frac{1}{K})$ and the statistical error term $O(m^{-\frac{1}{2}+\varepsilon})$ with high probability if $\varepsilon\log m$ is large, when the initial stepsize is appropriately chosen and $s_2(m)=m$. Furthermore, if we let $s_1(m)=\tau\log m$, then Algorithm \ref{alg:algorithm1} achieves convergence for both $\theta^{(K)}_1$ and $\theta^{(K)}_{2}$ with statistical error $O((\log m)^{-\frac{1}{2}+\varepsilon})$.
\end{rmk}

\begin{rmk}
The different rates of statistical errors for estimating $\theta^*_1$ and $\theta^*_2$ come from the different eigenvalue structures between $\vect{K}_{f,\xi}$ (the $m\times m$ block of $\vect{K}_{f,n}$ indexed by $\xi$) and $\vect{I}_m$. One may also note that the statistical errors depend on $m$ instead of $n$: this is due to the correlation among $\vect{y}_{\xi}$ from different minibatches $\xi$, conditioning on $\vect{X}$, which is different from the problems with independent samples.
\end{rmk}

\begin{rmk}
The choice $s_1(m)\asymp \log m$ is because
\begin{equation}
    s_1(m)g_1(\vect{\theta})=\tr\left[\vect{K}_{\xi}^{-1}(\vect{I}_m-\vect{y}_{\xi}\vect{y}_{\xi}^\top \vect{K}_{\xi}^{-1})\frac{\partial \vect{K}_{\xi}}{\partial \theta_1}\right]\asymp \log m,
\end{equation} 
and thus this choice of $s_1(m)$ ensures that $g_1(\vect{\theta})$ has the same scale as $g_2(\vect{\theta})$ (constant scale). 
\end{rmk}

\begin{rmk}
For the second case where $s_1(m)=\tau\log m$, we need $\tau>\frac{64\theta_{\max}^4}{b\theta_{\min}^4}$ and 
 \begin{equation*}
     \gamma=\min\left\{\frac{1}{32\tau b\theta_{\max}^2},\frac{1}{4\theta_{\max}^2}-\frac{2\theta_{\max}^2}{\tau b\theta_{\min}^4}\right\}.
 \end{equation*}
\end{rmk}
\begin{rmk}
The optimization error $O(\frac{1}{K})$ is credited to the structure of the GP loss function, which satisfies a relaxation of strong convexity (see Lemma \ref{lem:g_k_conv_property_abbrv} in Section \ref{sec:proofoverview}). $\gamma$ can be viewed a lower bound of an approximate ``curvature'' of the loss function, in the sense of a relaxed convexity. We will revisit this curvature term in Section \ref{sec:nearby_sampling} and illustrate the potential improvement nearby sampling brings to the curvature. 
\end{rmk}

Based on Theorem \ref{thm:large_m_param_converg}, we also derive the following convergence guarantee for the full gradient.
\begin{thm}[Convergence of full gradient, exponential eigendecay]\label{thm:large_m_grad_converg}
Consider the output $\vect{\theta}^{(K)}$ of Algorithm~\ref{alg:algorithm1}, the minibatch SGD algorithm with diminishing step sizes. Under Assumptions \ref{assump:param_bnd} to \ref{assump:eigen_decay_exp}, if $\frac{3}{2\gamma}\leq \alpha_1\leq \frac{2}{\gamma}$ for $\gamma=\frac{1}{4\theta_{\max}^2}$,  $m>C$, $s_2(m)=m$, then for any $0<\varepsilon<C\frac{\log\log m}{\log m}$, with probability at least $1-CK\exp\{-cm^{2\varepsilon}\}$,
\begin{equation}
    \|\nabla \ell(\vect{\theta}^{(K)})\|_2^2\leq C\left[\frac{G^2}{K+1}+m^{-\frac{1}{2}+\varepsilon}\right],
\end{equation}
holds, where $c,C>0$ depend only on $\theta_{\min},\theta_{\max},b$.
\end{thm}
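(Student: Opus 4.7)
The plan is to reduce the gradient bound to the parameter-convergence bound of Theorem~\ref{thm:large_m_param_converg} by showing that, on the high-probability event from that theorem, each coordinate of $\nabla\ell(\vect{\theta}^{(K)})$ is controlled by $(\theta_2^{(K)}-\theta_2^*)^2$ plus residuals that are negligible compared to the statistical error $m^{-1/2+\varepsilon}$. The key difficulty is that Theorem~\ref{thm:large_m_param_converg} (with only $s_2(m)=m$) gives no control on $\theta_1^{(K)}-\theta_1^*$, so $(\nabla\ell)_1$ must be shown small \emph{independently} of any convergence of $\theta_1$; this is exactly what the exponential eigendecay will buy us.

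First I would diagonalize $\vect{K}_{f,n}=U\Lambda U^\top$, so that $\vect{K}_n(\vect{\theta})=U\,\mathrm{diag}(d_j)\,U^\top$ with $d_j=\theta_1\lambda_j+\theta_2$, and set $\tilde{\vect{y}}_n=U^\top\vect{y}_n$. Each coordinate of the full gradient reads
\begin{equation*}
(\nabla\ell(\vect{\theta}))_l=\frac{1}{2n}\sum_{j=1}^n\frac{\partial d_j/\partial\theta_l}{d_j}\left(1-\frac{\tilde y_j^2}{d_j}\right),
\end{equation*}
and using $\tilde y_j^2\mid\vect{X}_n\sim d_j^*\,\chi_1^2$ with $d_j^*=\theta_1^*\lambda_j+\theta_2^*$, the conditional expectation decomposes as
\begin{equation*}
\bbE\bigl[(\nabla\ell(\vect{\theta}))_l\mid\vect{X}_n\bigr]=(\theta_1-\theta_1^*)\,A_l(\vect{X}_n)+(\theta_2-\theta_2^*)\,B_l(\vect{X}_n),
\end{equation*}
with $A_l,B_l$ explicit sums of $\lambda_j^a/d_j^b$. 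Under Assumption~\ref{assump:eigen_decay_exp}, $\lambda_j(\vect{K}_{f,n})\lesssim n e^{-bj}$ up to lower order, and routine computations give $A_1,B_1,A_2=O(\log n/n)$ while $B_2=\Theta(1)$. The decisive point is that the coefficient of $(\theta_1-\theta_1^*)$ vanishes with $n$ in every coordinate, so that contribution is negligible regardless of whether $\theta_1^{(K)}$ converges, while the $(\theta_2-\theta_2^*)$ piece in $(\nabla\ell)_2$ is squared-bounded directly by Theorem~\ref{thm:large_m_param_converg}.

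Next, the fluctuation $\nabla\ell(\vect{\theta})-\bbE[\nabla\ell(\vect{\theta})\mid\vect{X}_n]$ is a centered Gaussian quadratic form in $\vect{y}_n$, for which a Hanson--Wright bound gives pointwise deviations of order $\sqrt{\log n}/n$ with high probability. Because $\vect{\theta}^{(K)}$ depends on $\vect{y}_n$ through the minibatches $\vect{y}_{\xi_1},\dots,\vect{y}_{\xi_K}$, I would upgrade this pointwise bound to a uniform one over $[\theta_{\min},\theta_{\max}]^2$ via the same truncation and covering arguments used for the stochastic-gradient concentration in the proof of Theorem~\ref{thm:large_m_param_converg}. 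Combining the uniform fluctuation bound with the expected-gradient decomposition and the $O(1/K+m^{-1/2+\varepsilon})$ control of $(\theta_2^{(K)}-\theta_2^*)^2$ then delivers $\|\nabla\ell(\vect{\theta}^{(K)})\|_2^2\leq C\bigl[G^2/(K+1)+m^{-1/2+\varepsilon}\bigr]$. The main obstacle will be verifying that all $\log n/n$-scale residuals---from the $(\theta_1-\theta_1^*)A_l$ terms and from the Hanson--Wright fluctuation---are dominated by $m^{-1/2+\varepsilon}$ across the full range of $m$ allowed, and managing the $\vect{\theta}^{(K)}$--$\vect{y}_n$ dependence cleanly enough that no additional polynomial factors leak into the statistical error.
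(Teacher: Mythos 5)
Your proposal follows essentially the same route as the paper's proof: you decompose $\nabla\ell(\vect{\theta}^{(K)})$ into its conditional expectation given $\vect{X}_n$ — which, after diagonalization, has $(\theta_1-\theta_1^*)$-coefficients of order $\log n/n$ in both coordinates by Lemma~\ref{lem:eigen_ratio_bnds_abbrv}, so only the $(\theta_2-\theta_2^*)$-term controlled by Theorem~\ref{thm:large_m_param_converg} survives — plus a fluctuation bounded uniformly over $[\theta_{\min},\theta_{\max}]^{2}$ via Hanson--Wright with the truncation and covering arguments of Lemma~\ref{lem:statistical_err_full_grad_abbrv}. The one quantitative quibble is that for the second coordinate the Gaussian quadratic form has Frobenius norm of order $\sqrt{n}$, so the pointwise fluctuation is $O(\sqrt{\log n/n})$ rather than the $O(\sqrt{\log n}/n)$ you state; since this is still $n^{-\frac{1}{2}+\varepsilon}\leq m^{-\frac{1}{2}+\varepsilon}$, your conclusion is unaffected.
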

Theorem \ref{thm:large_m_grad_converg} implies that running SGD for sufficiently many iterations with large minibatch size leads to the convergence to a critical point of $\ell(\vect{\theta})$. As we will show in the proof sketch, the error bound for the full gradient is dominated by $(\theta^{(K)}_2-\theta^{*}_2)^2$, the estimation error of the noise variance, thus it scales the same as \eqref{eq:noise_var_convg_bnd}.

\subsection{Kernels with Polynomial Eigendecay}\label{sec:theory_polynomial}
Now we consider the kernels with polynomial eigendecay, which captures much stronger correlation than the kernels with exponential eigendecay, and thus broadens the applications of GP to a wider class of data sets. Due to this reason, it is of both practical and theoretical interest to investigate how SGD performs for this type of kernels. The polynomial eigendecay assumption is stated in detail as follows:
\begin{assumption}[Polynomial eigendecay]\label{assump:eigen_decay_poly}
Consider the kernel operator $\mathcal{K}: L^2(\mathbb{P})\rightarrow L^2(\mathbb{P})$ that satisfies $\mathcal{K}\phi(\cdot)= \int \phi(\vect{x})k_0(\cdot,\vect{x})d\mathbb{P}(\vect{x})$, where $\mathbb{P}$ is the probability measure of the input as defined in~\eqref{model}. The eigenvalues of $\mathcal{K}$ are $\{Cj^{-2b}\}_{j=0}^{\infty}$, where $b>\frac{\sqrt{21}+3}{4}$, and $C\leq 1$ are regarded as constants.
\end{assumption}
This assumption is satisfied by the Mat\'{e}rn kernels (see section 2.3 in \cite{bach2017equivalence}, \cite{kanagawa2018gaussian}), another important kernel function class widely used in GP:
\begin{equation}\label{eq:matern}
    k_{\alpha,h}(\vect{x},\vect{x}')=\frac{1}{2^{\alpha-1}\Gamma(\alpha)}\left(\frac{\sqrt{2\alpha\|\vect{x}-\vect{x}\|_2^2}}{h}\right)^{\alpha}B_{\alpha}\left(\frac{\sqrt{2\alpha\|\vect{x}-\vect{x}\|_2^2}}{h}\right),
\end{equation} 
where $B_{\alpha}(\cdot)$ is the modified Bessel function of the second kind of order $\alpha$, and larger $\alpha$ leads to faster decay rate $b>0$.
   \begin{thm}[Convergence of parameter iterates, polynomial eigendecay ]\label{thm:large_m_param_converg_poly}
 Consider the output $\vect{\theta}^{(K)}$ of Algorithm~\ref{alg:algorithm1}, the minibatch SGD algorithm with diminishing step sizes. Under Assumptions \ref{assump:param_bnd} to \ref{assump:sg_bnd} and Assumption \ref{assump:eigen_decay_poly}, when $m>C$ for some constant $C>0$, $s_2(m)=m$, $\frac{3}{2\gamma}\leq \alpha_1\leq \frac{2}{\gamma}$ where $\gamma=\frac{1}{8\theta_{\max}^2}$, then for any $\varepsilon\in(\max\{0,f_1(b)\},\frac{1}{2})$, with probability at least $1-CKm^{-f_2(b)\left[\varepsilon-f_1(b)\right]}-CK\exp\{-cm^{2\varepsilon}\}$,
      \begin{equation}\label{eq:noise_var_converg_bnd_poly}
          (\theta^{(K)}_2-\theta^*_2)^2\leq \frac{8G^2}{\gamma^2(K+1)}+Cm^{-\frac{1}{2}+\varepsilon}.
      \end{equation}
  Here $c,C>0$ depend only on $\theta_{\min},\theta_{\max},b$, and $f_1(b)=-\frac{2b^2-5b-3}{2b(2b-1)}$, $f_2(b)=\frac{4b(2b-1)}{4b+3}$.
\end{thm}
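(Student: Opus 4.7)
The plan is to follow the same two-term (optimization error + statistical error) template as in the exponential-eigendecay case (Theorem~\ref{thm:large_m_param_converg}, item~1), but with the concentration step adapted to the weaker tail behavior induced by polynomial eigendecay. First I would expand the standard one-step SGD identity for the noise variance coordinate,
\begin{equation*}
(\theta_2^{(k+1)}-\theta_2^*)^2 = (\theta_2^{(k)}-\theta_2^*)^2 - 2\alpha_{k+1}(\theta_2^{(k)}-\theta_2^*)\,g_2(\vect{\theta}^{(k)};\vect{X}_{\xi_{k+1}},\vect{y}_{\xi_{k+1}}) + \alpha_{k+1}^2 g_2(\vect{\theta}^{(k)};\vect{X}_{\xi_{k+1}},\vect{y}_{\xi_{k+1}})^2,
\end{equation*}
and then split $g_2(\vect{\theta}^{(k)})$ into its conditional mean $h_2(\vect{\theta}^{(k)}):=\mathbb{E}[g_2(\vect{\theta}^{(k)};\vect{X}_{\xi_{k+1}},\vect{y}_{\xi_{k+1}})\mid \vect{X}_{\xi_{k+1}}]$ plus a residual. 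The conditional mean is the object where the relaxed strong convexity of Lemma~\ref{lem:g_k_conv_property_abbrv} enters: it gives $(\theta_2^{(k)}-\theta_2^*)\,h_2(\vect{\theta}^{(k)}) \geq \gamma(\theta_2^{(k)}-\theta_2^*)^2$ (up to harmless lower-order pieces), with $\gamma=\tfrac{1}{8\theta_{\max}^2}$. The factor of two difference from the exponential case comes from the fact that under polynomial eigendecay the empirical kernel block $\vect{K}_{f,\xi}$ has a larger trace-to-operator-norm ratio, so the quadratic form lower bound inside the proof of the curvature lemma loses a factor of two.

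The second step is the uniform concentration of $g_2(\vect{\theta})-h_2(\vect{\theta})$ over $\vect{\theta}\in[\theta_{\min},\theta_{\max}]^2$, carried out by the same covering/truncation scheme used for Theorem~\ref{thm:large_m_param_converg}: cover the parameter box by an $O(1/m)$-net, truncate $\|\vect{y}_\xi\|_2$ by the square root of the operator norm of $\vect{K}_\xi$ times a slowly growing factor, and control the fluctuations of the resulting Gaussian quadratic form via Hanson--Wright. This is where the polynomial decay shows up: the tail probability of $\|\vect{K}_{f,\xi}\|_{\mathrm{op}}$ (equivalently of $\lambda_1(\vect{K}_{f,\xi})$) only admits a polynomial bound of the form $m^{-f_2(b)[\varepsilon-f_1(b)]}$, rather than the sub-exponential bound $\exp\{-cm^{2\varepsilon}\}$ available under Assumption~\ref{assump:eigen_decay_exp}; balancing the truncation level against this tail is what forces the restriction $b>\tfrac{\sqrt{21}+3}{4}$ (needed so $f_1(b)<\tfrac{1}{2}$ and $f_2(b)>0$) and produces the explicit expressions for $f_1,f_2$. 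Combining the truncated moment bound with the Gaussian quadratic form concentration, taking a union bound over the $O(m^{O(1)})$ net points, and absorbing the net spacing into the error, yields
\begin{equation*}
\sup_{\vect{\theta}\in[\theta_{\min},\theta_{\max}]^2}\bigl|g_2(\vect{\theta};\vect{X}_{\xi_{k+1}},\vect{y}_{\xi_{k+1}})-h_2(\vect{\theta};\vect{X}_{\xi_{k+1}},\vect{y}_{\xi_{k+1}})\bigr| \lesssim m^{-\tfrac{1}{4}+\tfrac{\varepsilon}{2}}
\end{equation*}
with the probability stated in the theorem. A union bound over $k=0,\dots,K-1$ gives the overall $1-CKm^{-f_2(b)[\varepsilon-f_1(b)]}-CK\exp\{-cm^{2\varepsilon}\}$.

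The third step is to plug the uniform concentration and the curvature lower bound into the one-step identity, use the choice $\alpha_{k+1}=\alpha_1/(k+1)$ with $\tfrac{3}{2\gamma}\leq \alpha_1\leq \tfrac{2}{\gamma}$ to obtain a recursion of the form
\begin{equation*}
(\theta_2^{(k+1)}-\theta_2^*)^2 \leq \Bigl(1-\tfrac{2\gamma \alpha_1}{k+1}\Bigr)(\theta_2^{(k)}-\theta_2^*)^2 + \frac{C\alpha_1^2 G^2}{(k+1)^2} + \frac{C\alpha_1 m^{-\tfrac{1}{2}+\varepsilon}}{k+1},
\end{equation*}
and then unroll it by a standard induction (the coefficient $2\gamma\alpha_1\geq 3$ is exactly what makes the $1/k$ term dominate the telescoping sum and produces the $8G^2/(\gamma^2(K+1))$ constant). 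This yields~\eqref{eq:noise_var_converg_bnd_poly}.

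The main obstacle is the concentration step: unlike the exponential-eigendecay case, one cannot get exponential tail bounds for $\|\vect{K}_{f,\xi}\|_{\mathrm{op}}$, so the truncation level must be chosen carefully as a function of $m$ and $b$, and the trade-off between the moment bound and the tail probability is delicate enough that it both dictates the lower bound on $b$ and determines the exact exponents $f_1(b)$ and $f_2(b)$. Everything else — the curvature argument, the recursion, and the step-size calibration — parallels the exponential case almost verbatim, with only the cosmetic change $\gamma=\tfrac{1}{8\theta_{\max}^2}$.
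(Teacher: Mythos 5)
Your overall skeleton (one-step identity, split of $g_2$ into conditional mean plus residual, relaxed strong convexity, uniform concentration, recursion unrolling with $2\gamma\alpha_1\geq 3$) matches the paper's proof, but you have misplaced the one piece of technical content that is genuinely new in the polynomial case. In the paper, the uniform concentration of $g_2(\vect{\theta})-g_2^*(\vect{\theta})$ (Lemma~\ref{lem:statistical_err_full_grad_abbrv}/\ref{lem:statistical_err_full_grad}) retains an \emph{exponential} tail $\exp\{-cm^{2\varepsilon}\}$ even under Assumption~\ref{assump:eigen_decay_poly}: for the $\theta_2$ coordinate with $s_2(m)=m$ the Hanson--Wright argument only needs the deterministic bound $t_i(m)\leq Cm$ on the relevant Frobenius-type quantity, so no polynomial tail of $\|\vect{K}_{f,\xi}\|_{\mathrm{op}}$ enters there. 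The polynomial probability term $CKm^{-f_2(b)[\varepsilon-f_1(b)]}$, the restriction $b>\frac{\sqrt{21}+3}{4}$, and the exponents $f_1,f_2$ all originate in the \emph{curvature} step instead: Lemma~\ref{lem:g_k_conv_property_poly} gives $(g^*(\vect{\theta}^{(k)}))_2(\theta_2^{(k)}-\theta_2^*)\geq \frac{\gamma}{2}(\theta_2^{(k)}-\theta_2^*)^2-\epsilon$ with $\epsilon=Cm^{-\frac{8b^2-12b-6-\alpha(4b+3)}{4b(2b-1)}}$ only with probability $1-Mm^{-\alpha}$, because the empirical eigenvalue bounds under polynomial eigendecay (Lemma~\ref{lem:eigen_ratio_bnds_poly}, built on the Chebyshev-type version of the bounds in Braun's work) only admit polynomial tails, and the lower bound on $\frac{1}{2m}\sum_j(\theta_1\lambda_j+\theta_2)^{-2}$ carries a correction term $M\max_l a_l\, m^{\frac{(2+\alpha)(4b+3)}{4b(2b-1)}-1}$ whose exponent must be negative — this is exactly what forces $8b^2-12b-6>0$, i.e.\ $b>\frac{\sqrt{21}+3}{4}$, and solving $\frac{8b^2-12b-6-\alpha(4b+3)}{4b(2b-1)}=\frac12-\varepsilon$ for $\alpha$ is what produces $\alpha=f_2(b)[\varepsilon-f_1(b)]$. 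If you tried to derive $f_1,f_2$ from a truncation of $\|\vect{K}_{f,\xi}\|_{\mathrm{op}}$ inside the covering/Hanson--Wright step as you propose, you would not recover these expressions, and you would be missing the essential new ingredient, namely the polynomial-eigendecay analogue of the eigenvalue-ratio bounds that feeds the curvature lemma.

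A secondary inconsistency: you state the uniform concentration at level $m^{-\frac14+\frac{\varepsilon}{2}}$ but then insert $m^{-\frac12+\varepsilon}/(k+1)$ into the recursion. Since the residual enters the one-step identity linearly as $2\alpha_{k+1}(\theta_2^{(k)}-\theta_2^*)\widehat e_k$, a concentration level of $m^{-\frac14+\frac{\varepsilon}{2}}$ would propagate to a statistical error of that same order unless you first absorb it via Young's inequality into the quadratic term; the paper's concentration is directly at level $m^{-\frac12+\varepsilon}$ and enters linearly, which is what yields the stated bound. As written, your plan does not close this step.
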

\begin{rmk}[Comparison with Theorem~\ref{thm:large_m_param_converg}: error bounds]
Compared with Theorem~\ref{thm:large_m_param_converg}, Theorem~\ref{thm:large_m_param_converg_poly} reflects the influence of stronger correlations (slower eigendecay of kernels) on the convergence of SGD. More specifically, $\varepsilon>f_1(b)$ is required in addition to $0<\varepsilon<\frac{1}{2}$. That is to say, when $0<f_1(b)<\frac{1}{2}$ ($\frac{\sqrt{21}+3}{4}<b<3$), the statistical error scales at least as $m^{-(\frac{1}{2}-f_1(b))}>m^{-\frac{1}{2}}$, which decreases as $b$ increases on $(\frac{\sqrt{21}+3}{4},3)$; while if $f_1(b)\leq 0$ ($b\geq 3$) which means the correlation is not too strong, the statistical error still scales roughly as $m^{-\frac{1}{2}}$, the same as the exponential eigendecay case. Therefore, the slower eigendecay of kernels (stronger correlation structure) may lead to a slower convergence of SGD; while the good news is that, for moderately fast polynomial eigendecay ($b\geq 3$) we still have the same rate as the exponential eigendecay case.
\end{rmk}
\begin{rmk}[Comparison with Theorem~\ref{thm:large_m_param_converg}: probability terms]
Another difference between Theorem~\ref{thm:large_m_param_converg} and Theorem~\ref{thm:large_m_param_converg_poly} lies in the probability term. When $b<3$ and thus $f_1(b)>0$, $CKm^{-f_2(b)[\varepsilon-f_1(b)]}$ always dominates $CK\exp\{-cm^{2\varepsilon}\}$ for $\varepsilon>f_1(b)$, which is a lower probability for the error bound in \eqref{eq:noise_var_converg_bnd_poly} compared to \eqref{eq:noise_var_convg_bnd}. This is also the price we need to pay when considering kernels with slower eigendecay.
\end{rmk}

To extend the theoretical results for kernels with exponential eigendecay to polynomial eigendecay, we develop novel upper and lower bounds for $\sum_{j=1}^n \lambda_{j}^l(\theta_1^{(k)}\lambda_{j}+\theta^{(k)}_2)^{-2}, l=0,1,2$ where $\lambda_j$ is the $j$th largest eigenvalue of $K_{f,\xi}$, see Lemma~\ref{lem:eigen_ratio_bnds_poly}. The proof for Lemma~\ref{lem:eigen_ratio_bnds_poly} requires careful analysis and different arguments from the proof for Lemma~\ref{lem:eigen_ratio_bnds} that is established for kernels with exponential eigendecay, although they are both based on error bounds for eigenvalues of empirical kernel matrices in \cite{braun2006accurate}.

We briefly explain the reason behind the different scalings of statistical errors between Theorem \ref{thm:large_m_param_converg_poly} and the first part of Theorem \ref{thm:large_m_param_converg} in the following. In fact, the statistical error term for $(\theta^{(K)}_2-\theta^*_2)^2$ is composed of two parts: one is $m^{-\frac{1}{2}+\varepsilon}$ for both types of kernels, another is caused by the fact that $\theta^{(K)}_1$ may not be estimated well, and the error induced by this fact depends on the eigendecay of the kernel, which scales as $m^{-(\frac{1}{2}-f_1(b))}$ for kernels with polynomial eigendecay. While for kernels with exponential eigendecay, this error term scales as $\frac{\log m}{m}$ and is dominated by the first error term $m^{-\frac{1}{2}+\varepsilon}$. 
\begin{rmk}
For kernels with polynomial eigendecay, we don't have convergence guarantee for $\theta^{(K)}_1$ (signal variance of the kernel with the slowest eigendecay). This is due to that it is very hard to derive matched upper and lower bounds for the stochastic gradient $s_1(m)g_1(\vect{\theta})=\tr\left[(\vect{K}_{\xi}^{-1}(\vect{I}_m-\vect{y}_{\xi}\vect{y}_{\xi}^\top \vect{K}_{\xi}^{-1})\frac{\partial \vect{K}_{\xi}}{\partial \theta_1}\right]$ (which scales as $\log m$ for exponential eigendecay), and thus we cannot specify the choice for $s_1(m)$ in order to make $g_1(\vect{\theta})$ scales similarly from $g_2(\vect{\theta})$. 
\end{rmk}

\begin{thm}[Convergence of full gradient, polynomial eigendecay]\label{thm:large_m_grad_converg_poly}
Consider the output $\vect{\theta}^{(K)}$ of Algorithm~\ref{alg:algorithm1}, the minibatch SGD algorithm with diminishing step sizes. Under the same conditions as Theorem \ref{thm:large_m_grad_converg_poly}, for any $\varepsilon\in(\max\{0,f_1(b)\},\frac{1}{2})$, with probability at least $1-CK\left(m^{-f_2(b)\left[\varepsilon-f_1(b)\right]}+\exp\{-cm^{2\varepsilon}\}\right)$,
\begin{equation}\label{eq:full_grad_converg_bnd_poly}
    \|\nabla \ell(\vect{\theta}^{(K)})\|_2^2\leq C\left[\frac{G^2}{K+1}+m^{-\frac{1}{2}+\varepsilon}\right],
\end{equation}
holds, where $c,C>0$ depend only on $\theta_{\min},\theta_{\max},b$, $f_1(b)$ and $f_2(b)$ are defined as in Theorem \ref{thm:large_m_param_converg_poly}.
\end{thm}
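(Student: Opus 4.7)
My strategy would mirror the proof of Theorem~\ref{thm:large_m_grad_converg} sketched right after its statement: reduce $\|\nabla\ell(\vect{\theta}^{(K)})\|_2^2$ to the already-controlled quantity $(\theta_2^{(K)}-\theta_2^*)^2$ from Theorem~\ref{thm:large_m_param_converg_poly}, while showing that the $(\theta_1^{(K)}-\theta_1^*)^2$ contribution---for which no parameter-convergence guarantee exists in the polynomial-eigendecay case---enters only through a negligible $O(1/n^2)$ remainder. The polynomial-eigendecay sum estimates in Lemma~\ref{lem:eigen_ratio_bnds_poly} are what make this reduction go through.

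Concretely, I would first expand the gradient in the eigenbasis of $\vect{K}_{f,n}$. Writing $\vect{y}_n=(\vect{K}_n^*)^{1/2}\vect{z}_n$ with $\vect{z}_n\sim\mathcal{N}(0,\vect{I}_n)$, letting $\lambda_1\geq\cdots\geq\lambda_n$ be the eigenvalues of $\vect{K}_{f,n}$, and abbreviating $\delta_l=\theta_l-\theta_l^*$ together with $S_l=\sum_{j=1}^n \lambda_j^l/(\theta_1\lambda_j+\theta_2)^2$ for $l\in\{0,1,2\}$, a direct calculation from~\eqref{eq:gradient} gives
\[
\bbE\bigl[(\nabla\ell(\vect{\theta}))_1\mid\vect{X}_n\bigr]=\frac{\delta_1 S_2+\delta_2 S_1}{2n},\qquad \bbE\bigl[(\nabla\ell(\vect{\theta}))_2\mid\vect{X}_n\bigr]=\frac{\delta_1 S_1+\delta_2 S_0}{2n}.
\]
By Cauchy--Schwarz, $\|\bbE[\nabla\ell(\vect{\theta}^{(K)})\mid\vect{X}_n]\|_2^2\lesssim n^{-2}\bigl[\delta_2^2(S_0^2+S_1^2)+\delta_1^2(S_1^2+S_2^2)\bigr]$. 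Invoking Lemma~\ref{lem:eigen_ratio_bnds_poly} with the threshold $b>(\sqrt{21}+3)/4>1/2$ gives $S_0=\Theta(n)$ and $S_1,S_2=O(1)$ uniformly in $\vect{\theta}^{(K)}\in[\theta_{\min},\theta_{\max}]^2$. Together with Assumption~\ref{assump:param_bnd} (so that $\delta_1$ is bounded), this turns the $\delta_1$-term into $O(1/n^2)$ and keeps the $\delta_2$-term at $O(\delta_2^2)$; the latter is then controlled by the already-proven bound~\eqref{eq:noise_var_converg_bnd_poly} from Theorem~\ref{thm:large_m_param_converg_poly}.

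Next I would treat the $\vect{y}_n$-fluctuation $(\nabla\ell(\vect{\theta}))_l-\bbE[(\nabla\ell(\vect{\theta}))_l\mid\vect{X}_n]$, a centered chi-squared-type quadratic form in $\vect{z}_n$ with matrix $\vect{A}_l(\vect{\theta}):=(\vect{K}_n^*)^{1/2}\vect{K}_n^{-1}(\partial\vect{K}_n/\partial\theta_l)\vect{K}_n^{-1}(\vect{K}_n^*)^{1/2}$. Because $\vect{\theta}^{(K)}$ depends on $\vect{y}_n$ through the SGD trajectory, Hanson--Wright cannot be invoked at a single $\vect{\theta}$; instead I would apply the same truncation-plus-$\epsilon$-net device already used in the proofs of Theorems~\ref{thm:large_m_param_converg}--\ref{thm:large_m_param_converg_poly}, using that $\vect{\theta}\mapsto \vect{A}_l(\vect{\theta})$ is Lipschitz in operator norm on the compact cube via $\|\vect{K}_n^{-1}\|_{\mathrm{op}}\leq 1/\theta_{\min}$. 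Because $\|\vect{A}_l(\vect{\theta})\|_{\mathrm{op}}$ is bounded and $\|\vect{A}_l(\vect{\theta})\|_F=O(\sqrt{n})$, the resulting uniform deviation is $O(n^{-1/2+\varepsilon})$---again dominated by the target $m^{-1/2+\varepsilon}$ rate since $m\leq n$---with a tail of the same form as in Theorem~\ref{thm:large_m_param_converg_poly}.

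The main obstacle I anticipate is this uniform Hanson--Wright step: making the resulting tail probability fit cleanly inside $1-CK(m^{-f_2(b)[\varepsilon-f_1(b)]}+\exp\{-cm^{2\varepsilon}\})$ with no extra probability budget, which requires careful bookkeeping on the net size and on the Lipschitz modulus of $\vect{A}_l(\vect{\theta})$. A secondary subtlety is checking the $S_1,S_2=O(1)$ claim uniformly in $\vect{\theta}$: for the borderline $S_2=\sum\lambda_j^2/D_j^2$ the bound depends nontrivially on how the ``knee'' index $j^\star\sim(\theta_1/\theta_2)^{1/(2b)}$ interacts with the polynomial tail, which is presumably one reason the paper pins $b$ strictly above $(\sqrt{21}+3)/4$ rather than merely above $1/2$.
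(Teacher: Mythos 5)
Your overall architecture is the same as the paper's: write $\nabla\ell^*(\vect{\theta}^{(K)})=\bbE[\nabla\ell(\vect{\theta}^{(K)})\mid\vect{X}_n]$ in the eigenbasis as in \eqref{eq:full_gradient_exp}, control the fluctuation $\nabla\ell-\nabla\ell^*$ uniformly over $\vect{\theta}$ via the truncation-plus-net Hanson--Wright argument (Lemma~\ref{lem:statistical_err_full_grad_abbrv}), and absorb the $\delta_2^2$ term using Theorem~\ref{thm:large_m_param_converg_poly}. However, there is a genuine error in the key quantitative step: the claim that Lemma~\ref{lem:eigen_ratio_bnds_poly} gives $S_1,S_2=O(1)$ is false. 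The $\lambda_j$ appearing in $S_l=\sum_{j}\lambda_j^l/(\theta_1\lambda_j+\theta_2)^2$ are eigenvalues of the \emph{empirical} kernel matrix $\vect{K}_{f,n}$, which concentrate around $n\lambda_j^*\asymp Cnj^{-2b}$, not around the population values $\lambda_j^*\asymp Cj^{-2b}$; your ``knee'' index is therefore $j^\star\sim n^{1/(2b)}$ rather than $O(1)$, and a direct count gives $S_2\asymp n^{1/(2b)}$ (and similarly for $S_1$). Lemma~\ref{lem:eigen_ratio_bnds_poly} accordingly only yields $S_1,S_2\leq C\,n^{(2+\alpha)(4b+3)/(4b(2b-1))}$, a genuinely positive power of $n$, on an event of probability $1-Mn^{-\alpha}$. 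Consequently the $\delta_1$-contribution to $\|\nabla\ell^*(\vect{\theta}^{(K)})\|_2^2$ is not a negligible $O(1/n^2)$ remainder but scales as $n^{-2(8b^2-12b-6-\alpha(4b+3))/(4b(2b-1))}$, which decays arbitrarily slowly as $b\downarrow(\sqrt{21}+3)/4$.

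This matters beyond bookkeeping: the trade-off between the exponent of this term and the probability $1-Mn^{-\alpha}$ is precisely what produces $f_1(b)$, $f_2(b)$, the restriction $\varepsilon>f_1(b)$, and the extra $m^{-f_2(b)[\varepsilon-f_1(b)]}$ probability term in the theorem statement. Under your (incorrect) $O(1)$ accounting, the polynomial-eigendecay theorem would collapse to the same form as the exponential one, contradicting the statement you are asked to prove. The paper's proof instead sets $\alpha$ so that the resulting exponent equals $\tfrac12-\varepsilon$ (whence $\alpha=f_2(b)[\varepsilon-f_1(b)]$), bounds $|(\nabla\ell^*)_1|\leq Cn^{-(8b^2-12b-6-\alpha(4b+3))/(4b(2b-1))}$ and $|(\nabla\ell^*)_2|\leq C(|\theta_2^{(K)}-\theta_2^*|+n^{-(\cdots)})$, and then uses $n\geq m$ to fold everything into $m^{-\frac12+\varepsilon}$ with the stated probability. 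Your fluctuation step and the reduction to $(\theta_2^{(K)}-\theta_2^*)^2$ are fine; you need to redo the $S_1,S_2$ estimates with the correct $n$-scaling of the empirical eigenvalues and carry the associated $n^{-\alpha}$ probability through the union bound.
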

As mentioned after Theorem \ref{thm:large_m_grad_converg}, the bound \eqref{eq:full_grad_converg_bnd_poly} for the full gradient scales the same as the bound \eqref{eq:noise_var_converg_bnd_poly} for $(\theta^{(K)}_{2}-\theta^*_{2})^2$.

\subsection{Effect of Nearby Sampling}\label{sec:nearby_sampling}
Although our theoretical guarantees are all derived for uniform sampling, some empirical evidence suggests that sampling nearby points for each minibatch can lead to lower errors for learning the noise variance $\theta_2^*=\sigma_{\epsilon}^2$ and sometimes improved prediction performance (some comparisons are provided in Section \ref{sec:case_studies}). 
In this section, we present some theoretical insights to understand why and how nearby sampling helps, in terms of learning $\theta_2^*$. Note that we focus on $\theta_2^*$ instead of $\theta_1^*$, since $\theta^{(k)}_{2}-\theta_{2}^*$ dominates the convergence of the upper bound for $\|\nabla \ell(\vect{\theta}^{(k)}\|_2$ at the $k$th iteration.

In the following, we investigate the following hypothesis: {\em the approximate ``curvature'' for $\theta_2^{(k)}$ is improved when the points within each minibatch are closer to each other, so that nearby sampling leads to larger curvature and hence faster convergence.}
More specifically, some calculation shows that at the $(k+1)$th iteration, the ``approximate'' curvature term w.r.t. the noise variance is
\begin{equation}\label{eq:appr_curv}
    \gamma(\vect{\theta}^{(k)}):=\frac{\partial \mathbb{E}(g_{2}(\vect{\theta}^{(k)};\vect{X}_{\xi_{k+1}},\vect{y}_{\xi_{k+1}})|\vect{X}_{\xi_{k+1}})}{\partial \theta_{2}}=\frac{1}{2m}\sum_{j=1}^m(\theta^{(k)}_1\lambda_j^{(k)}+\theta^{(k)}_2)^{-2},
\end{equation}
where $\lambda_{j}^{(k)}$ is the $j$th largest eigenvalue of $\vect{K}_{f,\xi_{k+1}}$. 
As revealed in our proof of Theorem \ref{thm:large_m_param_converg}, \ref{thm:large_m_grad_converg_poly}, our convergence rate results critically depends on a lower bound for~\eqref{eq:appr_curv}. 

\begin{figure}[!t]
    \centering
    \includegraphics[width=0.6\textwidth]{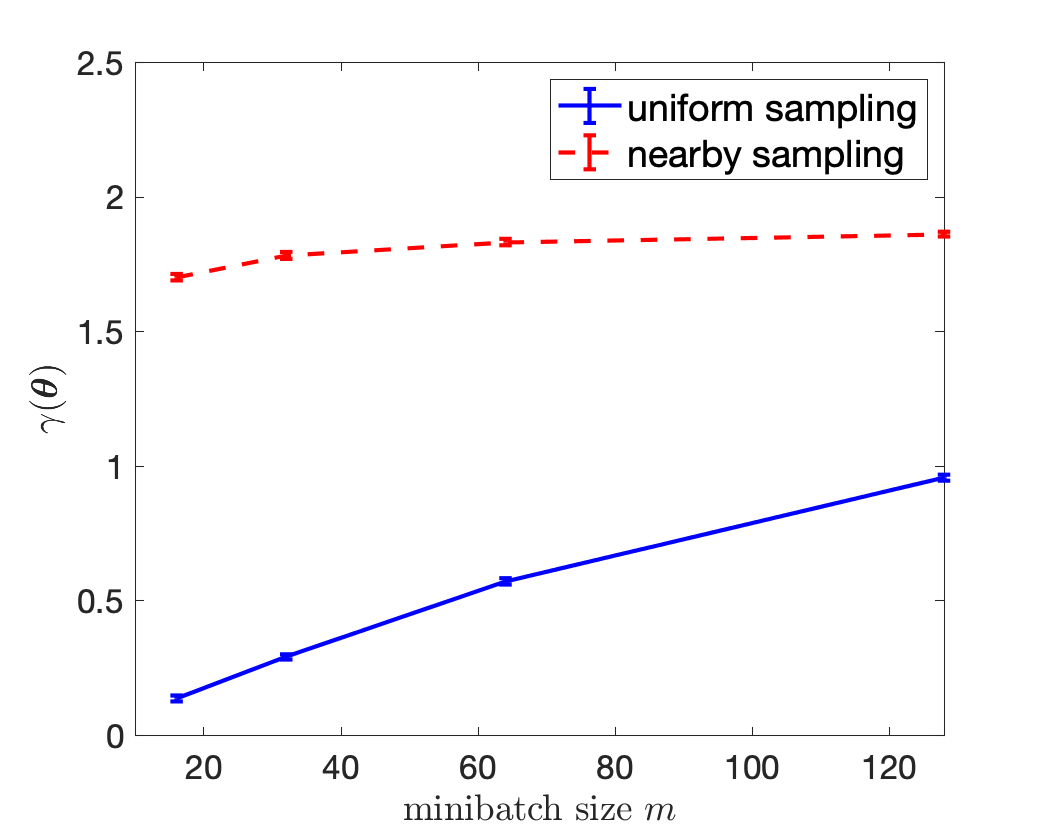}
    \caption{\small Curvature $\gamma(\vect{\theta})$, defined in~\eqref{eq:appr_curv} under different minibatch sizes $m$ and sampling schemes. For each minibatch size $m$ and sampling scheme, the mean value of $\gamma(\vect{\theta})$ over $50$ replicates is taken and the standard deviation is marked as the error bar.}
    \label{fig:curv_unif_nn}
\end{figure}
\paragraph{Numerical evidence:} Figure~\ref{fig:curv_unif_nn} demonstrates an example of the ``approximate'' curvature term \eqref{eq:appr_curv} with various minibatch sizes $m$ under the uniform sampling and nearby sampling schemes, where nearby sampling leads to larger curvatures. 

The detailed numerical experiments for generating Figure~\ref{fig:curv_unif_nn} is as follows. We randomly generate a full data pool including $n=2048$ data points: $x_1,\dots,x_n\overset{\mathrm{i.i.d.}}{\sim}\mathcal{N}(0,10^2)$. For each minibatch size $m$, we perform uniform sampling and nearby sampling for $50$ replicates; for uniform sampling the minibatch $\xi$ of size $m$ is sampled from $1,\dots,n$ uniformly at random; for nearby sampling, the first index $\xi_1$ in the minibatch $\xi$ is sampled uniformly at random from $1,\dots,n$, and the rest $m-1$ indices correspond to the nearest neighbors of $\vect{x}_{\xi_1}$. Then for each replicate, we calculate $\gamma(\vect{\theta})=\frac{1}{2m}\sum_{j=1}^m(\theta_1\lambda_j+\theta_2)^{-2}$, where $\vect{\theta}=(4,1)^\top$ and $\lambda_j$ is the $j$th eigenvalue of $\vect{K}_{f,\xi}$, the kernel matrix formed by $\vect{x}_i,i\in \xi$. The kernel function $k_0(\cdot)$ is set as the RBF kernel with lengthscale $l=0.5$. We then take the mean of $\gamma(\vect{\theta})$ over the $50$ replicates.

\paragraph{Theoretical insights:} Now we provide more theoretical insights into the influence of nearby sampling upon $\gamma(\vect{\theta})$. For simplicity of analysis, we fix our focus on the $D=1$ case, and let the kernel $k_0$ be the RBF kernel: $k_0(x,x')=\exp\{-(x-x')^2/(2l^2)\}$, $x_1,\dots,x_n\overset{\text{i.i.d.}}{\sim}\mathcal{N}(0,\sigma^2)$. First note that $\frac{\lambda_{j}^{(k)}}{m}$ converges to $\lambda_{j}^*$ (the $j$th largest eigenvalue of kernel $k_0(\cdot,\cdot)$) under uniform sampling, when minibatch size $m$ tends to $\infty$. Define $\widetilde{\gamma}$ by substituting $\lambda_{j}^{(k)}$ with $m\lambda_{j}^*$ ($\lambda_j^*$ is the $j$th eigenvalue of kernel $k$) in the definition \eqref{eq:appr_curv} of $\gamma^{(k)}$:
\begin{equation}
    \widetilde{\gamma}(\vect{\theta})=\frac{1}{m}\sum_{j=1}^m\left(\theta_1\lambda_{j}^{*}m+\theta_{2}\right)^{-2},
\end{equation}
then $\widetilde{\gamma}(\vect{\theta}^{(k)})$ is a reasonable approximation for $\gamma^{(k)}$ when $m$ is large enough under uniform sampling. 
\textit{If we increase the length scale $l$ under uniform sampling, then it is equivalent to decreasing the distance between points, and thus can serve as an approximation to nearby sampling.}

The following lemma illustrates how the length scale $l$ influences $\widetilde{\gamma}(\vect{\theta})$:
\begin{lemma}\label{lem:curv_lengthscale}
For any $l_0>0$, there exists $m_0>0$ depending on $\theta_1,\theta_2,\sigma,l_0$ such that as long as $m>m_0$, $\widetilde{\gamma}(\vect{\theta})$ is an increasing function of $l\geq l_0$.
\end{lemma}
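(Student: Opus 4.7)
The plan is to use the closed-form Mercer eigenvalues of the RBF kernel with a Gaussian input measure, differentiate $\widetilde{\gamma}$ in $l$ directly, and show the derivative is positive by an Abel-summation argument combined with a uniform truncation bound.

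First I would invoke the Zhu--Williams formula (Section 4.3.1 of Rasmussen \& Williams). Setting $a = 1/(4\sigma^2)$, $b = 1/(2l^2)$, $c = \sqrt{a^2 + 2ab}$, $A = a + b + c$, and $B = b/A$, the eigenvalues of $\mathcal{K}$ are $\lambda_j^* = \sqrt{2a/A}\,B^j$ for $j \geq 0$. The identity $(a+c)^2 = 2aA$ (immediate from $c^2 = a^2 + 2ab$) simplifies $\sqrt{2a/A}$ to $1 - B$, so the sequence is geometric: $\lambda_j^* = (1-B)B^j$, with partial sums $\sum_{i=0}^j \lambda_i^* = 1 - B^{j+1}$. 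A direct computation using $\partial c/\partial b = a/c$ gives $\partial B/\partial b = (aA - bc)/(A^2 c)$, and the factorization $a^2(a+b)^2 - b^2 c^2 = a^4$ shows this is positive; since $\partial b/\partial l < 0$, one obtains $\partial B/\partial l < 0$.

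Writing $w_j := (\theta_1 m\lambda_j^* + \theta_2)^{-3}$ and $a_j := \partial\lambda_j^*/\partial l$, the chain rule yields
\begin{equation*}
\frac{\partial \widetilde{\gamma}}{\partial l} = -2\theta_1 \sum_{j=0}^{m-1} w_j a_j.
\end{equation*}
Three structural facts follow from the first step: (i) the cumulative sums $S_j := \sum_{i=0}^j a_i = -(j+1) B^j\,\partial B/\partial l$ are strictly positive; (ii) $\sum_{j=0}^\infty a_j = 0$ because $\sum_j \lambda_j^* = \mathbb{E}\,k_0(x, x) = 1$ for every $l$; (iii) $w_j$ is strictly increasing in $j$, since $\lambda_j^*$ is strictly decreasing. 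Abel summation then delivers
\begin{equation*}
\sum_{j=0}^{m-1} w_j a_j = w_{m-1} S_{m-1} - \sum_{j=0}^{m-2}(w_{j+1}-w_j)\,S_j,
\end{equation*}
where the second sum is strictly positive by (i) and (iii), while the boundary term satisfies $w_{m-1}S_{m-1} \leq \theta_2^{-3}\,m\,B^{m-1}\,|\partial B/\partial l|$.

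The final step is to establish, uniformly for $l \in [l_0, \infty)$, that this strictly-positive Abel sum dominates the boundary term once $m > m_0$. Since $B$ is strictly decreasing in $l$, one has $B(l) \leq B(l_0) < 1$ for $l \geq l_0$, so the boundary term is uniformly $O(m B(l_0)^{m-1})$ and decays exponentially in $m$. A pointwise lower bound on the Abel sum is supplied by its $j = 0$ term $(w_1 - w_0)\,S_0$, which is of polynomial order $m^{-3}$ times a prefactor continuous and strictly positive in $l$; splitting $[l_0, \infty)$ into a compact part and a tail then yields a uniform comparison, since on the tail $l \to \infty$ the boundary term decays super-polynomially in $l$ (like $l^{-2m}$) while the main sum decays only polynomially (like $l^{-3}$). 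Choosing $m_0 = m_0(\theta_1, \theta_2, \sigma, l_0)$ to be the maximum of the thresholds required in the two regimes yields $\partial\widetilde{\gamma}/\partial l > 0$ for every $m > m_0$ and every $l \geq l_0$. The main obstacle is precisely this uniform-in-$l$ dominance: both the Abel-sum lower bound and the boundary term degenerate as $l \to \infty$ (driven by $\partial B/\partial l \to 0$ and $B \to 0$, respectively), and one must verify carefully that a single $m_0$ depending only on $\theta_1, \theta_2, \sigma, l_0$ suffices across the entire range.
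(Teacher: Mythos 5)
Your proposal is essentially the paper's own argument: both exploit the geometric form $\lambda_j^*=(1-B)B^j$ of the RBF/Gaussian eigenvalues and rewrite $\partial\widetilde{\gamma}/\partial l$ by summation by parts as a strictly positive sum of weight differences minus a boundary term of size $O(mB^{m-1})$, which is then beaten for large $m$; the only substantive difference is that you lower-bound the positive sum by its $j=0$ term (order $m^{-3}$, which suffices against the exponentially small boundary term and makes the uniformity over $l\ge l_0$ immediate via $B\le B(l_0)<1$), whereas the paper differentiates in $\beta$ and picks the term at $j\approx \log m/\log(\beta^{-1})$ to get a constant-order lower bound. One small error to fix: your expression $\partial B/\partial b=(aA-bc)/(A^2c)$ and the ``factorization'' $a^2(a+b)^2-b^2c^2=a^4$ are both wrong (e.g.\ $aA-bc<0$ when $b\gg a$); the correct computation gives $\partial B/\partial b = \bigl(Ac-b(a+c)\bigr)/(A^2c)=a/(Ac)>0$, or more simply one can read $\partial B/\partial l<0$ off the explicit formula $B=2\sigma^2/(2\sigma^2+l^2+l\sqrt{l^2+4\sigma^2})$, whose denominator is increasing in $l$ --- with that repair the proof goes through.
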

 Lemma \ref{lem:curv_lengthscale} suggests that, for large enough minibatches, larger length scale leads to faster convergence for $\theta^{(k)}_2$ to $\theta^*_2=\sigma_{\epsilon}^2$. Therefore, our hypothesis that ``nearby sampling may improve the curvature'' is plausible. More numerical support will be provided in Section \ref{sec:numeric}.
\subsection{Extension to Summation of Multiple Kernels}\label{sec:sum-kernels}
So far we have assumed the kernel function $k_0(\cdot,\cdot)$ to be known. However, sometimes there might be several choices of potential kernels and it is desired to learn which kernel is the most appropriate from the data instead of manually picking one kernel. One possible approach is to let the covariance function $k(\cdot,\cdot)$ in \eqref{model} be a linear combination of all potential $M$ kernels:
\begin{equation}
    k(\cdot,\cdot)=\sum_{l=1}^M \sigma_{f,l}^2k_l(\cdot,\cdot),
\end{equation} 
and then learn the signal variances $\sigma_{f,l}^2$ associated with the $M>1$ kernels~\citep{rasmussen2003gaussian}. 
The kernel selection problem then translates to how well we can learn the signal variance parameters $\sigma_{f,l}^2$, $l=1,\dots M$. 

Under this extension of the classical GP model, let the hyperparameter $$\vect{\theta}^*=(\sigma_{f,1}^2,\dots,\sigma_{f,M}^2,\sigma_{\epsilon}^2)^\top\in\bbR^{M+1}.$$ We can still write out the log-likelihood loss $\ell(\vect{\theta})$ as in \eqref{eq:objective}, while the only difference lies in the formulation of $\vect{K}_n(\vect{\theta})$: $$\vect{K}_n(\vect{\theta})=\sum_{l=1}^M\theta_l\vect{K}_{f,n}^{(l)}+\theta_{M+1}\vect{I}_n,$$
where $\vect{K}_{f,n}^{(l)}$ is the kernel matrix of $k_l(\cdot,\cdot)$ evaluated at $\vect{X}_n$, i.e., $(\vect{K}_{f,n}^{(l)})_{ij}=k_l(\vect{x}_i,\vect{x}_j)$.
Then it is straightforward to extend Algorithm~\ref{alg:algorithm1} to this setting for learning $\vect{\theta}^*$. 

Similar to Theorem~\ref{thm:large_m_param_converg} to Theorem~\ref{thm:large_m_grad_converg_poly}, under some additional conditions on the kernel matrices, we also have convergence guarantees for $\sigma_{f,1}^2$, $\sigma_{\epsilon}^2$, and the full gradient when $k_l(\cdot,\cdot), 1\leq l\leq M$ have exponential eigendecay; for $\sigma_{\epsilon}^2$ and the full gradient when $k_l(\cdot,\cdot), 1\leq l\leq M$ have polynomial eigendecay. The detailed theoretical results for this setting are included in Appendix~\ref{append:sum-kernels}.
\section{Proof Sketch}\label{sec:proofoverview}
In this section, we present the proof sketch for the first part of Theorem \ref{thm:large_m_param_converg} and Theorem \ref{thm:large_m_grad_converg} (kernels with exponential eigendecay). The proof of the second part in Theorem \ref{thm:large_m_param_converg}, Theorem \ref{thm:large_m_param_converg_poly} and Theorem \ref{thm:large_m_grad_converg_poly} follows similar ideas although requiring more careful analysis. With a bit abuse of notation, we will omit $g(\vect{\theta}^{(k)};\vect{X}_{\xi_{k+1}},\vect{y}_{\xi_{k+1}})$ to $g(\vect{\theta}^{(k)})$ and denote its conditional expectation $\bbE(g(\vect{\theta}^{(k)})|\vect{X}_{\xi_{k+1}})$ by $g^*(\vect{\theta}^{(k)})$. Similarly we define $\nabla \ell^*(\vect{\theta}^{(k)})=\bbE(\nabla \ell(\vect{\theta}^{(k)})|\vect{X}_n)$.

Due to the bias in the stochastic gradient, we take the followings steps instead of directly drawing the connection between $g(\vect{\theta}^{(k)})$ and $\nabla \ell(\vect{\theta}^{(k)})$:
\begin{itemize}[leftmargin=*]\vspace{-2mm}
    \item For proving the first part of Theorem \ref{thm:large_m_param_converg}:
    \begin{itemize}
        \item We first show that the conditional expectation $g^*(\vect{\theta}^{(k)})$ of the stochastic gradient has a property similar to strong convexity, see Lemma \ref{lem:g_k_conv_property_abbrv}.
        \item We then prove that $g(\vect{\theta})$ is close to its conditional expectation $g^*(\vect{\theta})$ uniformly over all possible $\vect{\theta}$, and thus $g(\vect{\theta}^{(k)})$ is close to $g^*(\vect{\theta}^{(k)})$. Applying Lemma \ref{lem:statistical_err_full_grad_abbrv} to each minibatch leads to the desired result.
    \end{itemize}
    These two steps lead to the $O(\frac{1}{K})$ optimization error rate for $(\theta^{(k)}_2-\theta^*_2)^2$, and a statistical error rate depending on $m$, as shown in Theorem \ref{thm:large_m_param_converg}.
    \item For proving Theorem \ref{thm:large_m_grad_converg}:
    \begin{itemize}
        \item Lemma \ref{lem:statistical_err_full_grad_abbrv} suggests that $\nabla \ell(\vect{\theta}^{(k)})$ is close to $\nabla \ell^*(\vect{\theta}^{(k)})$\\
        \item The eigendecay of kernel matrices (see Lemma \ref{lem:eigen_ratio_bnds_abbrv}) ensures that $\|\nabla \ell^*(\vect{\theta}^{(k)})\|_2$ is controlled by  $(\theta^{(k)}_2-\theta^*_2)^2$, which is upper bounded in Theorem \ref{thm:large_m_param_converg}.
    \end{itemize}
    These steps above provide us with the same error bound of $\|\nabla \ell^*(\vect{\theta}^{(k)})\|_2$ from that of $(\theta^{(k)}_2-\theta^*_2)^2$ in Theorem \ref{thm:large_m_param_converg}.
\end{itemize}
\subsection{Key Lemmas}
The following two lemmas are the key building blocks of the proof: one shows that $g^*(\vect{\theta}^{(k)})$ enjoys a property similar to strong convexity, the other establishes a uniform bound for the statistical error $\nabla \ell(\vect{\theta})-\nabla \ell^*(\vect{\theta})$ over $\vect{\theta}\in [\theta_{\min},\theta_{\max}]^{2}$, and thus also bounds for $g(\vect{\theta}^{(k)})-g^*(\vect{\theta}^{(k)})$;
\begin{lem}[Strongly convex-like property of $g^*(\vect{\theta}^{(k)})$]\label{lem:g_k_conv_property_abbrv}
Under Assumptions \ref{assump:param_bnd} to \ref{assump:eigen_decay_exp}, if $s_2(m)=m$, $m>C$, then with probability at least $1-3Km^{-c}$, the following claim holds true for $0\leq k< K$:
       \begin{equation}\label{eq:strong_conv_abbrv}
           (\theta^{(k)}_2-\theta^*_2)(g^*(\vect{\theta}^{(k)}))_2
           \geq \frac{1}{8\theta_{\max}^2}(\theta^{(k)}_2-\theta^*_2)^2-\frac{C\log m}{m},
           \end{equation}
   Here $C>0$ depends only on $\theta_{\min},\theta_{\max},b$.
\end{lem}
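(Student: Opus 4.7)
The plan is to compute $g_2^*(\vect{\theta}^{(k)})$ in closed form using the conditional Gaussianity of $\vect{y}_{\xi_{k+1}}$ given $\vect{X}_{\xi_{k+1}}$, split the product $(\theta_2^{(k)}-\theta_2^*)\,g_2^*(\vect{\theta}^{(k)})$ into a ``curvature'' piece quadratic in $\theta_2^{(k)}-\theta_2^*$ and a ``cross'' piece bilinear in $\theta_2^{(k)}-\theta_2^*$ and $\theta_1^{(k)}-\theta_1^*$, and then control each piece via the spectral bounds of Lemma~\ref{lem:eigen_ratio_bnds_abbrv}.

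First I would unpack~\eqref{eq:stochastic-gradient}. With $s_2(m)=m$ and $\partial\vect{K}_{\xi}/\partial\theta_2=\vect{I}_m$,
$$g_2(\vect{\theta}) \;=\; \frac{1}{2m}\Bigl[\tr\bigl(\vect{K}_{\xi}^{-1}\bigr) - \vect{y}_{\xi}^\top\vect{K}_{\xi}^{-2}\vect{y}_{\xi}\Bigr].$$
Because $\vect{y}_{\xi}\mid\vect{X}_{\xi}\sim\mathcal{N}(\vect{0},\vect{K}_{\xi}^{*})$, taking the conditional expectation yields
$$g_2^*(\vect{\theta}) \;=\; \frac{1}{2m}\tr\!\bigl[\vect{K}_{\xi}^{-2}(\vect{K}_{\xi}-\vect{K}_{\xi}^{*})\bigr] \;=\; \frac{\theta_1-\theta_1^*}{2m}\tr\!\bigl(\vect{K}_{\xi}^{-2}\vect{K}_{f,\xi}\bigr) + \frac{\theta_2-\theta_2^*}{2m}\tr\!\bigl(\vect{K}_{\xi}^{-2}\bigr),$$
using $\vect{K}_{\xi}-\vect{K}_{\xi}^{*}=(\theta_1-\theta_1^*)\vect{K}_{f,\xi}+(\theta_2-\theta_2^*)\vect{I}_m$. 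Evaluating at $\vect{\theta}=\vect{\theta}^{(k)}$ and multiplying by $\theta_2^{(k)}-\theta_2^*$ produces the decomposition
$$(\theta_2^{(k)}-\theta_2^*)\bigl(g_2^*(\vect{\theta}^{(k)})\bigr) \;=\; \underbrace{\frac{(\theta_2^{(k)}-\theta_2^*)^2}{2m}\tr\!\bigl(\vect{K}_{\xi}^{-2}\bigr)}_{\text{curvature}} \;+\; \underbrace{\frac{(\theta_2^{(k)}-\theta_2^*)(\theta_1^{(k)}-\theta_1^*)}{2m}\tr\!\bigl(\vect{K}_{\xi}^{-2}\vect{K}_{f,\xi}\bigr)}_{\text{cross}}.$$

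Next, I would simultaneously diagonalize $\vect{K}_{\xi}$ and $\vect{K}_{f,\xi}$, rewriting the two traces as $\sum_{j=1}^m(\theta_1^{(k)}\lambda_j+\theta_2^{(k)})^{-2}$ and $\sum_{j=1}^m\lambda_j(\theta_1^{(k)}\lambda_j+\theta_2^{(k)})^{-2}$, where $\lambda_j$ is the $j$th eigenvalue of $\vect{K}_{f,\xi_{k+1}}$. Under Assumption~\ref{assump:eigen_decay_exp}, Lemma~\ref{lem:eigen_ratio_bnds_abbrv} supplies, on an event of probability at least $1-3m^{-c}$ for the minibatch $\xi_{k+1}$, the matched bounds
$$\frac{1}{m}\sum_{j=1}^m\frac{1}{(\theta_1^{(k)}\lambda_j+\theta_2^{(k)})^2}\;\geq\;\frac{1}{4\theta_{\max}^2}-\frac{C\log m}{m},\qquad \frac{1}{m}\sum_{j=1}^m\frac{\lambda_j}{(\theta_1^{(k)}\lambda_j+\theta_2^{(k)})^2}\;\leq\;\frac{C\log m}{m}.$$
The intuition I would rely on is: Braun's empirical eigenvalue concentration together with the exponential decay of the operator spectrum forces all but $O(\log m)$ of the $\lambda_j$ to satisfy $\lambda_j\leq 1$, so each such $j$ contributes at least $(2\theta_{\max})^{-2}$ to the first sum; for the second sum, the map $\lambda\mapsto\lambda/(\theta_1^{(k)}\lambda+\theta_2^{(k)})^{2}$ is uniformly bounded by $1/(4\theta_{\min}^2)$, the $O(\log m)$ ``large'' eigenvalues contribute $O(\log m)$ in aggregate, and the geometrically shrinking tail contributes only $O(1)$.

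Plugging these into the decomposition, and using $|\theta_j^{(k)}-\theta_j^*|\leq 2\theta_{\max}$ from Assumption~\ref{assump:param_bnd} to absorb both the cross term and the $O(\log m/m)$ slack in the curvature lower bound into a single $C\log m/m$ remainder, gives~\eqref{eq:strong_conv_abbrv} with constant $\tfrac{1}{8\theta_{\max}^2}$ on a single iteration. A union bound over $k=0,\dots,K-1$ of the good spectral events for each minibatch $\vect{X}_{\xi_{k+1}}$ yields the claimed probability $1-3Km^{-c}$. The main obstacle is not the algebraic decomposition but obtaining the \emph{matched} upper and lower spectral estimates with the sharp $O(\log m/m)$ slack — this is precisely what Lemma~\ref{lem:eigen_ratio_bnds_abbrv} delivers by leveraging exponential eigendecay to separate the short head of $O(\log m)$ ``large'' eigenvalues from the geometrically decaying tail; once those are in hand, the rest of the argument is routine algebra.
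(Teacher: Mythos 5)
Your proposal is correct and follows essentially the same route as the paper: the authors likewise compute $g_2^*(\vect{\theta}^{(k)})=\frac{1}{2m}\tr[\vect{K}_{\xi}^{-2}(\vect{K}_{\xi}-\vect{K}_{\xi}^{*})]$, diagonalize to get the two eigenvalue sums, lower-bound the curvature sum by $\frac{m-C\log m}{4\theta_{\max}^2}$ and upper-bound the cross sum by $C\log m$ via the detailed (appendix) version of Lemma~\ref{lem:eigen_ratio_bnds_abbrv}, absorb the cross term using the boundedness of the iterates, and union bound over the $K$ minibatches. The only cosmetic remark is that the matched lower bound you invoke appears in the detailed appendix statement (Lemma~\ref{lem:eigen_ratio_bnds}) rather than in the abbreviated main-text version, exactly as the paper itself notes.
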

Lemma \ref{lem:g_k_conv_property_abbrv} is a relaxation of strong convexity, but leads to similar convergence guarantees from running SGD on strongly convex objectives. 
The approximate ``curvature" parameter, $\frac{1}{8\theta_{\max}^2}$ on the R.H.S of \eqref{eq:strong_conv_abbrv}, remains a constant regardless of how large $m$ is. To guarantee the constant ``curvature'', we establish novel upper and lower bounds on $\sum_{j=1}^m\lambda_{j}^l(\theta^{(k)}_1\lambda_{j}+\theta^{(k)}_2)^{-2}, l=0,1,2$ with high probability when $m$ is large, where $\lambda_{j}$ is the $j$th largest eigenvalue of $\vect{K}_{f,\xi}$ (see Lemma \ref{lem:eigen_ratio_bnds_abbrv}). The proof of Lemma \ref{lem:eigen_ratio_bnds_abbrv} is based on established error bounds for the empirical eigenvalues in \cite{braun2006accurate} and the eigendecay of the kernel $k_0(\cdot,\cdot)$.

\begin{lem}[Uniform statistical error]\label{lem:statistical_err_full_grad_abbrv} Under Assumptions \ref{assump:param_bnd} to \ref{assump:sg_bnd}, Assumption \ref{assump:eigen_decay_exp} or \ref{assump:eigen_decay_poly}, for any $0<\varepsilon<\frac{1}{2}$, $1\leq i\leq 2$, we have
\begin{equation}\label{eq:full_grad_exp_bnd_abbrv}
    \bbP\left(\sup_{\vect{\theta}\in [\theta_{\min},\theta_{\max}]^{2}}\left|(\nabla \ell(\vect{\theta}))_i-(\nabla \ell^*(\vect{\theta}))_i\right|> Cn^{-\frac{1}{2}+\varepsilon}\right)\leq C\exp\{-cn^{2\varepsilon}\}.
\end{equation}
 Here $c,C>0$ only depend on $\theta_{\min}$, $\theta_{\max},b$. 
\end{lem}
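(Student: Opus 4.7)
The plan is to rewrite $(\nabla \ell(\vect{\theta}))_i - (\nabla \ell^*(\vect{\theta}))_i$ as a centered quadratic form in the conditionally Gaussian vector $\vect{y}_n$, apply the Hanson--Wright inequality at each fixed $\vect{\theta}$, and then promote the pointwise bound to a uniform bound via a covering-plus-truncation argument.

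First, using $\mathbb{E}[\vect{y}_n\vect{y}_n^\top \mid \vect{X}_n]=\vect{K}_n^*$, the expression \eqref{eq:gradient} for the gradient yields
\begin{equation*}
(\nabla \ell(\vect{\theta}))_i-(\nabla \ell^*(\vect{\theta}))_i
 = -\tfrac{1}{2n}\bigl[\vect{y}_n^\top \vect{A}_i(\vect{\theta})\vect{y}_n - \tr\bigl(\vect{A}_i(\vect{\theta})\vect{K}_n^*\bigr)\bigr],
\end{equation*}
where $\vect{A}_i(\vect{\theta})=\vect{K}_n(\vect{\theta})^{-1}(\partial \vect{K}_n/\partial \theta_i)\vect{K}_n(\vect{\theta})^{-1}$. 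Writing $\vect{y}_n = (\vect{K}_n^*)^{1/2}\vect{z}$ with $\vect{z}\sim \mathcal{N}(0,\vect{I}_n)$ conditional on $\vect{X}_n$ converts this into a centered Gaussian chaos $\vect{z}^\top \vect{M}_i(\vect{\theta})\vect{z}-\tr(\vect{M}_i(\vect{\theta}))$ with $\vect{M}_i(\vect{\theta})=(\vect{K}_n^*)^{1/2}\vect{A}_i(\vect{\theta})(\vect{K}_n^*)^{1/2}$.

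Second, I would establish uniform norm bounds $\sup_{\vect{\theta}}\|\vect{M}_i(\vect{\theta})\|_{\mathrm{op}}\leq C$ and $\sup_{\vect{\theta}}\|\vect{M}_i(\vect{\theta})\|_F^2 \leq Cn$ on $[\theta_{\min},\theta_{\max}]^2$. These follow from Assumption~\ref{assump:param_bnd} (which forces $\vect{K}_n(\vect{\theta})\succeq \theta_{\min}\vect{I}_n$) together with the eigendecay assumptions, which control $\tr(\vect{K}_n(\vect{\theta})^{-2})$ and related traces through eigenvalue estimates analogous to those of Lemma~\ref{lem:eigen_ratio_bnds_abbrv}. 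Plugging these bounds into Hanson--Wright gives, for any fixed $\vect{\theta}$,
\begin{equation*}
\bbP\Bigl(\bigl|(\nabla \ell(\vect{\theta}))_i-(\nabla \ell^*(\vect{\theta}))_i\bigr|>n^{-1/2+\varepsilon}\,\Big|\,\vect{X}_n\Bigr)\leq C\exp(-cn^{2\varepsilon}),
\end{equation*}
since the Frobenius branch $s^2/\|\vect{M}_i\|_F^2 \asymp n^{2\varepsilon}$ is the binding one for $\varepsilon<1/2$.

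Third, to make the bound uniform in $\vect{\theta}$ I would cover the compact set $[\theta_{\min},\theta_{\max}]^2$ with a $\delta$-net $\mathcal{N}_\delta$ of cardinality $O(\delta^{-2})$, union-bound the pointwise inequality over $\mathcal{N}_\delta$, and separately bound the oscillation of the map $\vect{\theta}\mapsto(\nabla \ell(\vect{\theta}))_i-(\nabla \ell^*(\vect{\theta}))_i$ between net points by differentiating $\vect{A}_i(\vect{\theta})$ and using the identity $\partial_j \vect{K}_n^{-1}=-\vect{K}_n^{-1}(\partial_j \vect{K}_n)\vect{K}_n^{-1}$. This gives a Lipschitz constant that depends polynomially on $\theta_{\min}^{-1}$, on $\|\partial_j\vect{K}_n\|_{\mathrm{op}}$, and crucially on $\|\vect{y}_n\|_2^2$. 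The truncation step handles the last quantity: since $\|\vect{y}_n\|_2^2$ is a Gaussian quadratic with mean $\tr(\vect{K}_n^*)=O(n)$, another Hanson--Wright inequality shows $\|\vect{y}_n\|_2^2\leq Cn$ with probability at least $1-\exp(-cn)$. On this event the Lipschitz constant is at most $\mathrm{poly}(n)$, so choosing $\delta=n^{-\kappa}$ for a sufficiently large but fixed $\kappa$ makes the oscillation term much smaller than $n^{-1/2+\varepsilon}$, while $\log|\mathcal{N}_\delta|\asymp \log n$ is absorbed harmlessly into $\exp(-cn^{2\varepsilon})$.

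The main obstacle is the Lipschitz/oscillation step: one needs to differentiate the random matrix-valued quantity $\vect{A}_i(\vect{\theta})$ and show that every resulting matrix product is controlled \emph{uniformly} in $\vect{\theta}$ by quantities whose moments or tails are known, so that the supremum can be bounded by a polynomial in $n$ on a high-probability event. Getting a clean polynomial (rather than exponential) dependence on $n$ is what enables the covering argument to preserve the $\exp(-cn^{2\varepsilon})$ tail, and is precisely where the combination of truncation of $\|\vect{y}_n\|_2^2$ with the lower bound $\vect{K}_n(\vect{\theta})\succeq \theta_{\min}\vect{I}_n$ from Assumption~\ref{assump:param_bnd} plays a decisive role.
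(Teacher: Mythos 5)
Your reduction to a centered Gaussian chaos $\vect{z}^\top\vect{M}_i(\vect{\theta})\vect{z}-\tr(\vect{M}_i(\vect{\theta}))$ with $\vect{M}_i(\vect{\theta})=(\vect{K}_n^*)^{1/2}\vect{K}_n^{-1}(\vect{\theta})(\partial\vect{K}_n/\partial\theta_i)\vect{K}_n^{-1}(\vect{\theta})(\vect{K}_n^*)^{1/2}$, the norm bounds $\|\vect{M}_i\|_{\mathrm{op}}\leq C$ and $\|\vect{M}_i\|_F^2\leq Cn$, and the pointwise Hanson--Wright step all coincide with the paper's proof of the detailed version (Lemma~\ref{lem:statistical_err_full_grad}). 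Where you genuinely diverge is the uniformity step. You use a polynomially fine net plus a Lipschitz bound on $\vect{\theta}\mapsto\vect{y}_n^\top\vect{A}_i(\vect{\theta})\vect{y}_n$, paying $\mathrm{poly}(n)$ through $\|\partial_j\vect{K}_n\|_{\mathrm{op}}\leq n$ and a truncation of $\|\vect{y}_n\|_2^2$; the paper instead exploits that all matrices involved are simultaneously diagonal (here $\vect{K}_{f,n}$ and $\vect{I}_n$ commute), takes a \emph{coarse} net of constant resolution, Taylor-expands each diagonal entry $(\sum_l\lambda_{lj}\theta_l)^{-2}$ to order $H\asymp\log n$ around the nearest net point, applies Hanson--Wright to each Taylor-coefficient matrix, and controls the order-$H$ residual deterministically entrywise (Lemma~\ref{lem:quadratic_remainder}); its ``truncation'' is of the Taylor series, not of $\|\vect{y}_n\|_2$. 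Your route is more elementary and more general (it does not need the diagonal structure), at the cost of a polynomial union-bound prefactor where the paper only pays $(\log n)^{2M+2}$; both prefactors get absorbed into $\exp\{-cn^{2\varepsilon}\}$ in the same way, so this costs nothing at the level of the stated bound. One quantitative slip you should repair: $\|\vect{y}_n\|_2^2\leq Cn$ does \emph{not} hold with probability $1-e^{-cn}$, because $\|\vect{K}_n^*\|_{\mathrm{op}}\asymp n\lambda_1^*=\Theta(n)$ means the quadratic form $\vect{z}^\top\vect{K}_n^*\vect{z}$ has a single heavy direction and its upper tail at level $Cn$ is only $e^{-cC}$, a constant. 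This is harmless: Hanson--Wright with $\|\vect{K}_n^*\|_F^2\leq Cn^2$ and $\|\vect{K}_n^*\|_{\mathrm{op}}\leq Cn$ gives $\|\vect{y}_n\|_2^2\leq Cn^{1+2\varepsilon}$ with probability $1-e^{-cn^{2\varepsilon}}$, which still leaves the Lipschitz constant polynomial in $n$ and keeps your choice $\delta=n^{-\kappa}$ valid, but the truncation level and its failure probability must be matched to the target tail $\exp\{-cn^{2\varepsilon}\}$ rather than claimed at rate $e^{-cn}$.
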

The major difficulty in the proof of Lemma \ref{lem:statistical_err_full_grad_abbrv} is to control the error term uniformly over $\vect{\theta}\in [\theta_{\min},\theta_{\max}]^{2}$.  We need a uniform error bound, since $g^*(\vect{\theta}^{(k)})$ is no longer the conditional expectation of $g(\vect{\theta}^{(k)})$ if conditioning on the past iterate $\vect{\theta}^{(k)}$. Although the set $[\theta_{\min},\theta_{\max}]^{2}$ has constant dimension, the kernel matrix $\vect{K}_n(\vect{\theta})\in \bbR^{n\times n}$ is of high dimension and is determined by $\vect{\theta}$ in a non-linear way. Our solution is to explore the Taylor's expansion of $\nabla \ell(\vect{\theta})-\nabla\ell^*(\vect{\theta})$, then use truncation and covering arguments. 
\subsection{Proof of the First Part of Theorem \ref{thm:large_m_param_converg}}
Let $\widehat{e}_k= (g(\vect{\theta}^{(k)}))_{2}-(g^*(\vect{\theta}^{(k)}))_{2}$.
Due to Lemma \ref{lem:g_k_conv_property_abbrv} and Assumption \ref{assump:sg_bnd}, we have
 \begin{equation}\label{eq:SGD_err_bnd_recursive}
     \begin{split}
         \left(\theta^{(k)}_{2}-\theta^*_{2}\right)^2=&\left(\theta^{(k-1)}_{2}-\theta^*_{2}\right)^2-2\alpha_k (\theta^{(k-1)}_{2}-\theta^*_{2})(g(\vect{\theta}^{(k-1)}))_{2}\\
         &+\alpha_k^2(g(\vect{\theta}^{(k-1)}))_{2}^2\\
         \leq &\left(\theta^{(k-1)}_{2}-\theta^*_{2}\right)^2(1-\alpha_k\gamma)+\alpha_k^2G^2\\
         &+2\alpha_k\left(\frac{C\log m}{m}-(\theta^{(k-1)}_{2}-\theta^*_{2})\widehat{e}_{k-1}\right),
     \end{split}
 \end{equation}
 where $\gamma=\frac{1}{4\theta_{\max}^2}$.
  Recall that $\frac{3}{2\gamma}\leq\alpha_1\leq \frac{2}{\gamma}$, and $\alpha_k=\frac{\alpha_1}{k}$ for all $k\geq 1$. Now we prove the following statement for $k\geq 1$ by induction:
  \begin{equation}\label{eq:SGD_err_bnd1}
  \left(\theta^{(k)}_{2}-\theta^*_{2}\right)^2\leq \frac{2\alpha_1^2G^2}{k+1}+\sum_{i=0}^{k-1}\eta_{k,i}\left(\frac{C\log m}{m}-(\theta^{(k-1)}_{2}-\theta^*_{2})\widehat{e}_{k-1}\right),
  \end{equation}
  where $\eta_{k,i}=2\alpha_{i+1}\prod_{j=i+2}^k(1-\alpha_j\gamma)$. When $k=1$, by \eqref{eq:SGD_err_bnd_recursive} and the fact that $1-\alpha_1\gamma<0$,
  \begin{equation}
      \left(\theta^{(1)}_{2}-\theta^*_{2}\right)^2\leq \alpha_1^2G^2+\eta_{1,0}\left(\frac{C\log m}{m}-(\theta^{(0)}_{2}-\theta^*_{2})\widehat{e}_{0}\right).
  \end{equation}
  Assuming \eqref{eq:SGD_err_bnd1} holds for $k=l\geq 1$, then due to \eqref{eq:SGD_err_bnd_recursive} and the fact that $1-\alpha_{l+1}\gamma\geq0$ for $l\geq 1$, we have
  \begin{equation}
      \begin{split}
          &\left(\theta^{(l+1)}_{2}-\theta^*_{2}\right)^2\\
          \leq &\left(\frac{2\alpha_1^2G^2}{l+1}+\sum_{i=0}^{l-1}\eta_{l,i}\left(\frac{C\log m}{m}-(\theta^{(i)}_{2}-\theta^*_{2})\widehat{e}_{i}\right)\right)(1-\alpha_{l+1}\gamma)+\alpha_{l+1}^2G^2\\
          &+2\alpha_{l+1}\left(\frac{C\log m}{m}-(\theta^{(l)}_{2}-\theta^*_{2})\widehat{e}_{l}\right)\\
          \leq &\frac{2\alpha_1^2G^2(l+1-\alpha_1\gamma)}{(l+1)^2}+\frac{\alpha_1^2G^2}{(l+1)^2}+\sum_{i=0}^{l}\eta_{l+1,i}\left(\frac{C\log m}{m}-(\theta^{(i)}_{2}-\theta^*_{2})\widehat{e}_{i}\right)\\
          \leq &\frac{2\alpha_1^2G^2}{l+2}+\sum_{i=0}^{l}\eta_{l+1,i}\left(\frac{C\log m}{m}-(\theta^{(i)}_{2}-\theta^*_{2})\widehat{e}_{i}\right).
      \end{split}
  \end{equation}
  Here the last two lines are due to range of $\alpha_1$ and the definitions of $\eta_{l,i}$. The next step is to bound $\sum_{i=0}^{K-1}\eta_{K,i}\left(\frac{C\log m}{m}-(\theta^{(i)}_{2}-\theta^*_{2})\widehat{e}_{i}\right)$. First we have
  \begin{equation}
  \begin{split}
      &\left|\sum_{i=0}^{K-1}\eta_{K,i}\left(\frac{C\log m}{m}-(\theta^{(i)}_{2}-\theta^*_{2})\widehat{e}_{i}\right)\right|\\
      \leq &\frac{2\alpha_1}{K}\sum_{i=0}^{K-1}\left|\theta^{(i)}_{2}-\theta^*_{2}\right|\widehat{e}_i|+\frac{C\alpha_1\log m}{m}\\
      \leq &C\left(\max_{0\leq i\leq K-1}|\widehat{e}_i|+\frac{\log m}{m}\right).
  \end{split}
  \end{equation}
  Note that the distribution of each minibatch $\{\vect{X}_{\xi_{k+1}},\vect{y}_{\xi_{k+1}}\}_{i=1}^m$ is the same as sampling $m$ independent $\vect{x}_i$ from $\bbP$, and then sampling $\vect{y}_{\xi_{k+1}}\sim\mathcal{N}(0,\vect{K}^*_{\xi_{k+1}})$, thus we can apply the Lemma \ref{lem:statistical_err_full_grad_abbrv} on each $\widehat{e}_i=g(\vect{\theta}^{(i)})_{2}-g^*(\vect{\theta}^{(i)})_{2}$ and take a union bound over $0\leq i\leq K-1$:
  \begin{equation}
      \bbP\left(\max_{0\leq i\leq K-1} \left|\widehat{e}_i\right|>Cm^{-\frac{1}{2}+\varepsilon}\right)\leq CK\exp\{-cm^{2\varepsilon}\},
  \end{equation}
  for any $\varepsilon>0$.
  Therefore, 
  \begin{equation}
      \left(\theta^{(k)}_{2}-\theta^*_{2}\right)^2\leq \frac{2\alpha_1^2G^2}{k+1}+Cm^{-\frac{1}{2}+\varepsilon},
  \end{equation}
  with probability at least 
  \begin{equation}
      1-CK\exp\{-c\min\{\log m, m^{2\varepsilon}\}\geq 1-CK\exp\{-cm^{2\varepsilon}\},
  \end{equation}
for any $0<\varepsilon<C\frac{\log \log m}{\log m}$, when $m>C$ for some $C>0$ depending on $\theta_{\min},\theta_{\max},b$.

   
  
  


\subsection{Proof of Theorem \ref{thm:large_m_grad_converg}}
We start from bounding $\nabla \ell^*(\vect{\theta}^{(k)})$, the conditional expectation of $\nabla \ell(\vect{\theta}^{(k)})$ given $\vect{x}_1,\dots,\vect{x}_n$, then control the statistical error $\nabla \ell(\vect{\theta}^{(k)})-\nabla \ell^*(\vect{\theta}^{(k)})$. By the definition of $\nabla \ell^*(\vect{\theta}^{(k)})$, for $1\leq i\leq 2$,
\begin{equation}\label{eq:full_gradient_exp}
    \begin{split}
        \left(\nabla \ell^*(\vect{\theta}^{(k)})\right)_i=&\frac{1}{2n}\tr\left[\vect{K}_n(\vect{\theta}^{(k)})^{-1}(\vect{I}_n-\vect{K}_n^*\vect{K}_n(\vect{\theta}^{(k)})^{-1})\frac{\partial \vect{K}_n(\vect{\theta}^{(k)})}{\partial \theta^{(k)}_i}\right]\\
        =&\frac{1}{2n}\sum_{j=1}^n \frac{(\theta^{(k)}_1-\theta^{*}_1)\lambda_{j}^{1+\ind{i=1}}+(\theta^{(k)}_2-\theta^{*}_2)\lambda_{j}^{\ind{i=1}}}{\left(\theta^{(k)}_1\lambda_{j}+\theta^{(k)}_2\right)^2},
    \end{split}
\end{equation}
where $\lambda_{j}$ is the $j$th largest eigenvalue of $\vect{K}_{f,n}$.
The following lemma provides bounds for $\sum_{j=1}^n \frac{\lambda_{j}^l}{\left(\theta^{(k)}_1\lambda_{j}+\theta^{(k)}_2\right)^2}$ for all $l=0,1,2$.
\begin{lemma}\label{lem:eigen_ratio_bnds_abbrv}
Under Assumption \ref{assump:eigen_decay_exp}, for any $\alpha>0$, if $n>C$ for $C>0$ depending on $b$, then with probability at least $1-3n^{-\alpha}$,
\begin{equation}\label{eq:eigen_ratio_bnds_abbrv}
\begin{split}
    \text{ if }l = 1\text{ or }2, \sum_{j=1}^n \frac{\lambda_{j}^l}{\left(\theta_1\lambda_{j}+\theta_2\right)^2}\leq &\frac{2(2+\alpha)}{b\theta_{\min}^2}\log n,\\
    \sum_{j=1}^n \frac{1}{\left(\theta_1\lambda_{j}+\theta_2\right)^2}\leq &\frac{n}{\theta_{\min}^2},
\end{split}
\end{equation}
holds for any $\vect{\theta}\in[\theta_{\min},\theta_{\max}]^{2}$.
\end{lemma}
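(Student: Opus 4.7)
The plan is to handle the $l=0$ case deterministically and the $l=1,2$ cases via a high-probability upper bound on the empirical eigenvalues $\lambda_{j}$ of $\vect{K}_{f,n}$. For $l=0$ the statement is trivial: since $\lambda_{j}\geq 0$ and $\theta_{2}\geq \theta_{\min}$, each summand satisfies $1/(\theta_{1}\lambda_{j}+\theta_{2})^{2}\leq 1/\theta_{\min}^{2}$, so summing $n$ terms gives the claimed deterministic bound. The whole work is therefore in the $l\in\{1,2\}$ cases, where the factor $\lambda_{j}^{l}$ in the numerator reshapes the series in a way that must be controlled using the eigendecay of the population operator.

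The key input is an empirical eigenvalue bound of the type proved in \citet{braun2006accurate}: under Assumption~\ref{assump:eigen_decay_exp} and for $n$ sufficiently large, with probability at least $1-3n^{-\alpha}$ one has $\lambda_{j}\leq c_{1}\,n\,e^{-bj}$ for all $j\geq 1$, for some constant $c_{1}$ depending on $C,b,\alpha$. All probabilistic content of the lemma is packaged into this one event; the rest of the argument is deterministic on that event.

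Conditional on this eigenvalue bound, I would split the sum at the index $j^{*}:=\lceil b^{-1}\log(c_{1}n\,\theta_{\max}/\theta_{\min})\rceil$, which is the threshold at which $\theta_{1}\lambda_{j}$ crosses $\theta_{2}$. For $j\leq j^{*}$ I use $\theta_{1}\lambda_{j}+\theta_{2}\geq \theta_{1}\lambda_{j}$, which gives $\lambda_{j}^{l}/(\theta_{1}\lambda_{j}+\theta_{2})^{2}\leq \lambda_{j}^{l-2}/\theta_{1}^{2}\leq 1/\theta_{\min}^{2}$ (using $l\in\{1,2\}$ and, when $l=1$, the further estimate $1/(\theta_{1}^{2}\lambda_{j})\leq 1/(\theta_{1}\theta_{2})\leq 1/\theta_{\min}^{2}$ that is available precisely on $\{\theta_{1}\lambda_{j}\geq \theta_{2}\}$). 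This contributes at most $j^{*}/\theta_{\min}^{2}\leq \frac{2+\alpha}{b\theta_{\min}^{2}}\log n$ up to lower-order terms. For $j>j^{*}$ I use $\theta_{1}\lambda_{j}+\theta_{2}\geq \theta_{2}\geq \theta_{\min}$ and the eigenvalue decay $\lambda_{j}\leq c_{1}ne^{-bj}$, so that
\begin{equation*}
\sum_{j>j^{*}}\frac{\lambda_{j}^{l}}{(\theta_{1}\lambda_{j}+\theta_{2})^{2}}\;\leq\;\frac{1}{\theta_{\min}^{2}}\sum_{j>j^{*}}(c_{1}ne^{-bj})^{l}\;\leq\;\frac{C'}{\theta_{\min}^{2}},
\end{equation*}
by summing the geometric series, using $c_{1}ne^{-bj^{*}}\leq \theta_{\min}/\theta_{\max}\leq 1$. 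The constant tail is absorbed into the $\log n$ term for $n$ larger than a constant.

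The main obstacle is bookkeeping the constants to obtain the precise prefactor $2(2+\alpha)/b$. This requires choosing the Braun-type tail bound with the right exponent $\alpha$ in its failure probability and checking that the implicit constant $c_{1}$ only enters $j^{*}$ additively (i.e. as $\log c_{1}$), so that for $n$ larger than a $b$-dependent constant the head contribution is truly $\leq \frac{2(2+\alpha)}{b\theta_{\min}^{2}}\log n$ after absorbing the bounded tail. A secondary care point is that all estimates must hold uniformly in $\vect{\theta}\in[\theta_{\min},\theta_{\max}]^{2}$, but since the only appearances of $\vect{\theta}$ after the case split are through the monotone bounds $\theta_{1}\geq\theta_{\min}$, $\theta_{2}\geq\theta_{\min}$ and $\theta_{j}\leq\theta_{\max}$, uniformity is automatic once the deterministic eigenvalue event is in force.
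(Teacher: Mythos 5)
Your overall strategy coincides with the paper's: handle $l=0$ by the trivial pointwise bound $1/(\theta_1\lambda_j+\theta_2)^2\leq\theta_{\min}^{-2}$, and for $l=1,2$ invoke a Braun-type high-probability upper bound on the empirical eigenvalues, split the sum at the index where $\theta_1\lambda_j$ crosses the constant level, bound the head termwise by $\theta_{\min}^{-2}$ (or $1/(4\theta_1\theta_2)$ via AM--GM when $l=1$), and sum a geometric tail. The gap is in the key input. You assert that with probability $1-3n^{-\alpha}$ one has $\lambda_j\leq c_1\,n\,e^{-bj}$ uniformly over $j$ with $c_1$ a constant. That bound is not available from \citet{braun2006accurate}, and the paper does not prove it. In Braun's inequality $|\lambda_j/n-\lambda_j^*|\leq\lambda_j^*C(r,n)+E(r,n)$, taking $r=j$ and union-bounding over all $j$ forces $\delta\asymp n^{-(1+\alpha)}$ per index, and the multiplicative deviation satisfies $\lambda_j^*C(j,n)\lesssim j^2 n^{\alpha/2}e^{bj/2}\cdot e^{-bj}=j^2n^{\alpha/2}e^{-bj/2}$: the fluctuation scales like $\sqrt{\lambda_j^*}$, so the uniform upper bound one actually obtains (the paper's \eqref{eq:eigen_upp_bnd2_sigma_f}) is $\lambda_j\leq C(\eta)\,n^{1+\alpha/2}e^{-bj/(2\eta)}$ for any $\eta>1$ --- the decay rate is roughly halved and the prefactor carries an extra $n^{\alpha/2}$. (The alternative Braun bound with fixed $r\asymp\log n$ recovers the full rate $e^{-bj}$ but only up to an additive constant, which is useless for controlling the tail of $\sum_j\lambda_j^l$.)

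This is not merely a matter of the constant $c_1$ entering additively as $\log c_1$, as your closing paragraph suggests. The degraded rate and the $n^{\alpha/2}$ prefactor enter the crossover index \emph{multiplicatively}: it becomes $j^*\approx\frac{\eta(2+\alpha)}{b}\log n$ rather than $\frac{1}{b}\log n$, and this is precisely where the factor $2+\alpha$ in the lemma's constant $\frac{2(2+\alpha)}{b\theta_{\min}^2}$ comes from (the paper takes $\eta=3/2$, giving a head contribution $\frac{3(2+\alpha)}{2b\theta_{\min}^2}\log n$ with slack to absorb the $O(1)$ tail). Fortunately your downstream bookkeeping is robust to this correction: with the weaker-but-correct eigenvalue bound for any $\eta<2$, your head estimate, AM--GM step, and geometric tail all go through and yield exactly the stated constant. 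So the proof is salvageable by replacing your assumed eigenvalue bound with the one Braun's theorem actually delivers (cf.\ the first display of \eqref{eq:eigenval_upp_bnds} in the appendix), but as written the central probabilistic claim is unjustified and, in the form stated, stronger than what the concentration results permit.
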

We prove Lemma \ref{lem:eigen_ratio_bnds_abbrv} by exploiting the error bounds for eigenvalues of empirical kernel matrices from the population eigenvalues of the kernel operator. A detailed version of Lemma \ref{lem:eigen_ratio_bnds_abbrv} including also lower bounds for $\frac{\lambda_j^l}{\left( \theta_1\lambda_{j}+\theta_2\right)^2}$ is presented in the Appendix, which is a key result for proving Lemma \ref{lem:g_k_conv_property_abbrv}. 

For any constant $c>0$, apply Lemma \ref{lem:eigen_ratio_bnds_abbrv} with $\alpha=c$, then \eqref{eq:eigen_ratio_bnds_abbrv} holds with probability at least $1-3n^{-c}$, if $n>C$ for $C$ depending on $b$. Combining this result and \eqref{eq:full_gradient_exp} together implies
that
\begin{equation}
    \begin{split}
        \left|\left(\nabla \ell^*(\vect{\theta}^{(k)})\right)_1\right|\leq &\frac{C\log n}{n}
    \end{split}
\end{equation}
where $C>0$ depends on $\theta_{\min},\theta_{\max},b$. Meanwhile,
\begin{equation}
    \begin{split}
        \left|\left(\nabla \ell^*(\vect{\theta}^{(k)})\right)_2\right|\leq C\left(|\theta^{(k)}_2-\theta^*_2|+\frac{\log n}{n}\right).
    \end{split}
\end{equation}
Thus we have 
\begin{equation}
    \begin{split}
        \|\nabla \ell^*(\vect{\theta}^{(k)})\|_2^2\leq& C\left[\left(\frac{\log n}{n}\right)^2+(\theta^{(k)}_2-\theta^*_2)^2\right].
    \end{split}
\end{equation}
 For bounding $\nabla \ell(\vect{\theta}^{(k)})-\nabla \ell^*(\vect{\theta}^{(k)})$, we can apply Lemma \ref{lem:statistical_err_full_grad_abbrv}.
By \eqref{eq:full_grad_exp_bnd_abbrv} and Theorem \ref{thm:large_m_param_converg}, for any $0<\varepsilon<C\frac{\log \log m}{\log m}$, if $m>C$, 
then with probability at least $1-CK\exp\{-cm^{2\varepsilon}\}$, we have
\begin{equation}
    \|\nabla \ell(\vect{\theta}^{(K)})\|_2^2\leq C\left[\frac{G^2}{K+1}+m^{-\frac{1}{2}+\varepsilon}\right],
\end{equation}
where $c,C>0$ depend only on $\theta_{\min},\theta_{\max},b$.

\section{Practical Considerations for Applying SGD on GP}\label{sec:practical}

\subsection{Sampling Scheme}
As discussed in Section \ref{sec:problem_setup}, one may consider both uniformly and nearby sampling. In Section \ref{sec:nearby_sampling}, we provide both numerical and theoretical evidence that hint to the fact that sampling nearby points for each minibatch leads to faster convergence specifically for the noise variance parameter $\theta_2^{(k)}$ towards $\theta_2^*=\sigma_{\epsilon}^2$. Besides that, our case studies in numerical experiments in Section~\ref{sec:numeric} also support this claim.  

Below we highlight the Algorithm for nearby sampling which is a simple extension of Algorithm \ref{alg:algorithm1}

\begin{algorithm}[H]
\SetAlgoLined
 Input: $\vect{\theta}^{(0)}\in \bbR^{2}$, initial step size $\alpha_1>0$.\\
 \For{$k=1,2,\dotsc,K$}{
  Sample a data point uniformly from the data pool, and then select its $m-1$ nearest neighbors, which forms $(\mathbf{X}_{\xi_k},\mathbf{y}_{\xi_k})$ of size $m$\; 
  Compute the stochastic gradient $g(\vect{\theta}^{(k-1)};\mathbf{X}_{\xi^{(2)}_k}, \mathbf{y}_{\xi^{(2)}_k})$\;
  $\alpha_k\leftarrow \frac{\alpha_1}{k}$\;
  $\vect{\theta}^{(k)}\leftarrow \vect{\theta}^{(k-1)} - \alpha_kg(\vect{\theta}^{(k-1)};\mathbf{X}_{\xi_k}, \mathbf{y}_{\xi_k})$\;
 }
 \caption{Minibatch SGD with nearby sampling}
 \label{alg:mixed_sampling}
\end{algorithm}

\subsection{Optimizing other hyperparameters}
In practice, we may also need to determine other hyperparameters of the kernel function besides signal variance and noise variance. For example, when considering the RBF kernel $k(\mathbf{x},\mathbf{x}')=\exp\{-\sum_{j=1}^d\frac{(x_j-x'_j)^2}{2l_j^2}\}$, we need to estimate the lengthscale parameters $l_j, 1\leq j\leq d$; when considering the Mat\`{e}rn kernel~\eqref{eq:matern}, the hyperparameters $\alpha$ and $h$ are also unknown and require estimation. Similar to Algorithm \ref{alg:algorithm1} and Algorithm \ref{alg:mixed_sampling}, we can update these parameters alongside the variance parameters using minibatch SGD. Our numerical experiments in Section~\ref{sec:numeric} suggest that nearby sampling method may also be a good option for the lengthscale parameters.

\subsection{Prediction}
Although our main focus in this paper is estimating the hyperparameters, or model selection, the last step when applying GP in real applications is always prediction. Following the model selection process from which we obtain optimal hyperparameters, various strategies can be applied to calculate the predictive mean for $\mathbf{x}_*$ and the predictive covariance between $\mathbf{x}_{*}$ and $\mathbf{x}_{*}'$ using the well known predictive equation below

\begin{equation}\label{eq:pred}
    \mu_{\text{pred}}(\mathbf{x}_*)=\mathbf{k}_{\mathbf{X}_n\mathbf{x}_*}^\top\mathbf{K}_n^{-1}\mathbf{y}_n,\quad\quad k_{\text{pred}}(\mathbf{x}_*,\mathbf{x}_{*}')=k(\mathbf{x}_*, \mathbf{x}_{*}') -\mathbf{k}_{\mathbf{X}_n\mathbf{x}_*}^\top\mathbf{K}_n^{-1}\mathbf{k}_{\mathbf{X}_n\mathbf{x}_{*}'} \, ,
\end{equation}
where $\mathbf{k}_{\mathbf{X}_n\mathbf{x}_*}=(k(\mathbf{x}_1,\mathbf{x}_*)\dotsc,k(\mathbf{x}_n,\mathbf{x}_*))^\top$. The main computational cost of the predictive mean and the predictive covariance come from $\mathbf{K}_n^{-1}$. In general, for $n<10^4$, they can be computed via Cholesky decomposition; for $n<10^5$, preconditioned conjugate gradient (PCG) \citep{gardner2018gpytorch} can be applied for acceleration; for $n<10^6$, PCG with partitioned kernel \citep{wang2019exact} could provide further speed up, if distributed computational resources are available. Another practical but less ideal strategy when predicting with extremely large $n$ is to follow the same approach as nearby sampling and utilize only $\tilde{n}$ nearest neighbors of $\mathbf{x}_*$ within the observed data to solve \eqref{eq:pred}, where $\tilde{n}<n$ is determined by the available computational resource. Fortunately, prediction is a one-shot process compared to the iterative training process. 

\section{Numerical Results}\label{sec:numeric}
\subsection{Numerical Illustration of Theory}\label{sec:numeric_theory}

In this section, we conduct simulation studies to verify our theoretical results. 

We consider $n=1,024$, $\mathbf{x}_i\overset{i.i.d.}{\sim}\mathcal{N}(0,5^2)$ and $\mathbf{y}_n\sim \mathcal{N}(\mathbf{0},\sigma_f^2\mathbf{K}_{f,n} + \sigma_{\epsilon}^2\mathbf{I}_n)$, where $\mathbf{K}_{f,n}$ is an RBF kernel matrix with known lengthscale $l=0.5$. The underlying true parameters are outputscale $\sigma_f^2=4$ and noise variance $\sigma_{\epsilon}^2=1$. In each experiment, we perform 25 epochs of minibatch SGD with diminishing step sizes $\alpha_k=\alpha_1/k$. We set scaling factors to $s_1(m)=3\log m$ for $\sigma_f^2$ and $s_2(m)=m$ for $\sigma_n^2$. Each experiment is repeated 10 times with independent data pools for different repetitions.

\begin{figure}[htbp!]
\centering
  \includegraphics[width=\linewidth]{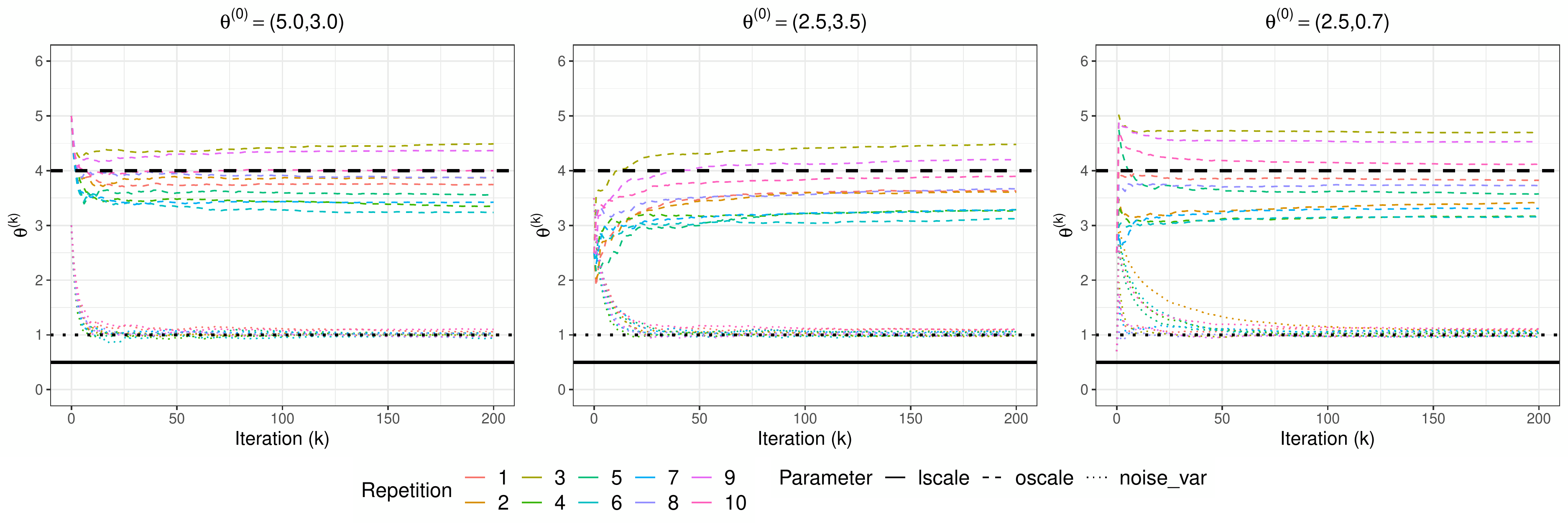}
  \caption{Illustration of the convergence of parameters under uniform sampling. We consider $m=128$, and demonstrate three cases with varying initial points where initial stepsizes are $\alpha_1=$ $9, 9$ and $6$, respectively. Lines in black denote the true parameters.}
  \label{fig:param_unif}
\end{figure}

Fig. \ref{fig:param_unif} shows the convergence of parameters under uniform sampling, varying initializations and step sizes. All, the curves display $O(\frac{1}{K})$ convergence rates which are consistent with our results in Theorem \ref{thm:large_m_param_converg}. Moreover, the locations where the updates of $\sigma_{\epsilon}^2$ converges to, are significantly more concentrated around the truth compared to that of $\sigma_f^2$, which is consistent with the $O((\log m)^{-\frac{1}{2}})$ statistical error for $\sigma_f^2$ and $O(m^{-\frac{1}{2}})$ statistical error for $\sigma_{\epsilon}^2$, also stated in Theorem \ref{thm:large_m_param_converg}. 


\begin{figure}[htbp!]
\centering
  \includegraphics[width=\linewidth]{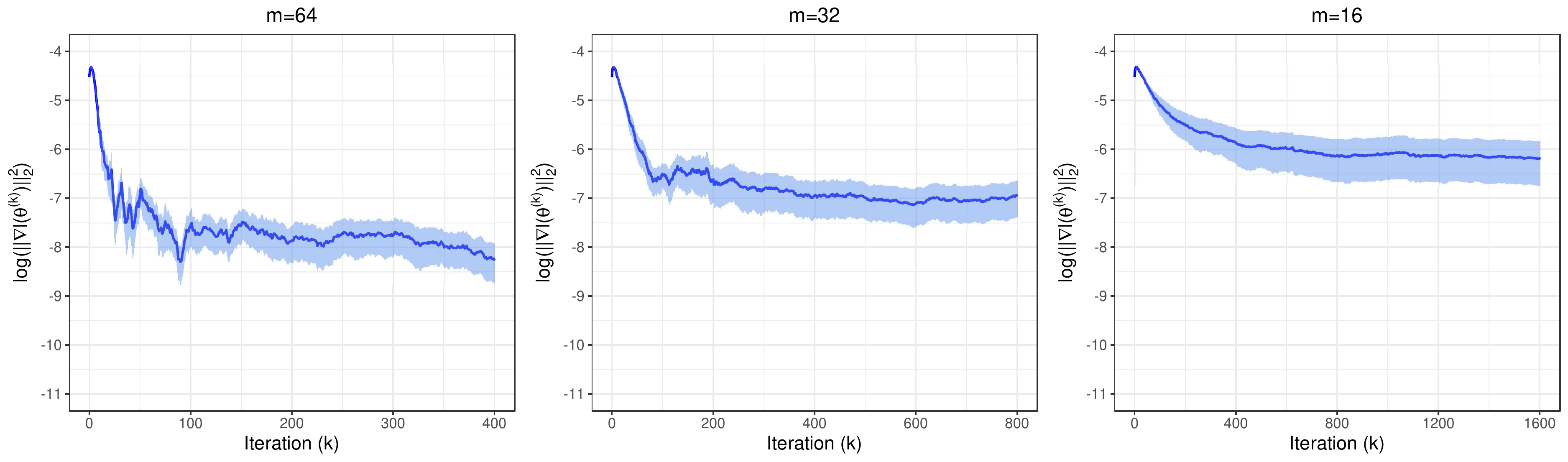}
  \caption{Comparison of the convergence of the full gradient under uniform sampling with varying minibatch sizes. The mean of $\|\nabla \ell(\vect{\theta}^{(k)})\|_2^2$ is shown in blue and the region within its one standard error over 10 repetitions is shown in light blue, both under log scale. The three experiments share initial point $\vect{\theta}^{(0)}=(5.0,3.0)$ and inital step size $\alpha_1=9$.} 
  \label{fig:grad_unif}
  \vspace{-0.2cm}
\end{figure}

Fig. \ref{fig:grad_unif} displays the effect of minibatch size on the convergence of the full gradient. To start with, the curves flatten slower and converge to larger values as minibatch size decreases, suggesting that a larger minibatch leads to faster convergence of the full gradient, as well as a full gradient with smaller statistical error. In addition, the convergence points of $\log(\|\nabla \ell(\vect{\theta}^{(k)})\|_2^2)$ scale linearly with minibatch size $m$, indicating a $O(m^{-\frac{1}{2}})$ statistical error for $\|\nabla \ell(\vect{\theta}^{(k)})\|_2^2$. The above observations confirm our statements in Theorem \ref{thm:large_m_grad_converg}. 

\begin{figure}[htbp!]
\centering
  \includegraphics[width=\linewidth]{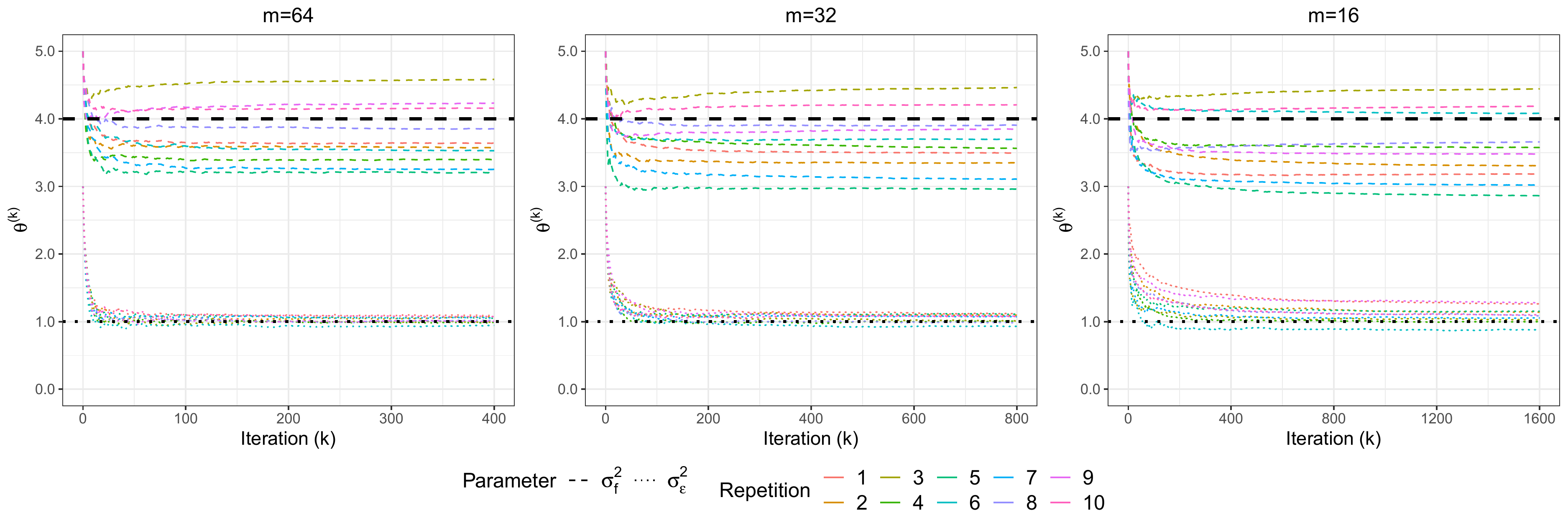}
  \caption{Comparison of the convergence of parameters with varying minibatch sizes. Lines in black denote the true parameters. The three experiments share initial point $\vect{\theta}^{(0)}=(5.0,3.0)$ and inital step size $\alpha_1=9$.} 
  \label{fig:vary_m}
\end{figure}

We also investigate how minibatch size $m$ influences the convergence of parameters, which is illustrated in Fig. \ref{fig:vary_m}. As highlighted in our theory, we find that a larger mini-batch size results in faster convergence and smaller statistical error (more concentrated curves) for the parameters. Here we note that similar results hold for the Mat\'{e}rn kernel and hence we omit the figures.

\subsection{Case Studies}\label{sec:case_studies} 

In this section, we test our model's performance on publicly available real and simulated datasets. Our benchmarked models are: (i) Exact inference using  matrix vector multiplication denoted as EGP \citep{gardner2018gpytorch, wang2019exact},  (ii)  Vecchia's GP approximation denoted as Vecchia \citep{guinness2018permutation, katzfuss2020vecchia}, (iii) sparse GP regression denoted as SGPR \citep{titsias2009variational} and (iv) stochastic variational GP denoted as SVGP \citep{hensman2013gaussian}. Our stochastic gradient-based GP approach is denoted as sgGP.

All models are tested on real datasets from the UCI repository \citep{Dua:2019} and simulated datasets from the Virtual Library of Simulation Experiments \citep{simulationlib}.The real datasets are: Bike, Energy, PM2.5, Protein and Query. The simulated dataset are Levy, Greiwank and Borehole. We also use two other simulated datasets from the Virtual Library Simulation Experiments that represent real-life systems. The OTL circuit models an output transformerless push-pull circuit while the Wing Weight models a light aircraft wing. 


Throughout all experiments, we consider constant zero prior mean function and the scaled RBF covariance function with a separate lengthscale for each input dimension. We run Adam to learn the signal variance, noise variance, and also lengthscales, as an extension from our problem set-up stated in Section \ref{sec:problem_setup}.
We conduct 10 independent trials on each dataset. In each trial, we randomly split the dataset into 60\% training set and 40\% test set. In addition, the training set is normalized to 0 mean and 1 standard deviation, and the test set is scaled accordingly. 

During model selection, the hyperparameters and variational parameters are learned through minimizing the negative log marginal likelihood or its surrogate. (i) For sgGP, we consider both uniform and nearest neighbor sampling schemes, where we perform 100 epochs of Adam with minibatch size $m=16$ and a learning rate of 0.01. (ii) For Vecchia, we order the observed data following the maximum minimum distance (MMD) ordering \citep{guinness2018permutation}. MMD ordering works by first selecting a center point, and then sequentially selecting the next point to have maximum minimum distance to all previously selected points. We let each observed response condition on its $m=16$ nearest neighbors within its predecessors from the ordered set. We also carry out 100 iterations of the Fisher scoring \citep{guinness2021gaussian} algorithm. (iii) For other methods, we follow the theoretical recommendations in \cite{burt2019rates} and the practical recommendations in \cite{wang2019exact}. Further, for EGP, we perform 100 iterations of Adam with a learning rate of 0.1.
For SGPR, we use $m=512$ inducing points and carry out 100 iterations of Adam with a learning rate of 0.1. For SVGP, we use $m=1,024$ inducing points and perform 100 epochs of Adam with a minibatch size of $1,024$ and a learning rate of 0.01. To ensure fairness of comparison, we do not perform any pretraining or fine-tuning, and we let different methods share a common but randomly selected starting point in each trial.

Regarding the prediction of sgGP, we adopt the PCG algorithm in EGP to approximate \eqref{eq:pred}. While for prediction in Vecchia, we order the inputs to be predicted using MMD ordering and append them to the ordered observed inputs. We set the conditioning-set size to $m=64$.

Model selection of sgGP is coded with R, where RANN package \citep{RANN} is used for finding nearest neighbors. Vecchia is coded using R, where we utilize GpGp package \citep{guinness2018permutation} to find ordered nearest neighbors and implement Fisher's scoring algorithm. The prediction of sgGP, together with EGP, SGPR and SVGP are implemented through GPyTorch \citep{gardner2018gpytorch}. Each experiment is performed on a single core of Intel Xeon E5-2680 v3 @ 2.50GHz CPU. For simulated datasets, we manually inject noise to the response. For query dataset, we constrain the learned noise to be at least 0.1 to regularize the ill-conditioned kernel matrix. Due to memory limit, for Borehole, OTL Circuit and Wing Weight datasets, we use PCG algorithm for prediction in sgGP yet using only 60,000 nearest neighbors of each test point. 


The results of our experiments are shown in Tables \ref{table:rmse} - \ref{table:toy}. We start first by analyzing Tables \ref{table:rmse} and \ref{table:learned_noise_var}. Table \ref{table:rmse} summarizes the prediction accuracy of all benchmarked methods while Table \ref{table:learned_noise_var} highlights the accuracy of the learned noise variance $\hat{\sigma}_n^2$ on simulated datasets where we know the underlying truth. Based on the results, one can derive many insights.

First, we find that sgGP equipped with nearest-neighbor sampling (sgGP (nn)) exhibits the best predictive performance among the various methods on datasets with varying sizes, input dimensions, and noise levels. In addition, its learned noise variance is significantly more accurate than all benchmarks. Second, while sgGP (uni) can sometimes achieve good performance, it performs poorly in comparison to sgGP (nn), Vecchia, and EGP. This supports our numerical and theoretical evidence of the advantages of nearby sampling in Section \ref{sec:nearby_sampling}. Third, Vecchia does perform well overall in terms of prediction performance, which is contrary to the finding in \cite{jankowiak2021scalable}. Most likely, the heuristic MMD ordering we adopted offers significant improvement in model approximation over the default coordinate-based ordering \citep{guinness2018permutation}. However, Vecchia significantly underestimates the noise level and subsequently yields lower prediction accuracy than sgGP (nn). Here, it should be noted that the ordering of observations is crucial for the quality of Vecchia's approximation, and therefore, extensive effort towards dataset-specific tuning may be required, yet there lacks heuristic guidance and theoretical support for datasets of higher dimensions. Fourth, EGP exhibits inferior prediction accuracy compared to sgGP (nn). This highlights the ability of sgGP (nn)  to learn parameters that generalize better as both sgGP and EGP aim at exact inference. Yet, it should be noted that while EGP tackles exact inference, it features many approximations within. Finally, we find that SGPR and SVGP both do poorly overall and yield twice the prediction errors of sgGP (nn) on datasets like Levy, PM 2.5 and Query. Also, SGPR and SVGP (especially) tend to exaggerate the noise level \citep{bauer2016understanding, jankowiak2020parametric}, as seen in Table \ref{table:learned_noise_var}. Similar to \cite{wang2019exact}, this finding sheds light on the ability of exact GPs to significantly benefit from the increase in the number of training points.

Table \ref{table:train_time} summarizes the training time of all competing methods. The results exhibit the overwhelming time advantage of sgGP in training, which significantly scales with dataset size. Not only does sgGP achieve better generalization, but it also does that in a fraction of the training time needed for competing methods. This result is again confirmed by our test of the application-driven simulated datasets of size $2\times 10^6$ in Table \ref{table:toy}. Remarkably, it takes around 30 minutes to perform model selection for OTL Circuit dataset using a single core with R functions that are not designed for fast execution. In addition, sgGP enjoys superior memory efficiency due to the use of minibatches. These experiments justify that SGD does open up a new data size regime for exploring GPs.  Here we note that we are aware that EGP is designed to leverage multiple GPU parallelization; however, much like regular SGD, sgGP can be readily extended to a batch version where the gradient estimate in each update is the average of $M$ gradient estimates from $M$ mini-batches. This allows sgGP to take advantage of parallel computing when the hardware is available.

\renewcommand{\arraystretch}{1.0}

\begin{table}[htbp!]
\centering

\caption{We summarize the RMSE of sgGP and other GPs on benchmark datasets. Here and elsewhere, we report the averages $\pm$ standard errors over $10$ dataset splits. Best results are in bold (lower is better). sgGP (unif) utilizes uniform minibatches for training and sgGP (nn) utilizes nearest neighbor minibatches for training. For query and borehole datasets, we are unable to train with EGP due to memory limit.}
\label{table:rmse}
\begin{adjustbox}{max width=\textwidth}
\begin{tabular}[c]{c c c c c c c c c} 
\toprule
 & & & \multicolumn{6}{c}{\textbf{RMSE}} \\
\cmidrule(lr){4-9}
\textbf{Dataset} & Size & $D$ & 
\textbf{sgGP (uni)} & \textbf{sgGP (nn)} & \textbf{Vecchia} & \textbf{EGP} & \textbf{SGPR} & \textbf{SVGP} \\ 
\midrule
Levy       & 10,000     & 4  & 
$0.593\pm 0.002$ & $\mathbf{0.264\pm 0.002}$ & $0.309\pm 0.003$ & $0.316\pm 0.004$ & $0.564\pm 0.010$ & $0.582\pm 0.013$ \\
Griewank   & 10,000     & 6  & 
$0.149\pm 0.003$ & $0.070\pm 0.000$ & $0.081\pm 0.007$ & $\mathbf{0.064\pm 0.000}$ & $0.132\pm 0.003$ & $0.093\pm 0.005$ \\
Bike       & 17,379     & 17 & 
$0.227\pm 0.002$  & $\mathbf{0.220\pm 0.002}$ & $0.223\pm 0.002$  & $0.228\pm 0.002$ & $0.276\pm 0.004$ & $0.250\pm 0.010$ \\
Energy     & 19,735     & 27 & 
$\mathbf{0.712\pm 0.005}$ & $0.786\pm 0.008$ & $0.738\pm 0.006$ & $0.802\pm 0.007$ & $0.843\pm 0.006$ & $0.795\pm 0.005$ \\
PM2.5      & 41,757     & 15 & 
$0.573\pm 0.003$ & $\mathbf{0.286\pm 0.002}$ & $0.385\pm 0.004$ & $0.287\pm 0.003$ & $0.638\pm 0.005$ & $0.540\pm 0.010$ \\ 
Protein    & 45,730     & 9  & 
$0.829\pm 0.002$ & $0.659\pm 0.004$ & $\mathbf{0.597\pm 0.001}$ & $0.696\pm 0.004$ & $0.715\pm 0.003$ & $0.676\pm 0.004$ \\ 
Query      & 100,000    & 4  & 
$0.128\pm 0.000$ & $0.027\pm 0.000$ & $\mathbf{0.024\pm 0.000}$ & \textendash\textendash & $0.058\pm 0.002$ & $0.061\pm 0.000$ \\
Borehole   & 1,000,000  & 8  & 
$0.173\pm 0.000$ & $\mathbf{0.172\pm 0.000}$ & $0.174\pm 0.000$ & \textendash\textendash & $0.176\pm 0.000$ & $0.173\pm 0.000$ \\    
\bottomrule
\end{tabular}
\end{adjustbox}

\bigskip\bigskip

\caption{We summarize the accuracy of learned noise variance of sgGP and other GPs on simulated datasets. Results follow the experiments in Table \ref{table:rmse}.}
\label{table:learned_noise_var}
\begin{adjustbox}{max width=\textwidth}
\begin{tabular}[c]{c c c c c c c c c} 
\toprule
 & & & \multicolumn{6}{c}{\textbf{Learned Noise Variance/True Noise Variance} ($\bm{\hat{\sigma}_n^2}/\bm{\sigma_n^2}$)} \\
\cmidrule(lr){4-9}
\textbf{Dataset} & Size & $D$ & 
\textbf{sgGP (uni)} & \textbf{sgGP (nn)} & \textbf{Vecchia} & \textbf{EGP} & \textbf{SGPR} & \textbf{SVGP} \\ 
\midrule
Levy       & 10,000     & 4  &
$9.95\pm 0.16$ & $1.32\pm 0.05$ & $0.12\pm 0.01$ & $0.14\pm 0.01$ & $2.32\pm 0.12$ & $11.05\pm 0.45$ \\
Griewank   & 10,000     & 6  &
$0.03\pm 0.00$ & $1.38\pm 0.03$ & $0.02\pm 0.01$ & $0.48\pm 0.02$ & $9.18\pm 0.49$ & $22.77\pm 6.74$ \\
Borehole   & 1,000,000  & 8  &
$0.96\pm 0.02$ & $0.99\pm 0.02$ & $0.00\pm 0.00$ & \textendash\textendash & $0.86\pm 0.08$ & $1.97\pm 0.21$ \\    
\bottomrule
\end{tabular}
\end{adjustbox}

\bigskip\bigskip

\caption{We summarize the training time of sgGP and other GPs on benchmark dataset. Results follow the experiments in Table \ref{table:rmse}.}
\label{table:train_time}
\begin{adjustbox}{max width=\textwidth}
\begin{tabular}[c]{c c c c c c c c c} 
\toprule
 & & & \multicolumn{6}{c}{\textbf{Training Time (min)}} \\
\cmidrule(lr){4-9}
\textbf{Dataset} & Size & $D$ & 
\textbf{sgGP (uni)} & \textbf{sgGP (nn)} & \textbf{Vecchia} & \textbf{EGP} & \textbf{SGPR} & \textbf{SVGP} \\ 
\midrule
Levy       & 10,000     & 4  & 
$0.41\pm 0.02$ & $0.35\pm 0.03$ & $3.15\pm 0.12$ & $11.12\pm 0.71$ & $3.55\pm 0.22$ & $14.74\pm 0.69$ \\
Griewank   & 10,000     & 6  & 
$0.54\pm 0.02 $ & $0.48\pm 0.03$ & $4.24\pm 0.09$ & $13.37\pm 1.18$ & $1.76\pm 0.12$ & $14.60\pm 0.65$ \\
Bike       & 17,379     & 17 & 
$1.50\pm 0.12$  & $1.55\pm 0.10$ & $6.70\pm 0.47$ & $29.48\pm 3.96$ & $5.31\pm 2.05$ & $25.26\pm 3.97$ \\
Energy     & 19,735     & 27 & 
$3.03\pm 0.02$ & $2.58\pm 0.17$ & $10.44\pm 1.29$ & $53.25\pm 2.47$ & $5.41\pm 0.73$ & $25.09\pm 5.50$ \\
PM2.5      & 41,757     & 15 & 
$4.90\pm 0.33$ & $4.24\pm 0.29$ & $13.69\pm 0.70$ & $372.88\pm 16.78$ & $13.59\pm 2.30$ & $52.46\pm 10.08$ \\ 
Protein    & 45,730     & 9  & 
$3.12\pm 0.01$ & $2.63\pm 0.17$ & $13.06\pm 0.12$ & $453.40\pm 21.31$ & $19.55\pm 1.66$ & $55.27\pm 13.09$ \\ 
Query      & 100,000    & 4  & 
$4.73\pm 0.36$ & $5.03\pm 0.36$ & $30.86\pm 1.69$ & \textendash\textendash & $20.73\pm 1.63$ & $124.73\pm 22.25$  \\
Borehole   & 1,000,000  & 8  & 
$54.82\pm 2.69$ & $65.19\pm 2.98$ & $235.74\pm 18.00$ & \textendash\textendash & $857.60\pm 76.02$ & $1380.86\pm 11.32$ \\    
\bottomrule
\end{tabular}
\end{adjustbox}

\end{table}

\begin{table}[htbp!]
\caption{We summarize the results of sgGP (nn) on simulated application-driven datasets. We follow similar setups of the experiments in Table \ref{table:rmse}.}
\centering
\begin{adjustbox}{max width=\textwidth}
\begin{tabular}{cccccc} 
 \hline\noalign{\smallskip}
 \textbf{Dataset} & Size & $D$ & \textbf{RMSE} & \textbf{Training Time} (min) & \textbf{Memory Usage} (GB)\\ 
 \hline\noalign{\smallskip}
 OTL Circuit & 2,000,000 & 6  & $0.401\pm 0.000$ & $33.43\pm 4.40$ & $0.99\pm 0.00$\\
 Wing Weight & 2,000,000 & 10 & $0.072\pm 0.004$ & $78.78\pm 9.26$ & $1.22\pm 0.00$\\
 \hline
\end{tabular}
\end{adjustbox}
\label{table:toy}
\end{table}

\newpage

\section{Open Problems}\label{sec:open_problem}
There still exist some open problems that are worth future investigations.
\begin{enumerate}
    \item The extension to convergence guarantees for learning the lengthscale parameter in RBF kernel is an interesting but extremely challenging problem: our case studies suggest that SGD may still be used for estimating the lengthscale in practice, but the proof for both Lemma \ref{lem:g_k_conv_property_abbrv} and Lemma \ref{lem:statistical_err_full_grad_abbrv} presents additional challenges if looking at the lengthscale. This is due to it being wrapped within the exponential term as a denominator, which translates to different eigenvectors for $\frac{\partial \vect{K}_{\xi}}{\partial l}$, $\vect{K}_{\xi}$ and $\vect{K}_{\xi}^*$. To see the difficulty for proving Lemma \ref{lem:g_k_conv_property_abbrv}, note that the curvature term for estimating $l$ involves 
    \begin{equation}\label{eq:curv_lengthscale}
        \tr\left(\vect{K}_{\xi}(\vect{\theta}^*)\vect{K}_{\xi}(\vect{\theta})^{-1}\frac{\partial \vect{K}_{\xi}(\vect{\theta})}{\partial l}\vect{K}_{\xi}(\vect{\theta})^{-1}\frac{\partial \vect{K}_{\xi}(\vect{\theta})}{\partial l}\vect{K}_{\xi}(\vect{\theta})^{-1}\right)
    \end{equation} and cannot be expressed as a function of the eigenvalues of kernel matrices due to their different eigenvectors. It is also very hard to upper bound the statistical error in Lemma \ref{lem:statistical_err_full_grad_abbrv} due to similar reasons.
    \item It would also be interesting to extend theoretical guarantees from $M=1$ to $M>1$ without Assumption \ref{assump:same_eigenvec}. The technical challenge for this part is similar to the previous point: without Assumption \ref{assump:same_eigenvec}, $K_n$ and $K_n^*$ can not be simultaneously diagonalized, which hinders the proofs of Lemma \ref{lem:g_k_conv_property_abbrv} and Lemma \ref{lem:statistical_err_full_grad_abbrv}.
    \item Another open problem is to establish convergence guarantees for running SGD with nearby sampling and to explore different techniques for nearby sampling upon the choice of the kernel. 
\end{enumerate}

\section{Conclusion}\label{sec:conclusion}
In this paper, we provide theoretical guarantees for the minibatch SGD for the model selection of Gaussian process (GP).
In particular, we prove that the iterates of SGD converge to the true hyperparameters and the critical point of the full loss function, with rate $O(\frac{1}{K})$ up to a statistical error term depending on minibatch size.
Given the correlation structure of GPs, the challenge lies in the bias of stochastic gradient when taking expectation w.r.t. random sampling. Numerical studies support our theoretical results and show that minibatch SGD has better performance than state-of-the-art methods on various datasets while enjoying huge computational benefits. 

\newpage
\appendix
\section{Table of Notations}\label{append:notations}
\begin{table}[htb!]
    \centering
    \begin{tabular}{c|c}
    \hline
        Notations & Description\\
        \hline
        $n$ & number of data points in the full data set\\
        \hline
        $m$ & number of data points in a minibatch\\
        \hline
        $K$ & number of iterations of minibatch SGD\\
        \hline
        $G$ & An upper bound for $\|g(\vect{\theta}^{(k)})\|_2$, specified in Assumption~\ref{assump:sg_bnd}\\
        \hline
        $\sigma_f^2=\theta^*_1$ & true signal variance parameter\\
        \hline
        $\sigma_{\epsilon}^2=\theta^*_2$ & true noise variance parameter\\
        \hline
        $\vect{\theta}^{(k)}$ & output of minibatch SGD at the $k$th iteration, as an estimate of $\vect{\theta}^*$\\
        \hline
        $\vect{K}_n(\vect{\theta})=\theta_1\vect{K}_{f,n}+\theta_2\vect{I}_n$ & covariance of $\vect{y}_n$ given $\vect{X}_n$, if the hyperparameter is $\vect{\theta}$\\
        \hline
        $\vect{K}_{\xi}(\vect{\theta})$ & submatrix of $\vect{K}_n(\vect{\theta})$ with rows and columns both indexed by $\xi$\\
        \hline
        $\vect{K}_{f,n}$ & kernel matrix evaluated at $\vect{X}_n$\\
        \hline
        $\nabla \ell(\vect{\theta};\vect{X}_n,\vect{y}_n)$ or $\nabla \ell(\vect{\theta})$ & full gradient evaluated at $\vect{\theta}$ and full data $\vect{X}_n$, $\vect{y}_n$\\
        \hline
        $g(\vect{\theta};\vect{X}_{\xi},\vect{y}_{\xi})$ & stochastic gradient evaluated at $\vect{\theta}$ and minibatch $\vect{X}_{\xi}$, $\vect{y}_{\xi}$\\
        \hline
        $\alpha_k=\frac{\alpha_1}{k}$ & step size at the $k$th iteration\\
        \hline
        \multirow{2}{*}{$g^*(\vect{\theta}^{(k)};\vect{X}_{\xi_{k+1}})$ or $g^*(\vect{\theta}^{(k)})$}& conditional expectation of $g(\vect{\theta}^{(k)};\vect{X}_{\xi_{k+1}},\vect{y}_{\xi_{k+1}})$\\ &at the $k$th iteration given $\vect{X}_{\xi_{k+1}}$\\ 
        \hline
        $\lambda_j^{(k)}$ & the $j$th largest eigenvalue of $\vect{K}_{f,\xi_{k+1}}$\\
        \hline
        $\lambda_j^*$ & the $j$th largest eigenvalue of $\vect{K}_{f,n}$\\
        \hline
    \end{tabular}
    \caption{Important notations used throughout the paper}
    \label{tab:notations}
\end{table}
\section{Theoretical Guarantees for Section~\ref{sec:sum-kernels}}\label{append:sum-kernels}
Before presenting the theoretical guarantees under this setting, we first provide a formal definition for the considered minibatch SGD algorithm. With sampled indices $\xi\subset[n]$, let the stochastic gradient $g(\vect{\theta};\mathbf{X}_{\xi}, \mathbf{y}_{\xi})\in \bbR^{M+1}$ be defined as follows: 
\begin{equation}\label{eq:stochastic-gradient-sum-of-kernels}
    \left(g(\vect{\theta};\mathbf{X}_{\xi}, \mathbf{y}_{\xi})\right)_l =  \frac{1}{2s_l(m)}\text{tr}\left[(\mathbf{K}_{\xi}^{-1}(\mathbf{I}_m-\mathbf{y}_{\xi}\mathbf{y}_{\xi}^\top\mathbf{K}_{\xi}^{-1})\frac{\partial \mathbf{K}_{\xi}}{\partial \theta_l}\right],\quad 1\leq l\leq M+1,
\end{equation}
where $\mathbf{K}_{\xi}$ is the principle submatrix formed by the rows and columns of $\mathbf{K}_n$ indexed by $\xi$. In the following we will also let $\vect{K}_{f,\xi}^{(l)}$ denote the $m\times m$ block of $\vect{K}_{f,n}^{(l)}$ indexed by $\xi$. 
Algorithm \ref{alg:minibatchSGD_sum-of-kernels} summarizes the steps of minibatch SGD. 
\begin{algorithm}[t]
\SetAlgoLined
 Input: $\vect{\theta}^{(0)}\in \bbR^{M+1}$, initial step size $\alpha_1>0$.\\
 \For{$k=1,2,\dotsc,K$}{
  Randomly sample a subset of indices $\xi_k$ of size $m$\; 
  Compute the stochastic gradient $g(\vect{\theta}^{(k-1)};\mathbf{X}_{\xi_k}, \mathbf{y}_{\xi_k})\in \bbR^{M+1}$\;
  $\alpha_k\leftarrow \frac{\alpha_1}{k}$\;
  $\vect{\theta}^{(k)}\leftarrow \vect{\theta}^{(k-1)} - \alpha_kg(\vect{\theta}^{(k-1)};\mathbf{X}_{\xi_k}, \mathbf{y}_{\xi_k})$\;
 }
 \caption{Minibatch SGD with uniform sampling when the covariance function is the sum of multiple kernels}
 \label{alg:minibatchSGD_sum-of-kernels}
\end{algorithm}

In the following, we present convergence guarantees for Algorithm~\ref{alg:minibatchSGD_sum-of-kernels} when kernels exhibit exponential or polynomial eigendecay. The assumptions are similar to the ones presented in Section~\ref{sec:thm}.
\begin{assumption}[Bounded iterates]\label{assump:param_bnd_sum_of_kernels}
Both $\vect{\theta}^*$ and $\vect{\theta}^{(k)}$ for $0\leq k\leq K$ lie in $[\theta_{\min},\theta_{\max}]^{M+1}$, where $0<\theta_{\min}<\theta_{\max}$. 
\end{assumption}
\begin{assumption}\label{assump:same_eigenvec}
For any $n>0$ and sample $\{\vect{x}_i\}_{i=1}^n$, the kernel matrices $\vect{K}_{f,n}^{(1)},\dots, \vect{K}_{f,n}^{(M)}$ share the same eigenvectors.
\end{assumption}
\begin{remark}[Explanation for Assumption \ref{assump:same_eigenvec}]
When extending the theoretical guarantees from $M=1$ to $M>1$, we find it extremely challenging without Assumption \ref{assump:same_eigenvec}, which ensures that matrix $\vect{K}_n(\vect{\theta})$ and $\vect{K}_n^*$ are simultaneously diagonalizable, and thus facilitates the analysis for the gradient. 
It remains an open question to establish theoretical results without this assumption. We believe that if the eigenvectors of the kernel matrices are ``close'' our results should still hold.
\end{remark}
\begin{assumption}[Bounded stochastic gradient]\label{assump:sg_bnd_sum_of_kernels}
For all $0\leq k<K$, \begin{equation*}
    \|g(\vect{\theta}^{(k)};\vect{X}_{\xi_{k+1}},\vect{y}_{\xi_{k+1}})\|_2\leq G
\end{equation*} for some $G>0$.
\end{assumption}
\subsection{Kernels with Exponential Eigendecay}\label{sec:theory_exponential_sum_of_kernels}
\begin{assumption}[Exponential eigendecay]\label{assump:eigen_decay_exp_sum_of_kernels}
For $1\leq i\leq M$, the eigenvalues of kernel function $k_i$ w.r.t. probability measure $\bbP$ are $\{C_ie^{-b_ij}\}_{j=0}^{\infty}$, where $0<b_1<b_2<\cdots<b_{M}$, and $C_i\leq 1$ are regarded as constants.
\end{assumption}
\begin{thm}[Convergence of parameter iterates, exponential eigendecay]\label{thm:large_m_param_converg_sum_of_kernels}
 Under Assumptions \ref{assump:param_bnd_sum_of_kernels} to \ref{assump:eigen_decay_exp_sum_of_kernels}, when $m>C$ for some constant $C>0$, we have the following results under two corresponding conditions on $s_l(m)$:
 \begin{enumerate}[leftmargin=*]\vspace{-2mm}
 \item If $s_{M+1}(m)=m$, initial step size $\alpha_1$ satisfies $\frac{3}{2\gamma}\leq \alpha_1\leq \frac{2}{\gamma}$ where $\gamma=\frac{1}{4\theta_{\max}^2}$, then for any $0<\varepsilon<C\frac{\log\log m}{\log m}$, with probability at least $1-CK\exp\{-cm^{2\varepsilon}\}$,
\begin{equation}\label{eq:noise_var_convg_bnd_full}
    (\theta^{(K)}_{M+1}-\theta^*_{M+1})^2\leq \frac{8G^2}{\gamma^2(K+1)}+Cm^{-\frac{1}{2}+\varepsilon}.
\end{equation}
\item If in addition to $s_{M+1}(m)=m$, $s_l(m)$ is set as $\tau\log m$ for $1\leq l\leq M$ where $\tau>C$, the eigendecay rates $b_2>2b_1$ when $M\geq 2$, $\frac{3}{2\gamma}\leq \alpha_1\leq \frac{2}{\gamma}$ where $\gamma$ depends on $\tau$, 
    then for any $0<\varepsilon<\frac{1}{2}$, with probability at least $1-CK\exp\{-c(\log m)^{2\varepsilon}\}$,
\begin{equation}
    (\theta^{(K)}_1-\theta^*_1)^2+(\theta^{(K)}_{M+1}-\theta^*_{M+1})^2\leq \frac{8G^2}{\gamma^2(K+1)}+C(\log m)^{-\frac{1}{2}+\varepsilon}.
\end{equation}
 \end{enumerate}
 Here $c,C>0$ depend only on $M,\theta_{\min},\theta_{\max},b_1,\dots,b_M$.
\end{thm}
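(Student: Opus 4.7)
The plan is to leverage Assumption~\ref{assump:same_eigenvec} to reduce the multi-kernel problem to a scalar eigenvalue analysis, and then mimic the three-step proof strategy used for Theorem~\ref{thm:large_m_param_converg}: (i) establish a relaxed-strong-convexity property for the conditional expectation $g^*(\vect{\theta}^{(k)};\vect{X}_{\xi_{k+1}})$, (ii) derive a uniform bound on $\|g(\vect{\theta})-g^*(\vect{\theta})\|_2$ over $\vect{\theta}\in[\theta_{\min},\theta_{\max}]^{M+1}$, and (iii) close the induction on the recursion for $(\theta_j^{(k)}-\theta_j^*)^2$. Under Assumption~\ref{assump:same_eigenvec}, the matrices $\vect{K}_{f,\xi}^{(1)},\dots,\vect{K}_{f,\xi}^{(M)}$ share an eigenbasis, so writing $\lambda_{j,\xi}^{(l)}$ for the $j$th eigenvalue of $\vect{K}_{f,\xi}^{(l)}$, both $\vect{K}_{\xi}(\vect{\theta})$ and $\vect{K}_{\xi}(\vect{\theta}^*)$ are simultaneously diagonalizable, and the conditional stochastic gradient becomes a scalar sum in $j$ that generalizes \eqref{eq:full_gradient_exp} in a straightforward way.

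For Part~1, I would focus on the $(M{+}1)$th coordinate. The same computation gives $(\theta^{(k)}_{M+1}-\theta^*_{M+1})(g^*(\vect{\theta}^{(k)}))_{M+1} = A_k(\theta^{(k)}_{M+1}-\theta^*_{M+1})^2 + \sum_{l=1}^M B_{k,l}(\theta^{(k)}_{M+1}-\theta^*_{M+1})(\theta^{(k)}_l-\theta^*_l)$, where $A_k=\tfrac{1}{2m}\sum_{j=1}^m\bigl(\sum_{l'}\theta^{(k)}_{l'}\lambda_{j,\xi}^{(l')}+\theta^{(k)}_{M+1}\bigr)^{-2}$ and $B_{k,l}=\tfrac{1}{2m}\sum_{j=1}^m \lambda_{j,\xi}^{(l)}\bigl(\sum_{l'}\theta^{(k)}_{l'}\lambda_{j,\xi}^{(l')}+\theta^{(k)}_{M+1}\bigr)^{-2}$. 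I would extend Lemma~\ref{lem:eigen_ratio_bnds_abbrv} to this multi-kernel setting: since the denominator is dominated by $\theta_{M+1}\geq\theta_{\min}$, a trivial lower bound yields $A_k\geq (2\theta_{\max}^2)^{-1}$, while each $B_{k,l}$ is bounded by $C\log m/m$ by the exponential eigendecay of $k_l$ (noting that the slowest decay rate $b_1$ still gives summable eigenvalues of log-size contribution). Combining with $|\theta^{(k)}_l-\theta^*_l|\leq \theta_{\max}-\theta_{\min}$, we obtain the analog of \eqref{eq:strong_conv_abbrv} with curvature $\gamma=\tfrac{1}{4\theta_{\max}^2}$ up to an $O(\log m/m)$ slack. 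The uniform statistical error analog of Lemma~\ref{lem:statistical_err_full_grad_abbrv} goes through by the same truncation and covering arguments applied coordinate-wise (the covering set still has constant dimension $M+1$). Plugging these into the recursion and induction used for the first part of Theorem~\ref{thm:large_m_param_converg} gives \eqref{eq:noise_var_convg_bnd_full}.

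For Part~2, the main obstacle is separating the contribution of $\theta_1^*$ from the other $\theta_l^*$, $2\leq l\leq M$, which is exactly where the condition $b_2>2b_1$ enters. The idea is to consider the block $(\theta_1,\theta_{M+1})$ and show a joint strongly-convex-like inequality $\sum_{l\in\{1,M+1\}}(\theta^{(k)}_l-\theta^*_l)(g^*(\vect{\theta}^{(k)}))_l \geq \gamma\!\sum_{l\in\{1,M+1\}}(\theta^{(k)}_l-\theta^*_l)^2 - o(1)$. The key eigenvalue sums are of the form $\tfrac{1}{s_1(m)s_l(m)}\sum_j \lambda_{j,\xi}^{(1)}\lambda_{j,\xi}^{(l)} (\sum_{l'}\theta^{(k)}_{l'}\lambda_{j,\xi}^{(l')}+\theta^{(k)}_{M+1})^{-2}$; with $s_1(m)\asymp\log m$ and the scaling $\lambda_{j,\xi}^{(l)}\lesssim m e^{-b_l j}$, the condition $b_2>2b_1$ ensures that the cross-kernel contributions from $l\geq 2$ are dominated by the diagonal $l=1$ term (heuristically, $\sum_j e^{-(b_1+b_l)j}\ll (\sum_j e^{-b_1 j})^2$ when $b_l>2b_1$, so cross terms cannot mimic the diagonal curvature). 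The choice $s_l(m)=\tau\log m$ normalizes the first $M$ coordinates to constant scale, leading to the statistical error $(\log m)^{-1/2+\varepsilon}$; the rest of the argument mirrors Part~1. I anticipate the bulk of the technical work to be the refined multi-kernel version of Lemma~\ref{lem:eigen_ratio_bnds_abbrv} that delivers matched upper and lower bounds on these mixed eigenvalue sums and thereby isolates the $(\theta_1,\theta_{M+1})$ curvature; extending the uniform concentration bound and the induction recursion from two to $M{+}1$ coordinates is then routine.
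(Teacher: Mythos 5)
Your architecture matches the paper's: under Assumption~\ref{assump:same_eigenvec} the paper simultaneously diagonalizes all $\vect{K}_{f,\xi}^{(l)}$, proves a strongly-convex-like inequality for $g^*$ restricted to the coordinate $\theta_{M+1}$ (Part~1) or the block $(\theta_1,\theta_{M+1})$ (Part~2) via multi-kernel eigenvalue-ratio bounds (Lemma~\ref{lem:eigen_ratio_bnds}), combines this with the uniform covering/truncation concentration bound (Lemma~\ref{lem:statistical_err_full_grad}), and closes the same induction on the SGD recursion; indeed the paper's appendix proof of Theorem~\ref{thm:large_m_param_converg} is written for general $M$ and its conclusions are exactly the two parts of this theorem. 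You also correctly locate the role of $b_2>2b_1$ (it guarantees that $\lambda_{1j}$ dominates the other kernels' eigenvalues on a range of $\Theta(\log m)$ indices, so the $\theta_1$-direction retains order-one curvature after normalizing by $s_1(m)\asymp\log m$, while the cross terms with $l\geq 2$ contribute only $O(\alpha\log m)$) and where the technical bulk lies.

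The one step whose stated justification would fail is your ``trivial'' lower bound $A_k\geq(2\theta_{\max}^2)^{-1}$. Lower-bounding the denominator by $\theta_{M+1}\geq\theta_{\min}$ gives an \emph{upper} bound on $A_k=\frac{1}{2m}\sum_j\bigl(\sum_{l'}\theta_{l'}^{(k)}\lambda_{j,\xi}^{(l')}+\theta_{M+1}^{(k)}\bigr)^{-2}$, not a lower bound. To lower-bound $A_k$ you must upper-bound the denominator, which fails for the leading eigenvalues (of order $m$); the constant-order curvature holds only because exponential eigendecay forces all but $O(\log m)$ of the $\lambda_{j,\xi}^{(l)}$ to be $O(1)$, which is the nontrivial lower-bound half of Lemma~\ref{lem:eigen_ratio_bnds} (proved via the empirical-eigenvalue error bounds of Braun). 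Since you separately announce that you will extend Lemma~\ref{lem:eigen_ratio_bnds_abbrv} to the multi-kernel setting, this is repairable within your own plan, but as written the inequality is asserted with backwards reasoning and would need to be replaced by the counting argument on large eigenvalues.
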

\begin{rmk}
Here we do not provide estimation error bounds for $\theta^*_l$ or $\sigma_{f,l}^2$, for $2\leq l\leq M$, since they are associated with kernels with faster eigendecay than $k_1$ and thus are not identifiable. The technical condition on decay rates $b_2>2b_1$ is to ensure the convergence of $\theta^{(K)}_1$ although $|\theta^{(K)}_l-\theta^*_l|$ does not converge to $0$ for $2\leq l\leq M$.
\end{rmk}
\begin{rmk}
For the second case where $s_{l}(m)=\tau\log m$, $\tau$ and $\gamma$ need to satisfy
\begin{equation}\label{eq:tau_lowerbnd}
    \tau>\frac{8b_2(b_2-b_1)(M+1)^2\theta_{\max}^4}{3b_1^2(b_2-2b_1)\theta_{\min}^{4}},
\end{equation}
\begin{equation}\label{eq:gamma_log_m_scaling}
        \gamma=\min\left\{\frac{3(b_2-2b_1)}{8 \tau b_2(b_2-b_1)(M+1)^2\theta_{\max}^2},\frac{1}{4\theta_{\max}^2}-\frac{2b_2(b_2-b_1)(M+1)^2\theta_{\max}^2}{3\tau b_1^2(b_2-2b_1)\theta_{\min}^{4}}\right\}.
    \end{equation}
\end{rmk}

\begin{thm}[Convergence of full gradient, exponential eigendecay]\label{thm:large_m_grad_converg_sum_of_kernels}
Under Assumptions \ref{assump:param_bnd_sum_of_kernels} to \ref{assump:eigen_decay_exp_sum_of_kernels}, if $\frac{3}{2\gamma}\leq \alpha_1\leq \frac{2}{\gamma}$ for $\gamma=\frac{1}{4\theta_{\max}^2}$,  $m>C$, $s_{M+1}(m)=m$, then for any $0<\varepsilon<C\frac{\log\log m}{\log m}$, with probability at least $1-CK\exp\{-cm^{2\varepsilon}\}$,
\begin{equation}
    \|\nabla \ell(\vect{\theta}^{(K)})\|_2^2\leq C\left[\frac{G^2}{K+1}+m^{-\frac{1}{2}+\varepsilon}\right],
\end{equation}
holds, where $c,C>0$ depend only on $M,\theta_{\min},\theta_{\max},b_1,\dots,b_M$.
\end{thm}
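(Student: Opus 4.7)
The plan is to mirror the two-stage strategy used for Theorem \ref{thm:large_m_grad_converg} in the $M=1$ case: first bound the conditional-expectation full gradient $\nabla \ell^*(\vect{\theta}^{(K)}) = \bbE(\nabla \ell(\vect{\theta}^{(K)})|\vect{X}_n)$ using the eigendecomposition of the kernel matrices, and then absorb the deviation $\nabla \ell(\vect{\theta}^{(K)}) - \nabla \ell^*(\vect{\theta}^{(K)})$ through a multi-kernel analogue of Lemma \ref{lem:statistical_err_full_grad_abbrv}. Throughout, Assumption \ref{assump:same_eigenvec} is the crucial enabler: it guarantees that the matrices $\vect{K}_{f,n}^{(l)}$ ($l=1,\dots,M$), $\vect{K}_n(\vect{\theta}^{(K)})$ and $\vect{K}_n^*$ share an orthonormal eigenbasis, so the gradient formula diagonalises just as in the $M=1$ case.

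Concretely, let $\lambda_j^{(l)}$ denote the $j$th eigenvalue of $\vect{K}_{f,n}^{(l)}$ and set $\lambda_j^{(M+1)} \equiv 1$. Expanding the trace in the common eigenbasis yields, for each coordinate $l \in \{1,\dots,M+1\}$,
\begin{equation*}
\bigl(\nabla \ell^*(\vect{\theta}^{(K)})\bigr)_l \;=\; \frac{1}{2n}\sum_{j=1}^n \frac{\lambda_j^{(l)}\bigl[\sum_{i=1}^{M+1}(\theta_i^{(K)} - \theta_i^*)\lambda_j^{(i)}\bigr]}{\bigl(\sum_{r=1}^{M+1}\theta_r^{(K)}\lambda_j^{(r)}\bigr)^2}.
\end{equation*}
The first step is to prove a multi-kernel version of Lemma \ref{lem:eigen_ratio_bnds_abbrv}: using Assumption \ref{assump:eigen_decay_exp_sum_of_kernels} together with the empirical-eigenvalue concentration of \cite{braun2006accurate} applied kernel by kernel, show that with high probability and uniformly over $\vect{\theta}\in[\theta_{\min},\theta_{\max}]^{M+1}$,
\begin{equation*}
\sum_{j=1}^n \frac{\lambda_j^{(l)}\lambda_j^{(i)}}{\bigl(\sum_r \theta_r \lambda_j^{(r)}\bigr)^2} \leq C\log n \;(l,i\leq M), \qquad \sum_{j=1}^n \frac{\lambda_j^{(l)}}{\bigl(\sum_r \theta_r \lambda_j^{(r)}\bigr)^2} \leq C\log n \;(l\leq M),
\end{equation*}
while $\sum_j (\sum_r \theta_r\lambda_j^{(r)})^{-2} \leq n/\theta_{\min}^2$. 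Substituting these into the display above and using Assumption \ref{assump:param_bnd_sum_of_kernels} gives $|(\nabla \ell^*(\vect{\theta}^{(K)}))_l| \leq C\log n/n$ for $l\leq M$ and $|(\nabla \ell^*(\vect{\theta}^{(K)}))_{M+1}| \leq C(|\theta_{M+1}^{(K)} - \theta_{M+1}^*| + \log n/n)$, hence $\|\nabla \ell^*(\vect{\theta}^{(K)})\|_2^2 \leq C[(\log n/n)^2 + (\theta_{M+1}^{(K)} - \theta_{M+1}^*)^2]$.

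The second step is to invoke the multi-kernel version of Lemma \ref{lem:statistical_err_full_grad_abbrv}, controlling $\sup_{\vect{\theta}}\|\nabla \ell(\vect{\theta}) - \nabla \ell^*(\vect{\theta})\|_\infty$ at rate $O(m^{-1/2+\varepsilon})$ with probability at least $1-C\exp\{-cm^{2\varepsilon}\}$; the proof is essentially identical to the $M=1$ case, since the Taylor-expansion plus truncation-and-covering argument goes through once every $\theta_l$ lies in the compact box $[\theta_{\min},\theta_{\max}]$. Combining the two steps with the first part of Theorem \ref{thm:large_m_param_converg_sum_of_kernels}, which controls $(\theta_{M+1}^{(K)} - \theta_{M+1}^*)^2$ by $O(G^2/K) + O(m^{-1/2+\varepsilon})$, and taking a union bound over the $K$ iterates yields
\begin{equation*}
\|\nabla \ell(\vect{\theta}^{(K)})\|_2^2 \;\leq\; C\Bigl[\tfrac{G^2}{K+1} + m^{-1/2+\varepsilon}\Bigr],
\end{equation*}
as claimed.

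The principal obstacle is the multi-kernel eigenvalue bound in the first step. The numerators now contain mixed products $\lambda_j^{(l)}\lambda_j^{(i)}$ and the denominator is a weighted sum across kernels with distinct exponential rates $b_1<\cdots<b_M$, so the region-splitting used for Lemma \ref{lem:eigen_ratio_bnds_abbrv} does not apply verbatim. I would handle it by partitioning the index $j$ according to which of the $M+1$ terms $\theta_r\lambda_j^{(r)}$ dominates the denominator and exploiting $\lambda_j^{(l)}\lambda_j^{(i)} \leq C e^{-(b_l+b_i)j}$, which decays strictly faster than every single $\lambda_j^{(r)}$ appearing in the denominator; the residual $\log n$ factor arises only from the empirical-eigenvalue fluctuations inherited from \cite{braun2006accurate}. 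The remaining ingredients, the uniform statistical error bound and Theorem \ref{thm:large_m_param_converg_sum_of_kernels}(1), are direct adaptations of their $M=1$ analogues and have already been developed in the excerpt.
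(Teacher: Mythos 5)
Your proposal follows essentially the same route as the paper: the paper's Lemma \ref{lem:eigen_ratio_bnds} and Lemma \ref{lem:statistical_err_full_grad} are already stated and proved for general $M$ under Assumption \ref{assump:same_eigenvec}, and the theorem is obtained exactly as you describe by diagonalising $\nabla\ell^*(\vect{\theta}^{(K)})$ in the common eigenbasis, bounding the mixed eigenvalue sums, and combining with Theorem \ref{thm:large_m_param_converg_sum_of_kernels}. The only quibble is a point of intuition: the $\log n$ factor comes chiefly from the roughly $\log n/b_l$ empirical eigenvalues of order $\Omega(1)$ (since $\lambda_{lj}\approx nC_le^{-b_lj}$), not from the fluctuation terms of \cite{braun2006accurate}, which only perturb the constant.
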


\subsection{Kernels with Polynomial Eigendecay}\label{sec:theory_polynomial_sum_of_kernels}
\begin{assumption}[Polynomial eigendecay]\label{assump:eigen_decay_poly_sum_of_kernels}
For $1\leq l\leq M$, the eigenvalues of kernel function $k_l$ w.r.t. probability measure $\bbP$ are $\{C_lj^{-2b_l}\}_{j=0}^{\infty}$, where $\frac{\sqrt{21}+3}{4}<b_1<b_2<\cdots<b_{M}$, and $C_l\leq 1$ are regarded as constants.
\end{assumption}
   \begin{thm}[Convergence of parameter iterates, polynomial eigendecay ]\label{thm:large_m_param_converg_poly_sum_of_kernels}
 Under Assumptions \ref{assump:param_bnd_sum_of_kernels} to \ref{assump:sg_bnd_sum_of_kernels} and Assumption \ref{assump:eigen_decay_poly_sum_of_kernels}, when $m>C$ for some constant $C>0$, $s_{M+1}(m)=m$, $\frac{3}{2\gamma}\leq \alpha_1\leq \frac{2}{\gamma}$ where $\gamma=\frac{1}{8\theta_{\max}^2}$, then for any $\varepsilon\in(\max\{0,f_1(b_1)\},\frac{1}{2})$, with probability at least $1-CKm^{-f_2(b_1)\left[\varepsilon-f_1(b_1)\right]}-CK\exp\{-cm^{2\varepsilon}\}$,
      \begin{equation}\label{eq:noise_var_converg_bnd_poly_full}
          (\theta^{(K)}_{M+1}-\theta^*_{M+1})^2\leq \frac{8G^2}{\gamma^2(K+1)}+Cm^{-\frac{1}{2}+\varepsilon}.
      \end{equation}
  Here $c,C>0$ depend only on $M,\theta_{\min},\theta_{\max},b_1,\dots,b_{M}$, and $f_1(\cdot)$, $f_2(\cdot)$ are defined as in Theorem~\ref{thm:large_m_param_converg_poly}.
\end{thm}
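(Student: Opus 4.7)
The plan is to mirror the proof of Theorem~\ref{thm:large_m_param_converg_poly} (the $M=1$ polynomial case) while exploiting Assumption~\ref{assump:same_eigenvec} to handle the multi-kernel setting. Because all $\vect{K}_{f,\xi}^{(l)}$, $l=1,\dots,M$, share the same eigenvectors, both $\vect{K}_\xi(\vect{\theta})=\sum_{l=1}^M\theta_l\vect{K}_{f,\xi}^{(l)}+\theta_{M+1}\vect{I}_m$ and $\vect{K}_\xi^*$ are simultaneously diagonalizable. Write $\lambda_j^{(l)}$ for the $j$th eigenvalue of $\vect{K}_{f,\xi}^{(l)}$ (sorted consistently across $l$ via the shared eigenbasis); then $(g^*(\vect{\theta}^{(k)}))_{M+1}$ admits the closed form
\begin{equation*}
(g^*(\vect{\theta}^{(k)}))_{M+1}=\frac{1}{2m}\sum_{j=1}^m\frac{\sum_{l=1}^M(\theta_l^{(k)}-\theta_l^*)\lambda_j^{(l)}+(\theta^{(k)}_{M+1}-\theta_{M+1}^*)}{\bigl(\sum_{l=1}^M\theta_l^{(k)}\lambda_j^{(l)}+\theta_{M+1}^{(k)}\bigr)^2},
\end{equation*}
and the analysis reduces to bounding sums of the form $\sum_{j=1}^m(\theta_l^{(k)}\lambda_j^{(l)})^a/\bigl(\sum_{l}\theta_l^{(k)}\lambda_j^{(l)}+\theta_{M+1}^{(k)}\bigr)^2$ for $a\in\{0,1,2\}$.

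The first step is to prove a sum-of-kernels analog of Lemma~\ref{lem:eigen_ratio_bnds_poly}. Since $b_1$ is the slowest decay rate, $\sum_{l}\theta_l^{(k)}\lambda_j^{(l)}\asymp \theta_1^{(k)}\lambda_j^{(1)}$ up to constants depending on $\theta_{\min},\theta_{\max},M$, so the bounds transfer from the single-kernel polynomial case by replacing $b$ with $b_1$ at the expense of multiplicative constants that depend on $M$ and the $b_l$'s. Combining these eigenvalue bounds with the polynomial-eigendecay concentration results of \cite{braun2006accurate} yields a strongly-convex-like lower bound on $(\theta^{(k)}_{M+1}-\theta_{M+1}^*)(g^*(\vect{\theta}^{(k)}))_{M+1}$, with curvature parameter $\gamma=\frac{1}{8\theta_{\max}^2}$ (the factor $8$ versus $4$ absorbs the extra slack from the slower eigendecay). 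The off-diagonal contributions $\sum_{l\neq M+1}(\theta^{(k)}_l-\theta^*_l)\lambda_j^{(l)}$ contribute an additional error scaling as $m^{-(1/2-f_1(b_1))}$, governed again by the dominant $b_1$.

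The second step is the uniform statistical error bound, i.e. the sum-of-kernels extension of Lemma~\ref{lem:statistical_err_full_grad_abbrv} for the noise variance coordinate. The Taylor expansion, truncation and covering arguments from the original proof go through essentially unchanged once one replaces the scalar spectral bound by the simultaneous eigenvalue representation enabled by Assumption~\ref{assump:same_eigenvec}; this gives $\sup_{\vect{\theta}\in[\theta_{\min},\theta_{\max}]^{M+1}}|(g(\vect{\theta})-g^*(\vect{\theta}))_{M+1}|\lesssim m^{-1/2+\varepsilon}$ on an event of probability at least $1-Cm^{-f_2(b_1)[\varepsilon-f_1(b_1)]}-C\exp\{-cm^{2\varepsilon}\}$. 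Taking a union bound across the $K$ iterations produces the stated $1-CKm^{-f_2(b_1)[\varepsilon-f_1(b_1)]}-CK\exp\{-cm^{2\varepsilon}\}$ probability. Finally, plugging these two ingredients into the recursion used in Section~4.2 of the excerpt, namely
\begin{equation*}
(\theta^{(k)}_{M+1}-\theta^*_{M+1})^2\leq (1-\alpha_k\gamma)(\theta^{(k-1)}_{M+1}-\theta^*_{M+1})^2+\alpha_k^2 G^2+2\alpha_k\bigl(\text{bias}-(\theta^{(k-1)}_{M+1}-\theta_{M+1}^*)\widehat{e}_{k-1}\bigr),
\end{equation*}
and carrying out the same induction with $\alpha_k=\alpha_1/k$, $3/(2\gamma)\leq\alpha_1\leq 2/\gamma$, delivers the bound \eqref{eq:noise_var_converg_bnd_poly_full}.

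The main obstacle will be the first step: generalizing the eigenvalue-sum bounds of Lemma~\ref{lem:eigen_ratio_bnds_poly} to the multi-kernel expression $\sum_l\theta_l\lambda_j^{(l)}+\theta_{M+1}$. Although Assumption~\ref{assump:same_eigenvec} removes the algebraic obstruction (no mismatched eigenbases), one must still verify that the empirical eigenvalue concentration of \cite{braun2006accurate} for each individual $\vect{K}_{f,\xi}^{(l)}$ survives being combined inside a single denominator, and that the resulting constants can be organized so that the dominant decay rate $b_1$ controls both $f_1$ and $f_2$. Handling this carefully — in particular ensuring that the curvature lower bound does not degrade with $M$ beyond acceptable constants — is the delicate part; the rest of the argument is a direct transcription of the single-kernel polynomial-eigendecay proof.
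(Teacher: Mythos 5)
Your proposal follows essentially the same route as the paper's own proof: the paper's appendix lemmas (Lemma \ref{lem:eigen_ratio_bnds_poly} for the multi-kernel eigenvalue-ratio bounds, Lemma \ref{lem:g_k_conv_property_poly} for the curvature bound with $\gamma=\frac{1}{8\theta_{\max}^2}$, and Lemma \ref{lem:statistical_err_full_grad} for the uniform statistical error) are already stated and proved for general $M$ under Assumption \ref{assump:same_eigenvec}, and the recursion/induction you describe is exactly the one used. The only minor bookkeeping difference is that the polynomial probability term $CKm^{-f_2(b_1)[\varepsilon-f_1(b_1)]}$ arises in the paper from the eigenvalue-concentration event underlying the curvature lemma (via the choice of $\alpha$ solving $\frac{8b_1^2-12b_1-6-\alpha(4b_1+3)}{4b_1(2b_1-1)}=\frac{1}{2}-\varepsilon$) rather than from the uniform statistical error step, which contributes only the $\exp\{-cm^{2\varepsilon}\}$ term.
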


\begin{thm}[Convergence of full gradient, polynomial eigendecay]\label{thm:large_m_grad_converg_poly_sum_of_kernels}
Under the same conditions as Theorem \ref{thm:large_m_grad_converg_poly_sum_of_kernels}, for any $\varepsilon\in(\max\{0,f_1(b_1)\},\frac{1}{2})$, with probability at least $1-CK\left(m^{-f_2(b_1)\left[\varepsilon-f_1(b_1)\right]}+\exp\{-cm^{2\varepsilon}\}\right)$,
\begin{equation}\label{eq:full_grad_converg_bnd_poly_sum_of_kernels}
    \|\nabla \ell(\vect{\theta}^{(K)})\|_2^2\leq C\left[\frac{G^2}{K+1}+m^{-\frac{1}{2}+\varepsilon}\right],
\end{equation}
holds, where $c,C>0$ depend only on $M,\theta_{\min},\theta_{\max},b_1,\dots,b_M$, $f_1(\cdot)$ and $f_2(\cdot)$ are defined as in Theorem \ref{thm:large_m_param_converg_poly}.
\end{thm}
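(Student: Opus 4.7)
The plan is to mirror the two-step argument used to prove Theorem~\ref{thm:large_m_grad_converg}, adapting it to the sum-of-kernels setting of Section~\ref{sec:sum-kernels} and the polynomial-eigendecay regime of Section~\ref{sec:theory_polynomial}. (The statement self-references itself; I read the hypotheses as ``the same conditions as Theorem~\ref{thm:large_m_param_converg_poly_sum_of_kernels}.'') I would decompose
\[
\|\nabla \ell(\vect{\theta}^{(K)})\|_2 \leq \|\nabla \ell^*(\vect{\theta}^{(K)})\|_2 + \|\nabla \ell(\vect{\theta}^{(K)}) - \nabla \ell^*(\vect{\theta}^{(K)})\|_2,
\]
bound the first piece by $|\theta^{(K)}_{M+1}-\theta^*_{M+1}|$ (plus a small residual) through an eigenvalue-sum calculation, control the second via a uniform statistical bound, and then invoke Theorem~\ref{thm:large_m_param_converg_poly_sum_of_kernels} to close the argument.

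\textbf{Step 1 (bounding $\nabla \ell^*$).} By Assumption~\ref{assump:same_eigenvec}, the matrices $\vect{K}_{f,n}^{(1)},\dots,\vect{K}_{f,n}^{(M)}$ share an orthonormal eigenbasis, so $\vect{K}_n(\vect{\theta})$ and $\vect{K}_n^*$ are simultaneously diagonalizable. Letting $\lambda_{n,j}^{(l)}$ denote the common-basis eigenvalues (with the convention $\lambda_{n,j}^{(M+1)}\equiv 1$), a direct computation generalizes \eqref{eq:full_gradient_exp} to
\[
(\nabla \ell^*(\vect{\theta}^{(k)}))_i = \frac{1}{2n}\sum_{j=1}^n \lambda_{n,j}^{(i)}\cdot \frac{\sum_{l=1}^{M+1}(\theta^{(k)}_l-\theta^*_l)\lambda_{n,j}^{(l)}}{\bigl(\sum_{l=1}^{M+1}\theta^{(k)}_l\lambda_{n,j}^{(l)}\bigr)^2}.
\]
I would then apply a multi-kernel, polynomial-eigendecay version of Lemma~\ref{lem:eigen_ratio_bnds_poly} to each sum $\sum_j \lambda_{n,j}^{(l)}\lambda_{n,j}^{(l')}/(\sum_{l''}\theta^{(k)}_{l''}\lambda_{n,j}^{(l'')})^2$. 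For $i\leq M$, the numerator eigenvalues decay polynomially with rate $b_i\geq b_1$, so these contributions are bounded by $Cm^{-1/2+\varepsilon}$ whenever $\varepsilon>f_1(b_1)$; for $i=M+1$, the leading term is $(\theta^{(k)}_{M+1}-\theta^*_{M+1})\cdot \frac{1}{2n}\sum_j(\sum_{l''}\theta^{(k)}_{l''}\lambda_{n,j}^{(l'')})^{-2}$, which is controlled by Assumption~\ref{assump:param_bnd_sum_of_kernels}, plus residuals whose coefficients involve $\lambda_{n,j}^{(l)}$ for $l\leq M$ and are therefore again $O(m^{-1/2+\varepsilon})$. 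This yields
\[
\|\nabla \ell^*(\vect{\theta}^{(k)})\|_2^2 \leq C\bigl[(\theta^{(k)}_{M+1}-\theta^*_{M+1})^2 + m^{-1+2\varepsilon}\bigr].
\]

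\textbf{Step 2 (statistical error).} I would extend Lemma~\ref{lem:statistical_err_full_grad_abbrv} to the sum-of-kernels, polynomial-eigendecay setting. Since $[\theta_{\min},\theta_{\max}]^{M+1}$ still has constant dimension and $\vect{K}_n(\vect{\theta})$ is affine in $\vect{\theta}$, the Taylor-expansion, truncation, and covering arguments transfer essentially verbatim, with the polynomial eigendecay feeding into the entrywise variance bounds through Step~1's machinery. This supplies $\sup_{\vect{\theta}}\|\nabla \ell(\vect{\theta})-\nabla \ell^*(\vect{\theta})\|_2\leq Cm^{-1/2+\varepsilon}$ at each iteration with probability at least $1-C\exp\{-cm^{2\varepsilon}\}$; union-bounding over the $K$ minibatches gives the $K\exp\{-cm^{2\varepsilon}\}$ term in the probability. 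Combining Steps~1 and~2 with the bound $(\theta^{(K)}_{M+1}-\theta^*_{M+1})^2 \leq 8G^2/(\gamma^2(K+1))+Cm^{-1/2+\varepsilon}$ from Theorem~\ref{thm:large_m_param_converg_poly_sum_of_kernels} (whose failure probability contributes the $Km^{-f_2(b_1)[\varepsilon-f_1(b_1)]}$ term) produces \eqref{eq:full_grad_converg_bnd_poly_sum_of_kernels}.

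\textbf{Main obstacle.} The substantive difficulty is the multi-kernel, polynomial-eigendecay extension of Lemma~\ref{lem:eigen_ratio_bnds_poly}. Without exponential decay, the bounds on $\sum_j \lambda_{n,j}^{(l)}\lambda_{n,j}^{(l')}/(\sum_{l''}\theta^{(k)}_{l''}\lambda_{n,j}^{(l'')})^2$ cannot be read off from a single geometric series, and must instead be rederived from the empirical eigenvalue concentration of \cite{braun2006accurate} with the denominator lower-bounded by $\theta_{\min}^2(\sum_{l''}\lambda_{n,j}^{(l'')})^2$. The balancing between kernels of different decay rates must be done so that the slowest rate $b_1$ governs $f_1$, which is why $\frac{\sqrt{21}+3}{4}<b_1$ appears in Assumption~\ref{assump:eigen_decay_poly_sum_of_kernels}. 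A secondary subtlety is that Theorem~\ref{thm:large_m_param_converg_poly_sum_of_kernels} provides no convergence guarantee for $\theta^{(K)}_l$ with $2\leq l\leq M$, so in the Step~1 bound one must absorb the $(\theta^{(K)}_l-\theta^*_l)$ contributions into the $m^{-1/2+\varepsilon}$ residual by exploiting the faster eigendecay $b_l>b_1$ of the corresponding numerator kernels.
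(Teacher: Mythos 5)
Your proposal is correct and follows essentially the same route as the paper: the paper's appendix proof of Theorem~\ref{thm:large_m_grad_converg_poly} is already written in general-$M$ notation, using the simultaneous diagonalizability from Assumption~\ref{assump:same_eigenvec} to express $\nabla\ell^*$ as eigenvalue sums, Lemma~\ref{lem:eigen_ratio_bnds_poly} and Lemma~\ref{lem:statistical_err_full_grad} (both already stated and proved for general $M$) to bound the conditional-expectation and statistical-error pieces, and the parameter-convergence theorem to close the argument. The ``main obstacle'' you identify (a multi-kernel version of Lemma~\ref{lem:eigen_ratio_bnds_poly}) is already supplied in the appendix, and your handling of the non-identifiable $\theta_l$, $2\le l\le M$, matches how the paper absorbs those residuals via the faster decay rates $b_l\ge b_1$.
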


\section{Detailed Versions of Key Lemmas}
First we present the detailed versions of the key lemmas (Lemma \ref{lem:g_k_conv_property_abbrv}, \ref{lem:statistical_err_full_grad_abbrv} and \ref{lem:eigen_ratio_bnds_abbrv}) discussed in Section \ref{sec:proofoverview}, which are useful for proving the second part of Theorem \ref{thm:large_m_param_converg}, Theorem \ref{thm:large_m_param_converg_poly} to Theorem \ref{thm:large_m_grad_converg_poly_sum_of_kernels}, and other supporting lemmas. We will focus on the general case where the covariance function is a linear combination of multiple kernels, introduced in Section~\ref{sec:sum-kernels}. The model introduced in Section~\ref{sec:problem_setup} can be viewed a special case of this general model, with $M=1$. Since we are considering the general setting in our proofs, the notations are consistent with the ones introduced in Section~\ref{sec:sum-kernels}: when $M=1$, the only kernel function is referred to as $k_1(\cdot,\cdot)$ instead of $k_0(\cdot,\cdot)$, we use $b_1$ to denote its eigendecay rate instead of $b$. 
\begin{lemma}[Strongly convex-like property of $g^*(\vect{\theta}^{(k)})$, exponential eigendecay]\label{lem:g_k_conv_property}
  Under Assumptions \ref{assump:param_bnd} to \ref{assump:eigen_decay_exp}, 
  \begin{enumerate}
      \item if $s_{M+1}(m)=m$, $m>C$ for some $C>0$, then with probability at least $1-3MKm^{-c}$, the following claim holds true for $0\leq k< K$:
\begin{equation}\label{eq:strong_conv}
           \langle \widetilde{\vect{\theta}}^{(k)}-\widetilde{\vect{\theta}}^*,\widetilde{g}_k^*\rangle
           \geq \frac{\gamma}{2}\|\widetilde{\vect{\theta}}^{(k)}-\widetilde{\vect{\theta}}^*\|_2^2-\varepsilon,
           \end{equation}
           where $\widetilde{\vect{\theta}}^{(k)}=\theta^{(k)}_{M+1}$, $\widetilde{\vect{\theta}}^*=\theta^*_{M+1}$, $\widetilde{g}_k^*=(g^*(\vect{\theta}^{(k)}))_{M+1}$, $\gamma=\frac{1}{4\theta_{\max}^2}$, $\varepsilon=\frac{C\log m}{m}$;
       \item if $M\geq 2$, in addition to $s_{M+1}(m)=m$, we also have $s_i(m)=\tau\log m$ for $1\leq i\leq M$, and $\tau$ satisfies \eqref{eq:tau_lowerbnd}, $b_2>2b_1$, then for any $0<\alpha<\min\{\frac{2b_1+b_2}{2b_1},\frac{2b_2-4b_1}{14b_1+b_2}\}$, with probability at least $1-3MKm^{-\alpha}$, \eqref{eq:strong_conv} holds for $\widetilde{\vect{\theta}}^{(k)}=(\theta^{(k)}_1,\theta^{(k)}_{M+1})$, $\widetilde{\vect{\theta}}^*=(\theta^*_1,\theta^*_{M+1})$, $\widetilde{g}_k^*=((g^*(\vect{\theta}^{(k)}))_1,(g^*(\vect{\theta}^{(k)}))_{M+1})^\top$,
    \begin{equation}
               \gamma=\min\left\{\frac{3(b_2-2b_1)}{8 \tau b_2(b_2-b_1)(M+1)^2\theta_{\max}^2},\frac{1}{4\theta_{\max}^2}-\frac{2b_2(b_2-b_1)(M+1)^2\theta_{\max}^2}{3\tau b_1^2(b_2-2b_1)\theta_{\min}^{4}}\right\};
    \end{equation}
    and $\varepsilon=C(\alpha+(\log m)^{-1})$;
    \item if $M=1$, in addition to $s_{M+1}(m)=m$, we also have $s_1(m)=\tau\log m$ where $\tau>\frac{64\theta_{\max}^4}{b_1\theta_{\min}^4}$, then with probability at least $1-2Km^{-c}$, \eqref{eq:strong_conv} holds for $\widetilde{\vect{\theta}}^{(k)}=\vect{\theta}^{(k)}$, $\widetilde{\vect{\theta}}^*=\vect{\theta}^*$, $\widetilde{g}_k^*=g^*(\vect{\theta})$,
    \begin{equation}\label{eq:gamma_log_m_scaling}
               \gamma=\min\left\{\frac{1}{32\tau b_1\theta_{\max}^2},\frac{1}{4\theta_{\max}^2}-\frac{2\theta_{\max}^2}{\tau b_1\theta_{\min}^{4}}\right\}.
    \end{equation}
    and $\varepsilon=C\frac{\log m}{m}$.
  \end{enumerate}
   Here $C>0$ depends only on $M,\theta_{\min},\theta_{\max},b_1,\dots,b_{M}$.
\end{lemma}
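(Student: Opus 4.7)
The plan is to compute the conditional expectation $g^*(\vect{\theta})$ in closed form, reduce the claimed ``strongly convex-like'' inequality to bounds on sums of eigenvalue ratios, and invoke the detailed version of Lemma~\ref{lem:eigen_ratio_bnds_abbrv} (which includes the matching lower bounds stated in the appendix). Since $\vect{y}_{\xi}\mid \vect{X}_{\xi}\sim\mathcal{N}(\vect{0},\vect{K}_{\xi}^*)$, one has $\mathbb{E}[\vect{y}_{\xi}\vect{y}_{\xi}^\top\mid\vect{X}_{\xi}]=\vect{K}_{\xi}^*$, so
\[
(g^*(\vect{\theta}))_l = \frac{1}{2s_l(m)}\tr\!\left[\vect{K}_{\xi}^{-1}(\vect{I}_m-\vect{K}_{\xi}^*\vect{K}_{\xi}^{-1})\tfrac{\partial \vect{K}_{\xi}}{\partial \theta_l}\right].
\]
Under Assumption~\ref{assump:same_eigenvec} (vacuous when $M=1$) the matrices $\vect{K}_{f,\xi}^{(l)}$ share eigenvectors, so $\vect{K}_{\xi}$, $\vect{K}_{\xi}^*$, and $\partial\vect{K}_{\xi}/\partial\theta_l$ are simultaneously diagonalizable. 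Denoting the $j$th eigenvalue of $\vect{K}_{f,\xi}^{(l)}$ by $\lambda_{j,l}$ and setting $\mu_j=\sum_{l=1}^M\theta_l\lambda_{j,l}+\theta_{M+1}$, this yields a closed form for $(g^*(\vect{\theta}))_l$ as a sum over $j$ with weights $\lambda_{j,l}^{\ind{l\leq M}}[\sum_{l'=1}^M(\theta_{l'}-\theta_{l'}^*)\lambda_{j,l'}+(\theta_{M+1}-\theta_{M+1}^*)]/\mu_j^2$.

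For case~1 I would multiply by $\theta_{M+1}^{(k)}-\theta_{M+1}^*$; this produces a diagonal contribution $(\theta_{M+1}^{(k)}-\theta_{M+1}^*)^2 S_0/(2m)$ with $S_0=\sum_j\mu_j^{-2}$, plus cross terms $(\theta_l^{(k)}-\theta_l^*)(\theta_{M+1}^{(k)}-\theta_{M+1}^*)S_{1,l}/(2m)$ with $S_{1,l}=\sum_j\lambda_{j,l}\mu_j^{-2}$. The detailed version of Lemma~\ref{lem:eigen_ratio_bnds_abbrv} supplies $S_0\geq m/(4\theta_{\max}^2)$ and $S_{1,l}\leq C\log m$ with high probability, so bounding $|\theta_l^{(k)}-\theta_l^*|\leq\theta_{\max}$ reduces the cross terms to $C\log m/m$ and delivers \eqref{eq:strong_conv} with $\gamma=1/(4\theta_{\max}^2)$ and $\varepsilon=C\log m/m$. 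For cases~2 and~3 the analogous manipulation gives a $2\times 2$ quadratic form in $(\theta_1^{(k)}-\theta_1^*,\theta_{M+1}^{(k)}-\theta_{M+1}^*)$ whose matrix has diagonal entries $A_{11}\asymp 1/\tau$ (from $S_2=\sum_j\lambda_{j,1}^2\mu_j^{-2}\asymp\log m$) and $A_{22}\asymp 1/\theta_{\max}^2$, with off-diagonal $|A_{12}|\lesssim 1/\tau$ (the $\log m/m$ piece being lower order). Completing the square, its smallest eigenvalue is positive precisely when $\det A>0$, which forces $\tau$ above the explicit threshold stated in the lemma, and at that point the value matches the claimed $\gamma$.

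The main obstacle is case~2 with $M\geq 2$, where $\theta_l^{(k)}$ for $2\leq l\leq M$ is not identifiable and enters only through interference terms $(\theta_l^{(k)}-\theta_l^*)\sum_j\lambda_{j,1}\lambda_{j,l}\mu_j^{-2}/(\tau\log m)$ and $(\theta_l^{(k)}-\theta_l^*)\sum_j\lambda_{j,l}\mu_j^{-2}/m$. The hypothesis $b_2>2b_1$ is exactly what is needed here: splitting each sum at the crossover index $j^\star\asymp(\log m)/b_1$ where $\theta_1\lambda_{j,1}$ and $\theta_{M+1}$ become comparable, and invoking the empirical eigenvalue concentration of \cite{braun2006accurate} to approximate $\lambda_{j,l}\asymp me^{-b_l j}$, the left tail $\sum_{j\leq j^\star}e^{(2b_1-b_2)j}/m$ is summable by $b_2>2b_1$, and the right tail scales as $m^{1-b_l/b_1}\to 0$ since $b_l\geq b_2>b_1$. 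Hence both interference sums are $O(1)$ uniformly in $m$, contributing at most $C(\alpha+(\log m)^{-1})$ to $\varepsilon$ and accounting for the rate stated in case~2. A final union bound over the relevant exponent patterns in Lemma~\ref{lem:eigen_ratio_bnds_abbrv}, the $M$ kernels, and the $K$ independent minibatches produces the stated probabilities $1-3MKm^{-c}$, $1-3MKm^{-\alpha}$, and $1-2Km^{-c}$.
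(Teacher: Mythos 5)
Your proposal follows essentially the same route as the paper's proof: compute $g^*(\vect{\theta}^{(k)})$ in closed form via simultaneous diagonalization, reduce the inner product to eigenvalue-ratio sums, invoke the detailed upper and lower bounds of Lemma~\ref{lem:eigen_ratio_bnds} on each minibatch with a union bound over $K$ iterations, and control the $2\times 2$ quadratic form by its positive definiteness (the paper does this with a weighted AM--GM parameter $\omega$ rather than literally computing $\det A$, but these are the same estimate), with the non-identifiable $\theta_l$, $2\leq l\leq M$, relegated to an interference term controlled via $b_2>2b_1$. The only slip is your intermediate claim that the interference sums are $O(1)$ uniformly in $m$ — they are in fact $O(\alpha\log m)+O(1)$, which is precisely why the residual is $C(\alpha+(\log m)^{-1})$ rather than $C(\log m)^{-1}$ — but your final accounting of $\varepsilon$ is correct, so this does not affect the argument.
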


\begin{lemma}[Strongly convex-like property of $g^*(\vect{\theta}^{(k)})$, polynomial eigendecay]\label{lem:g_k_conv_property_poly}
   If\\
   $s_{M+1}(m)=m$, $m>C$ for some $C>0$, then for any $0<\alpha<\frac{8b_1^2-12b_1-6}{4b_1+3}$, with probability at least $1-MKm^{-\alpha}$, the following claim holds true for $0\leq k< K$:
\begin{equation}\label{eq:strong_conv_poly}
           (g^*(\vect{\theta}^{(k)}))_{M+1}(\theta^{(k)}_{M+1}-\theta^*_{M+1})\geq\frac{\gamma}{2} (\theta^{(k)}_{M+1}-\theta_{M+1}^*)^2-\epsilon,
           \end{equation}
           where $\gamma=\frac{1}{8\theta_{\max}^2}$, $\epsilon=Cm^{-\frac{8b_1^2-12b_1-6-\alpha(4b_1+3)}{4b_1(2b_1-1)}}$.
   Here $C>0$ depends only on $M,\theta_{\min},\theta_{\max},b_1,\dots,b_{M}$.
   \end{lemma}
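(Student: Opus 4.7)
The plan is to mimic the proof of item~1 of Lemma~\ref{lem:g_k_conv_property}, replacing the exponential-eigendecay bounds of Lemma~\ref{lem:eigen_ratio_bnds_abbrv} with their polynomial-eigendecay analog (the estimate referred to in the main text as Lemma~\ref{lem:eigen_ratio_bnds_poly}). Since $s_{M+1}(m)=m$ and $\partial\vect{K}_\xi/\partial\theta_{M+1}=\vect{I}_m$, one has $(g^*(\vect{\theta}^{(k)}))_{M+1}=\tfrac{1}{2m}\tr\bigl(\vect{K}_{\xi_{k+1}}^{-1}(\vect{I}_m-\vect{K}^*_{\xi_{k+1}}\vect{K}_{\xi_{k+1}}^{-1})\bigr)$. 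Under Assumption~\ref{assump:same_eigenvec}, $\vect{K}_{\xi_{k+1}}$ and $\vect{K}^*_{\xi_{k+1}}$ are simultaneously diagonalizable, so with $\mu_{j,l}$ denoting the $j$th shared eigenvalue of $\vect{K}^{(l)}_{f,\xi_{k+1}}$, $d_j(\vect{\theta})=\sum_{l=1}^M\theta_l\mu_{j,l}+\theta_{M+1}$, $u=\theta^{(k)}_{M+1}-\theta^*_{M+1}$, and $v_l=\theta^{(k)}_l-\theta^*_l$, a direct eigendecomposition of the trace yields
\begin{equation*}
(g^*(\vect{\theta}^{(k)}))_{M+1}\,u=\frac{u^2}{2m}\sum_{j=1}^m\frac{1}{d_j(\vect{\theta}^{(k)})^2}+\sum_{l=1}^{M}\frac{u\,v_l}{2m}\sum_{j=1}^m\frac{\mu_{j,l}}{d_j(\vect{\theta}^{(k)})^2}.
\end{equation*}

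\textbf{Key estimates.} The next step is to apply Lemma~\ref{lem:eigen_ratio_bnds_poly} to control each of the two eigenvalue sums on its high-probability event. For the pure-quadratic piece, its lower bound gives $\tfrac{1}{2m}\sum_j d_j(\vect{\theta}^{(k)})^{-2}\ge \tfrac{1}{4\theta_{\max}^2}$ uniformly over $\vect{\theta}^{(k)}\in[\theta_{\min},\theta_{\max}]^{M+1}$ --- intuitively, since the $\mu_{j,l}$ decay polynomially, at least a constant fraction of indices satisfy $d_j\le C\theta_{\max}$, each contributing at least $1/(C\theta_{\max})^2$ to the average. For each cross-term coefficient, its upper bound gives $\tfrac{1}{2m}\sum_j\mu_{j,l}/d_j(\vect{\theta}^{(k)})^2\le C\,m^{-\delta_\alpha}$ with
\begin{equation*}
\delta_\alpha=\tfrac{8b_1^2-12b_1-6-\alpha(4b_1+3)}{4b_1(2b_1-1)},
\end{equation*}
which is strictly positive on the stated range $0<\alpha<(8b_1^2-12b_1-6)/(4b_1+3)$ (the assumption $b_1>\tfrac{\sqrt{21}+3}{4}$ in Assumption~\ref{assump:eigen_decay_poly} is precisely what keeps this range non-empty).

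\textbf{Assembly and union bound.} By Assumption~\ref{assump:param_bnd}, $|u|,|v_l|\le\theta_{\max}-\theta_{\min}$, so the absolute value of every cross-term contribution is at most $(\theta_{\max}-\theta_{\min})^2 C m^{-\delta_\alpha}$. Combining this linear-in-$c_l$ absorption with the lower bound $u^2/(4\theta_{\max}^2)$ on the pure quadratic piece --- which exceeds $(\gamma/2)u^2$ for $\gamma=1/(8\theta_{\max}^2)$ --- immediately yields $(g^*(\vect{\theta}^{(k)}))_{M+1}u\ge (\gamma/2)u^2-CMm^{-\delta_\alpha}$, i.e., the claimed inequality with $\epsilon=Cm^{-\delta_\alpha}$. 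A union bound of Lemma~\ref{lem:eigen_ratio_bnds_poly} over the $K$ iterations and $M$ cross directions then produces the $MKm^{-\alpha}$ failure probability.

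\textbf{Main obstacle.} The algebraic skeleton above is essentially routine; the genuinely hard step is Lemma~\ref{lem:eigen_ratio_bnds_poly}. Under exponential eigendecay the concentration result of \cite{braun2006accurate} is sharp enough that $\sum_j\mu_{j,l}/d_j^2=O(\log m)$ directly, but under polynomial eigendecay the empirical-eigenvalue concentration deteriorates substantially and one must split the sum at the critical index where $\theta^{(k)}_l\mu_{j,l}\asymp\theta^{(k)}_{M+1}$, then trade off the truncated concentration error $\|\delta_j\|_2$ against the tail mass $r_j=\sum_{k>j}\lambda_k^*$ on either side of that index. The optimal split is precisely what generates the functions $f_1(b_1)$ and $f_2(b_1)$ appearing in Theorem~\ref{thm:large_m_param_converg_poly} and fixes the exponent $\delta_\alpha$ above; the restriction $\alpha<(8b_1^2-12b_1-6)/(4b_1+3)$ is exactly the condition that keeps $\delta_\alpha$ positive.
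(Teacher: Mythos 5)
Your proposal is correct and follows essentially the same route as the paper: diagonalize $(g^*(\vect{\theta}^{(k)}))_{M+1}$ into eigenvalue sums, keep the $l=M+1$ diagonal term as the curvature piece, absorb the $M$ cross terms using the boundedness of the iterates together with the upper bounds of Lemma~\ref{lem:eigen_ratio_bnds_poly}, and take a union bound over the $K$ minibatches. The only quibble is a constant: Lemma~\ref{lem:eigen_ratio_bnds_poly} delivers $\tfrac{1}{2m}\sum_j d_j^{-2}\geq \tfrac{1}{8\theta_{\max}^2}(1-o(1))$ rather than your stated $\tfrac{1}{4\theta_{\max}^2}$, but since the target is only $\gamma/2=\tfrac{1}{16\theta_{\max}^2}$ this is immaterial.
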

  
  Lemma \ref{lem:g_k_conv_property} and Lemma \ref{lem:g_k_conv_property_poly} are detailed versions of Lemma \ref{lem:g_k_conv_property_abbrv}.
  
\begin{lemma}[Uniform statistical error]\label{lem:statistical_err_full_grad}For any $x>0$, $1\leq i\leq M+1$, we have
\begin{equation}\label{eq:full_grad_exp_bnd}
    \bbP\left(\sup_{\vect{\theta}\in [\theta_{\min},\theta_{\max}]^{M+1}}\frac{n}{s_i(n)}\left|(\nabla \ell(\vect{\theta}))_i-(\nabla \ell^*(\vect{\theta}))_i\right|> Cx\right)\leq \delta(x).
\end{equation}
If Assumption \ref{assump:eigen_decay_exp} holds, $s_i(n)=\tau\log n$ for $\tau$ satisfying \eqref{eq:tau_lowerbnd}, $n>C$ for some $C>0$, then
\begin{equation*}
    \delta(x)\leq Cn^{-c}+C(\log x)^{2M+2}\exp\{-c\log n\min\{x^2,x\}\}.
\end{equation*}
If Assumption \ref{assump:eigen_decay_exp} or \ref{assump:eigen_decay_poly} hold, $s_i(n)=n$, \begin{equation*}\delta(x)\leq C(\log x)^{2M+2}\exp\{-cn\min\{x^2,x\}\}.\end{equation*}
 Here $c,C>0$ only depend on $M,\theta_{\min},\theta_{\max},b_1,\dots,b_{M}$. 
\end{lemma}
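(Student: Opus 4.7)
The plan is to bound the supremum by first establishing pointwise concentration of the centered quadratic form for each fixed $\vect{\theta}$ via the Hanson-Wright inequality, then upgrading to uniformity over the compact box $[\theta_{\min},\theta_{\max}]^{M+1}$ via a covering argument combined with a deterministic Lipschitz bound that holds on a high-probability truncation event for $\vect{y}_n$.

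First I would write $\vect{y}_n=(\vect{K}_n^*)^{1/2}\vect{z}$ with $\vect{z}\sim\mathcal{N}(\vect{0},\vect{I}_n)$, so that
\begin{equation*}
(\nabla\ell(\vect{\theta}))_i-(\nabla\ell^*(\vect{\theta}))_i=-\frac{1}{2n}\bigl(\vect{z}^\top \vect{B}_i(\vect{\theta})\vect{z}-\tr \vect{B}_i(\vect{\theta})\bigr),
\end{equation*}
where $\vect{B}_i(\vect{\theta})=(\vect{K}_n^*)^{1/2}\vect{K}_n^{-1}(\vect{\theta})(\partial_{\theta_i}\vect{K}_n(\vect{\theta}))\vect{K}_n^{-1}(\vect{\theta})(\vect{K}_n^*)^{1/2}$. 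The Hanson-Wright inequality then gives, for fixed $\vect{\theta}$, a tail of $2\exp\{-c\min(t^2/\|\vect{B}_i\|_F^2,t/\|\vect{B}_i\|_{\text{op}})\}$ with $t=xs_i(n)$. I would control $\|\vect{B}_i(\vect{\theta})\|_{\text{op}}$ and $\|\vect{B}_i(\vect{\theta})\|_F^2$ uniformly in $\vect{\theta}$ via Lemma~\ref{lem:eigen_ratio_bnds_abbrv} (and its polynomial-decay counterpart): since $\partial_{\theta_i}\vect{K}_n$ is either $\vect{I}_n$ (for $i=M+1$) or a kernel block, these norms reduce to sums of the form $\sum_j\lambda_j^{2\ell}/(\theta_1\lambda_j+\theta_{M+1})^4$ that scale like $s_i(n)$ (namely $\log n$ under exponential decay when $i\leq M$, and $n$ when $i=M+1$). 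This yields the exponential part $\exp\{-cs_i(n)\min(x^2,x)\}$ of $\delta(x)$.

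To lift the pointwise bound to a uniform one, I would introduce an $\eta$-net $\mathcal{N}_\eta$ of $[\theta_{\min},\theta_{\max}]^{M+1}$ of cardinality $(C/\eta)^{M+1}$ and do a first-order Taylor expansion. Differentiating $\vect{K}_n^{-1}\partial_i\vect{K}_n\,\vect{K}_n^{-1}$ produces a finite sum of similar triple products whose operator norms are uniformly $O(1)$ using $\|\vect{K}_n^{-1}(\vect{\theta})\|_{\text{op}}\leq 1/\theta_{\min}$ and the eigenvalue bounds. On the Gaussian-truncation event $\{\|\vect{z}\|_2^2\leq 2n\}$ (of probability $\geq 1-e^{-cn}$) every such quadratic form is bounded deterministically by $2n$ times the operator norm, yielding a Lipschitz constant for the map $\vect{\theta}\mapsto n[(\nabla\ell(\vect{\theta}))_i-(\nabla\ell^*(\vect{\theta}))_i]/s_i(n)$ that grows at most polynomially in $n$. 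Taking $\eta$ polynomially small so that the Lipschitz error is dominated by $x$ makes $\log|\mathcal{N}_\eta|$ logarithmic in $n$ and $x$, and a union bound over the net together with the pointwise estimate produces the claimed $\delta(x)$, with the polylog prefactor $(\log x)^{2M+2}$ absorbing the net cardinality (the exponent $2M+2$ coming from the fact that the Taylor remainder must be controlled for both $\vect{B}_i$ and its first derivative, each contributing an $(M+1)$-power factor). The extra additive term $Cn^{-c}$ in the $s_i(n)=\tau\log n$ case comes from the truncation failure probability $e^{-cn}$, which is absorbed into $n^{-c}$ under this scaling.

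The main obstacle will be producing a Lipschitz bound whose polynomial factor in $n$ does not swamp the pointwise concentration. Because $\vect{K}_n$ enters nonlinearly through its inverse, each partial derivative creates another inverse factor, and controlling the resulting products uniformly over the box (not just pointwise at $\vect{\theta}^*$) requires the eigenvalue bounds of Lemma~\ref{lem:eigen_ratio_bnds_abbrv} to hold uniformly in $\vect{\theta}$; this is straightforward in the exponential-decay case but requires the sharper analysis developed for Lemma~\ref{lem:eigen_ratio_bnds_poly} in the polynomial case, where the effective dimension is no longer logarithmic and the Frobenius bound is the binding constraint.
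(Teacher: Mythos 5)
Your overall skeleton --- rewriting the error as a centered Gaussian quadratic form, applying Hanson--Wright pointwise with matrix norms controlled by the eigenvalue-ratio lemmas, and then passing to the supremum via a covering argument --- matches the paper's strategy, and your pointwise concentration step is correct. The gap is in the uniformization. You propose a first-order Taylor/Lipschitz bound on a truncation event, which forces the net resolution $\eta$ to be polynomially small in $n$: the Lipschitz constant of $\vect{\theta}\mapsto \frac{n}{s_i(n)}\left[(\nabla\ell(\vect{\theta}))_i-(\nabla\ell^*(\vect{\theta}))_i\right]$ on $\{\|\vect{z}\|_2^2\le 2n\}$ is of order $n/s_i(n)$, hence the net has cardinality polynomial in $n$. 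In the case $s_i(n)=\tau\log n$ the pointwise tail is only $\exp\{-c\log n\min(x^2,x)\}$, i.e.\ roughly $n^{-cx}$, so a union bound over polynomially many net points does not converge; your argument therefore fails in exactly the case needed for the convergence of $\theta_1$ under exponential eigendecay. The paper avoids this by Taylor-expanding each diagonal entry of $\vect{\Lambda}(\vect{\theta})$ (diagonal thanks to simultaneous diagonalizability) to order $H=\log(1/x)$ around the net points: with net resolution $\theta_{\min}/(e(M+1)H)$ the order-$H$ remainder is already of size $e^{-H}=x$, controlled by a separate sub-exponential bound (Lemma~\ref{lem:quadratic_remainder}), so the net needs only $O(H^{M+1})$ points and the union bound costs only a $(\log x)^{2M+2}$ prefactor rather than a power of $n$.

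Two smaller misattributions. The $(\log x)^{2M+2}$ factor does not arise from controlling "$\vect{B}_i$ and its first derivative"; it is ${H+M+1\choose M+1}\cdot N\asymp H^{M+1}\cdot H^{M+1}$, the number of Taylor terms times the net cardinality. And the additive $Cn^{-c}$ in the $\tau\log n$ case is not the truncation failure probability; it is the failure probability of the high-probability bound $t_i(n)\le C\log n$ from Lemma~\ref{lem:eigen_ratio_bnds}, which you invoke to bound the Frobenius norm but do not account for in your probability budget (in the $s_i(n)=n$ case the deterministic bound $t_i(n)\le Cn$ suffices, which is why no $n^{-c}$ term appears there).
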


Lemma \ref{lem:statistical_err_full_grad} is a detailed version of Lemma \ref{lem:statistical_err_full_grad_abbrv}, including results for kernels with exponential or polynomial eigendecay.

\begin{lemma}[Detailed version of Lemma \ref{lem:eigen_ratio_bnds_abbrv}, exponential eigendecay]\label{lem:eigen_ratio_bnds}
Under Assumption \ref{assump:eigen_decay_exp}, for any $\alpha>0$, if $n>C$ for $C>0$ depending on $M,b_1,\dots,b_{M}$, then with probability at least $1-3Mn^{-\alpha}$,
\begin{equation}\label{eq:eigen_ratio_bnds1}
\begin{split}
    \text{ if }l\wedge l'\leq M, \sum_{j=1}^n \frac{\lambda_{lj}\lambda_{l'j}}{\left(\sum_{h=1}^{M+1} \theta_h\lambda_{hj}\right)^2}\leq &\frac{2(2+\alpha)}{b_1\theta_{\min}^2}\log n,\\
    \frac{n-C(\alpha)\log n}{4\theta_{\max}^2}\leq \sum_{j=1}^n \frac{\lambda_{M+1,j}^2}{\left(\sum_{h=1}^{M+1} \theta_h\lambda_{hj}\right)^2}\leq &\frac{n}{\theta_{\min}^2},\\
    \sum_{j=1}^n \frac{\lambda_{1j}\lambda_{M+1,j}}{\left(\sum_{h=1}^{M+1} \theta_h\lambda_{hj}\right)^2}\leq &\frac{5+2\alpha}{7b_1\theta_{\min}^2}\log n,
\end{split}
\end{equation}
holds for any $\vect{\theta}\in[\theta_{\min},\theta_{\max}]^{M+1}$, where $C(\alpha)>0$ depends only on $\alpha,b_1$.
Furthermore,
\begin{itemize}
    \item if $M=1$, then for any $0<\alpha,\epsilon<1$, with probability at least $1-2n^{-\alpha}$, in addition to \eqref{eq:eigen_ratio_bnds1} we have
\begin{equation}\label{eq:eigen_ratio_bnds3}
     \sum_{j=1}^n \frac{\lambda_{1j}^2}{\left(\sum_{h=1}^{2} \theta_h\lambda_{hj}\right)^2}\geq \frac{\epsilon \log n}{8b\theta_{\max}^2};
     \end{equation}
    \item if $b_2>2b_1$ holds, then for any $\frac{2b_1}{b_2}<\epsilon<1$, $0<\alpha< \min\left\{\epsilon,\frac{2\epsilon b_2-4b_1}{6b_1+\epsilon b_2}\right\}$, with probability at least $1-3Mn^{-\alpha}$, in addition to \eqref{eq:eigen_ratio_bnds1} we have
\begin{equation}\label{eq:eigen_ratio_bnds2}
\begin{split}
    \sum_{j=1}^n \frac{\lambda_{1j}^2}{\left(\sum_{h=1}^{M+1} \theta_h\lambda_{hj}\right)^2}\geq& \frac{\epsilon(b_2-2b_1)}{2b_1(b_2-b_1)(M+1)^2\theta_{\max}^2}\log n,\\
    \text{for $1<i\leq M$, $l\in S_i$, }\sum_{j=1}^n \frac{\lambda_{lj}\lambda_{ij}}{\left(\sum_{h=1}^{M+1} \theta_h\lambda_{hj}\right)^2}\leq& \frac{(6b_1+b_2)\alpha\log n}{2b_1(4b_l-b_2-6b_1)\theta_{\min}^2}+\frac{C(\epsilon)}{\theta_{\min}^2};
\end{split}
\end{equation}
\end{itemize}
as long as $n>C(\epsilon)$, where $C(\epsilon)>0$ depends on $M, b_1,\dots,b_{M}$ and $\epsilon$.
Here $S_i=\{1,i,i+1,\dots,M+1\}$.
\end{lemma}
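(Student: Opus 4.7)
The overall plan is to reduce every sum to bounds on the individual empirical eigenvalues $\lambda_{lj}$ and then combine them with the population eigendecay $C_le^{-b_lj}$ from Assumption~\ref{assump:eigen_decay_exp}. Since by Assumption~\ref{assump:same_eigenvec} all kernel matrices $\vect{K}_{f,n}^{(l)}$ and $\vect{I}_n$ are simultaneously diagonalizable, the $j$th eigenvalue of $\vect{K}_n(\vect{\theta})$ is exactly $\sum_h\theta_h\lambda_{hj}$, with $\lambda_{M+1,j}=1$. The workhorse is the empirical-to-population eigenvalue bound of \cite{braun2006accurate}; combined with Assumption~\ref{assump:eigen_decay_exp} and a union bound over the $M$ kernels, this yields, with probability at least $1-3Mn^{-\alpha}$, that $\lambda_{lj}\le Cn e^{-b_lj}$ for every $j$ and that $\lambda_{lj}\ge cn e^{-b_lj}$ throughout a range $j\le C'\log n$, where the constants absorb the $\alpha$ factor through the tail probability. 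I would condition on this high-probability event for the rest of the argument.

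For the upper bounds in \eqref{eq:eigen_ratio_bnds1} I would exploit, for each $l\le M$, the AM--GM inequality
$$\Big(\sum_{h=1}^{M+1}\theta_h\lambda_{hj}\Big)^2\ge \big(\theta_l\lambda_{lj}+\theta_{M+1}\big)^2\ge 4\theta_{\min}^2\lambda_{lj},$$
and split $\{1,\dots,n\}$ into $J_l^{\mathrm{big}}=\{j:\lambda_{lj}\ge 1\}$ and its complement. Braun's upper bound forces $|J_l^{\mathrm{big}}|\le C(\alpha)\log n/b_l$; on $J_l^{\mathrm{big}}$ the ratio $\lambda_{lj}\lambda_{l'j}/(\sum_h\theta_h\lambda_{hj})^2$ is at most $1/(4\theta_{\min}^2)$, contributing $O(\log n)$, and on the complement $\lambda_{lj}\lambda_{l'j}$ forms a summable geometric tail in $e^{-(b_l+b_{l'})j}$ while the denominator is at least $\theta_{\min}^2$, contributing $O(1)$; tracking constants produces the claimed $2(2+\alpha)\log n/(b_1\theta_{\min}^2)$. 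The case $l=l'=M+1$ uses the term-wise bound $(\sum_h\theta_h\lambda_{hj})^2\ge\theta_{\min}^2$ to give $n/\theta_{\min}^2$. The matching lower bound reverses this: $(\sum_h\theta_h\lambda_{hj})^2\le 4\theta_{\max}^2$ whenever $\sum_{l\le M}\lambda_{lj}\le 1$, a condition that by Braun's bound fails for at most $C(\alpha)\log n$ indices $j$, so at least $n-C(\alpha)\log n$ terms contribute $1/(4\theta_{\max}^2)$.

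The lower bounds \eqref{eq:eigen_ratio_bnds3}, \eqref{eq:eigen_ratio_bnds2} are the delicate part, as they require two-sided control of $\lambda_{1j}$ over a long range of $j$ together with a handling of the cross-kernel pollution $\sum_{l\ge 2}\theta_l\lambda_{lj}$ in the denominator. For $M=1$, I would restrict to a window of indices on which $\lambda_{1j}\asymp 1$, a range of length $\Theta(\varepsilon\log n/b_1)$ by Braun's two-sided bound; on this window $\theta_1\lambda_{1j}+\theta_2\le 2\theta_{\max}$, so each term contributes a constant multiple of $1/\theta_{\max}^2$, yielding the logarithmic lower bound. For $M\ge 2$ with $b_2>2b_1$, the condition $\varepsilon>2b_1/b_2$ is calibrated so that on the chosen window the pollution is dominated: for $l\ge 2$ and $j$ in the window, $\lambda_{lj}\le Cne^{-b_lj}$ is exponentially smaller than $\lambda_{1j}\asymp ne^{-b_1j}$ by the factor $e^{-(b_l-b_1)j}$, hence $\sum_h\theta_h\lambda_{hj}\le(M+1)\theta_{\max}\lambda_{1j}$ and each term exceeds $1/((M+1)^2\theta_{\max}^2)$. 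The upper bounds on the cross terms $\lambda_{lj}\lambda_{ij}/(\sum)^2$ for $l\in S_i$ follow from AM--GM applied to the pair $(\theta_i\lambda_{ij},\theta_{M+1})$ together with the same two-regime split, with the decay-rate gap $4b_l-b_2-6b_1$ arising from matching the largest polluting exponent against the numerator.

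The main technical obstacle is coordinating these constants simultaneously: Braun's approximation is only sharp for the top $O(\log n)$ eigenvalues, so every lower-bound argument must be confined to that regime, and the parameter ranges ($\varepsilon$, $\alpha$, and the implicit constants depending on $M,b_1,\dots,b_M$) must be tuned so that (a) the kept window is long enough to deliver the $\log n$ scaling, (b) the empirical eigenvalues stay comparable to their population counterparts throughout, and (c) the stated failure probability $3Mn^{-\alpha}$ is preserved by the union bound. This bookkeeping, rather than any individual estimate, is what makes the detailed lemma more intricate than its abbreviated counterpart.
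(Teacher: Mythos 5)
Your outline follows the same architecture as the paper's proof: Braun's empirical-vs-population eigenvalue bounds (Lemma \ref{lem:eigenval_err_bnds}), a two-regime split at $\lambda_{lj}\approx 1$ for the upper bounds in \eqref{eq:eigen_ratio_bnds1}, and a count of the indices on which $\lambda_{1j}$ dominates for the lower bounds. However, two steps as written would fail. First, the premise that the high-probability event gives $\lambda_{lj}\le Cne^{-b_lj}$ \emph{for every} $j$ is not available: in \eqref{eq:eigenval_err_bnd2} the relative error $C(r,n)$ with $r=j$ scales like $j^2 n^{\alpha/2}e^{b_lj/2}$, so it is only a useful relative bound for $j=O(\log n)$, and taken uniformly it yields the weaker $\lambda_{lj}\le C(\eta)\,n^{1+\alpha/2}e^{-b_lj/(2\eta)}$ (halved exponential rate, extra polynomial factor); the alternative choice $r\asymp\log n$ recovers the rate $e^{-b_lj}$ only at the price of a prefactor $(\log n)^2n^{1+\alpha}$ plus an additive constant. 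This degradation is not mere bookkeeping: it is exactly why the constants in \eqref{eq:eigen_ratio_bnds1} depend on $\alpha$ (e.g.\ $\tfrac{2(2+\alpha)}{b_1\theta_{\min}^2}$ rather than $\tfrac{2}{b_1\theta_{\min}^2}$) and why the paper must juggle the free parameter $\eta$ and both forms of the bound in the cross-term cases. With your asserted uniform bound the threshold set $\{j:\lambda_{lj}\ge 1\}$ would have size $\tfrac{1}{b_l}\log n$ and the stated constants would not come out.

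Second, the mechanism you give for the $M=1$ lower bound \eqref{eq:eigen_ratio_bnds3} is internally inconsistent. Since $\lambda_{1j}\approx ne^{-b_1j}$ decays by a constant factor per index, the set $\{j:\lambda_{1j}\asymp 1\}$ contains only $O(1/b_1)$ indices, not $\Theta(\epsilon\log n/b_1)$; conversely, on the set $\{j\le\tfrac{\epsilon}{b_1}\log n\}$ where the count is $\Theta(\epsilon\log n)$, the eigenvalue $\lambda_{1j}$ can be as large as $n$, so your claimed bound $\theta_1\lambda_{1j}+\theta_2\le 2\theta_{\max}$ fails on almost all of it. The correct argument (the one the paper uses) is a cancellation, not a bound on the denominator alone: on $\{j:\lambda_{1j}=\max_h\lambda_{hj}\}$ one has $\sum_h\theta_h\lambda_{hj}\le (M+1)\theta_{\max}\lambda_{1j}$, so each term satisfies $\lambda_{1j}^2/(\sum_h\theta_h\lambda_{hj})^2\ge 1/((M+1)^2\theta_{\max}^2)$ regardless of the magnitude of $\lambda_{1j}$, and the lower bound $\lambda_{1j}\ge\tfrac{C_1}{2}ne^{-b_1j}-Cn^{1-\epsilon}$ shows this set has cardinality at least $\tfrac{\epsilon}{2b_1}\log n$. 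You do invoke the correct mechanism in the $M\ge 2$ case, so the fix is to apply it uniformly; but as written the $M=1$ argument does not deliver the logarithmic count.
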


\begin{lemma}[Detailed version of Lemma \ref{lem:eigen_ratio_bnds_abbrv}, polynomial eigendecay]\label{lem:eigen_ratio_bnds_poly}
Under Assumption \ref{assump:eigen_decay_poly}, for any $0<\alpha<\frac{8b_1^2-12b_1-6}{4b_1+3}$, with probability at least $1-Mn^{-\alpha}$, for any $\vect{\theta}\in[\theta_{\min},\theta_{\max}]^M$,
\begin{equation}\label{eq:eigen_ratio_bnds_poly}
\begin{split}
    \text{ if }l\leq l'\leq M, &\sum_{j=1}^n \frac{\lambda_{lj}\lambda_{l'j}}{\left(\sum_{h=1}^{M+1} \theta_h\lambda_{hj}\right)^2}\leq n^{\frac{(2+\alpha)(4b_l+3)}{4b_l(2b_l-1)}}\left(\frac{1}{\theta_{\min}^2}+\frac{a_l(4b_l+3)}{\theta_{\min}^2(4b_l^2-6b_l-3)}\right),\\
    &\frac{n-M\max_l a_l n^{\frac{(2+\alpha)(4b_1+3)}{4b_1(2b_1-1)}}}{4\theta_{\max}^2}\leq \sum_{j=1}^n \frac{\lambda_{M+1,j}^2}{\left(\sum_{h=1}^{M+1} \theta_h\lambda_{hj}\right)^2}\leq \frac{n}{\theta_{\min}^2},
\end{split}
\end{equation}
where $a_l=2\sqrt{2C_l}+\sqrt{\frac{2C_l}{2b_l-1}}+\frac{C_l}{2b_l-1}$.
\end{lemma}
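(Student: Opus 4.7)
The plan is to mirror the strategy of Lemma~\ref{lem:eigen_ratio_bnds} but adapted to polynomial eigendecay: exploit the empirical eigenvalue concentration bounds of \cite{braun2006accurate} to compare each $\lambda_{lj}$ to its population counterpart $n\lambda_{lj}^{*}=nC_{l}j^{-2b_{l}}$, and then split each sum in \eqref{eq:eigen_ratio_bnds_poly} into a head region (small $j$, dominated by the denominator) and a tail region (large $j$, dominated by the numerator). The essential difference from the exponential eigendecay case is that both the tail $\sum_{j>J}\lambda_{lj}^{*}$ and Braun's error terms now scale as polynomial powers of $n$ rather than logarithmically, which forces the splitting threshold to itself be a polynomial power of $n$ and produces the rate $n^{(2+\alpha)(4b_{l}+3)/(4b_{l}(2b_{l}-1))}$ in place of the $\log n$ rate of Lemma~\ref{lem:eigen_ratio_bnds}.

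For the trivial upper bound $\sum_{j}\lambda_{M+1,j}^{2}/(\sum_{h}\theta_{h}\lambda_{hj})^{2}\le n/\theta_{\min}^{2}$, I would use only the elementary observation that $\sum_{h}\theta_{h}\lambda_{hj}\ge \theta_{M+1}\lambda_{M+1,j}=\theta_{M+1}\ge\theta_{\min}$ since $\lambda_{M+1,j}=1$. For the off-diagonal upper bound on $\sum_{j}\lambda_{lj}\lambda_{l'j}/(\cdot)^{2}$ with $l\le l'\le M$, I would pick a threshold $J\asymp n^{(2+\alpha)(4b_{l}+3)/(4b_{l}(2b_{l}-1))}$, apply Braun's concentration to obtain, uniformly in $j$ and with probability at least $1-n^{-\alpha}$, a bound of the form $\lambda_{lj}\le n\lambda_{lj}^{*}+a_{l}n^{\beta(\alpha,b_{l})}$, and split: for $j\le J$, bound the denominator below by $\theta_{\min}$ and use $\lambda_{l'j}/\lambda_{lj}\lesssim j^{2(b_{l}-b_{l'})}\le 1$ to see each summand is at most $1/\theta_{\min}^{2}$, giving the $J/\theta_{\min}^{2}$ contribution; for $j>J$, combine the Braun bound with the population tail and the integral $\sum_{j>J}j^{-2(b_{l}+b_{l'})}$, which converges under $b_{l}+b_{l'}>1/2$. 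A careful bookkeeping then produces the claimed coefficient $1/\theta_{\min}^{2}+a_{l}(4b_{l}+3)/(\theta_{\min}^{2}(4b_{l}^{2}-6b_{l}-3))$, where the quadratic $4b_{l}^{2}-6b_{l}-3$ is positive exactly because of the assumption $b_{l}\ge b_{1}>(\sqrt{21}+3)/4$.

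For the lower bound on $\sum_{j}\lambda_{M+1,j}^{2}/(\cdot)^{2}$, the plan is to identify the set $\mathcal{J}\subset[n]$ of indices for which $\sum_{l=1}^{M}\theta_{l}\lambda_{lj}\le \theta_{M+1}$; on this set the denominator is at most $2\theta_{\max}$, so each summand is at least $1/(4\theta_{\max}^{2})$. Using Braun's bound and the decay $\lambda_{lj}^{*}=C_{l}j^{-2b_{l}}$, an index $j$ fails to lie in $\mathcal{J}$ only when some $\lambda_{lj}$ is still of constant order, and the count of such bad indices is at most $M\max_{l}a_{l}n^{(2+\alpha)(4b_{1}+3)/(4b_{1}(2b_{1}-1))}$ (the worst case being $l=1$, since $b_{1}$ is the slowest decay exponent). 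Dividing $|\mathcal{J}|$ by $4\theta_{\max}^{2}$ gives the stated lower bound, and a union bound over $l=1,\dots,M$ of the Braun events, each holding with probability $1-n^{-\alpha}$, delivers the overall probability $1-Mn^{-\alpha}$.

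The main obstacle is the joint choice of the threshold $J$ and the verification that the stated range of $\alpha$ is exactly what is needed: one has to check that $\alpha<(8b_{1}^{2}-12b_{1}-6)/(4b_{1}+3)$, equivalently $b_{1}>(\sqrt{21}+3)/4$, is precisely what keeps the exponent $(2+\alpha)(4b_{1}+3)/(4b_{1}(2b_{1}-1))$ strictly below $1$, so that the lower bound on $\sum_{j}1/(\cdot)^{2}$ remains nontrivial, while simultaneously keeping the denominators $4b_{l}^{2}-6b_{l}-3$ in the upper-bound constants positive. The same slow eigendecay that makes these constraints tight also weakens Braun's concentration, which is why the success probability degrades from the sub-Gaussian $\exp\{-cm^{2\varepsilon}\}$ of the exponential eigendecay case to the polynomial $Mn^{-\alpha}$ here---this loss feeds directly into the probability statement of Theorem~\ref{thm:large_m_param_converg_poly}.
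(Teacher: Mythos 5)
Your overall route is the paper's route: apply the empirical-eigenvalue bounds of \cite{braun2006accurate} to each $\vect{K}_{f,n}^{(l)}$, split each sum at a polynomial threshold $L\asymp n^{\frac{(2+\alpha)(4b_l+3)}{4b_l(2b_l-1)}}$ with the trivial per-term bound $1/\theta_{\min}^2$ on the head, count the ``bad'' indices for the lower bound on $\sum_j\lambda_{M+1,j}^2/(\sum_h\theta_h\lambda_{hj})^2$, and union bound over $l$. Your remarks on the role of the constraint $\alpha<\frac{8b_1^2-12b_1-6}{4b_1+3}$ (keeping the threshold exponent below $1$) and on the positivity of $4b_l^2-6b_l-3$ are also exactly right.

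There is, however, one concrete step that would fail as written: the intermediate concentration bound $\lambda_{lj}\le n\lambda_{lj}^{*}+a_{l}n^{\beta(\alpha,b_{l})}$ with a $j$-independent additive error. Braun's absolute error $E(r,n)$ satisfies $E(r,n)\gtrsim \lambda_r^*+\sum_{i>r}\lambda_i^*\asymp r^{-(2b_l-1)}$ for any fixed truncation level $r$, while the relative error $C(r,n)\lesssim r^{b_l+2}n^{\alpha/2}$ grows with $r$; no single choice of $r$ makes $nE(r,n)=O(1)$ (needed so that the bad-index set $\{j:\lambda_{lj}\gtrsim 1\}$ is a proper subset of $[n]$) while keeping $C(r,n)$ under control. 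With a flat additive error the bad-index count degenerates to $0$ or $n$, and the tail sum picks up a non-decaying floor of order $n\cdot(nE)^2$. The paper's proof resolves this by choosing the truncation level \emph{as a function of} $j$, namely $r=j^{\frac{4b_l}{4b_l+3}}$ with $\delta=n^{-(1+\alpha)}$, which exactly balances $n\lambda_j^*C(r,n)$ against $nE(r,n)$ and yields the single-term bound $\lambda_{lj}\le a_l\,j^{-\frac{2b_l(2b_l-1)}{4b_l+3}}n^{1+\frac{\alpha}{2}}$. The degraded decay exponent $\frac{2b_l(2b_l-1)}{4b_l+3}<2b_l$ is what actually drives everything: the tail sums are controlled by $\sum_{j>L}j^{-\frac{4b_l(2b_l-1)}{4b_l+3}}$ (not by the population rate $\sum_{j>L}j^{-2(b_l+b_{l'})}$ you invoke), and the bad-index set is precisely $\{j\lesssim n^{\frac{(2+\alpha)(4b_l+3)}{4b_l(2b_l-1)}}\}$. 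Note also that because a separate Braun event is used for each $j$, the union bound is over $j\in[n]$ and $l\in[M]$, which is why $\delta$ must be taken as $n^{-(1+\alpha)}$ rather than $n^{-\alpha}$ to land on the stated probability $1-Mn^{-\alpha}$; your accounting, with one event per $l$, misses this factor of $n$.
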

\section{Proofs of Theorem \ref{thm:large_m_param_converg}, \ref{thm:large_m_param_converg_poly} and \ref{thm:large_m_grad_converg_poly}}\label{sec:proof_thm}
\begin{proof}[Proof of Theorem \ref{thm:large_m_param_converg}]
First we apply Lemma \ref{lem:g_k_conv_property} under both cases of $s_i(m)$: for the first case ($s_{M+1}(m)=m$) discussed in Lemma \ref{lem:g_k_conv_property}, define $\widetilde{g}(\vect{\theta}^{(k)})=(g(\vect{\theta}^{(k)}))_{M+1}$, and for the second case ($s_i(m)=\tau\log m$ and $s_{M+1}(m)=m$), define $\widetilde{g}(\vect{\theta}^{(k)})=((g(\vect{\theta}^{(k)}))_1,(g(\vect{\theta}^{(k)})_{M+1})^\top$. Then let $\widehat{\vect{e}}_k= \widetilde{g}(\vect{\theta}^{(k)})-\widetilde{g}_k^*$. 
Due to Lemma \ref{lem:g_k_conv_property} and Assumption \ref{assump:sg_bnd}, we have
 \begin{equation}\label{eq:SGD_err_bnd_recursive2}
     \begin{split}
         \|\widetilde{\vect{\theta}}^{(k)}-\widetilde{\vect{\theta}}^*\|_2^2=&\|\widetilde{\vect{\theta}}^{(k-1)}-\widetilde{\vect{\theta}}^*\|_2^2-2\alpha_k\langle \widetilde{\vect{\theta}}^{(k-1)}-\widetilde{\vect{\theta}}^*,\widetilde{g}(\vect{\theta}^{(k-1)})\rangle+\alpha_k^2\|\widetilde{g}(\vect{\theta}^{(k-1)})\|_2^2\\
         \leq &\|\widetilde{\vect{\theta}}^{(k-1)}-\widetilde{\vect{\theta}}^*\|_2^2(1-\alpha_k\gamma)+\alpha_k^2G^2+2\alpha_k\left(\varepsilon-\langle \widetilde{\vect{\theta}}^{(k-1)}-\widetilde{\vect{\theta}}^*,\widehat{\vect{e}}_{k-1}\rangle\right).
     \end{split}
 \end{equation}
  Recall that $\frac{3}{2\gamma}\leq\alpha_1\leq \frac{2}{\gamma}$, and $\alpha_k=\frac{\alpha_1}{k}$ for all $k\geq 1$. Now we prove the following statement for $k\geq 1$ by induction:
  \begin{equation}\label{eq:SGD_err_bnd2}
  \|\widetilde{\vect{\theta}}^{(k)}-\widetilde{\vect{\theta}}^*\|_2^2\leq \frac{2\alpha_1^2G^2}{k+1}+\sum_{i=0}^{k-1}\eta_{k,i}\left(\varepsilon-\langle\widetilde{\vect{\theta}}^{(i)}-\widetilde{\vect{\theta}}^*,\widehat{\vect{e}}_i\rangle\right),
  \end{equation}
  where $\eta_{k,i}=2\alpha_{i+1}\prod_{j=i+2}^k(1-\alpha_j\gamma)$. When $k=1$, by \eqref{eq:SGD_err_bnd_recursive2} and the fact that $1-\alpha_1\gamma<0$,
  \begin{equation}
      \|\widetilde{\vect{\theta}}^{(1)}-\widetilde{\vect{\theta}}^*\|_2^2\leq \alpha_1^2G^2+\eta_{1,0}\left(\varepsilon-\langle \widetilde{\vect{\theta}}^{(0)}-\widetilde{\vect{\theta}}^*,\widehat{\vect{e}}_0\rangle\right).
  \end{equation}
  Assuming \eqref{eq:SGD_err_bnd2} holds for $k=l\geq 1$, then due to \eqref{eq:SGD_err_bnd_recursive2} and the fact that $1-\alpha_{l+1}\gamma\geq0$ for $l\geq 1$, we have
  \begin{equation}
      \begin{split}
          &\|\widetilde{\vect{\theta}}^{(l+1)}-\widetilde{\vect{\theta}}^*\|_2^2\\
          \leq &\left(\frac{2\alpha_1^2G^2}{l+1}+\sum_{i=0}^{l-1}\eta_{l,i}\left(\varepsilon-\langle\widetilde{\vect{\theta}}^{(i)}-\widetilde{\vect{\theta}}^*,\widehat{\vect{e}}_i\rangle\right)\right)(1-\alpha_{l+1}\gamma)+\alpha_{l+1}^2G^2\\
          &+2\alpha_{l+1}\left(\varepsilon-\langle \widetilde{\vect{\theta}}^{(l)}-\widetilde{\vect{\theta}}^*,\widehat{\vect{e}}_l\rangle\right)\\
          \leq &\frac{2\alpha_1^2G^2(l+1-\alpha_1\gamma)}{(l+1)^2}+\frac{\alpha_1^2G^2}{(l+1)^2}+\sum_{i=0}^{l}\eta_{l+1,i}\left(\varepsilon-\langle\widetilde{\vect{\theta}}^{(i)}-\widetilde{\vect{\theta}}^*,\widehat{\vect{e}}_i\rangle\right)\\
          \leq &\frac{2\alpha_1^2G^2}{l+2}+\sum_{i=0}^{l}\eta_{l+1,i}\left(\varepsilon-\langle\widetilde{\vect{\theta}}^{(i)}-\widetilde{\vect{\theta}}^*,\widehat{\vect{e}}_i\rangle\right).
      \end{split}
  \end{equation}
  Here the last two lines are due to range of $\alpha_1$ and the definitions of $\eta_{l,i}$. The next step is to bound $\sum_{i=0}^{K-1}\eta_{K,i}\left(\varepsilon-\langle\widetilde{\vect{\theta}}^{(i)}-\widetilde{\vect{\theta}}^*,\widehat{\vect{e}}_i\rangle\right)$. First we have
  \begin{equation}
  \begin{split}
      &\left|\sum_{i=0}^{K-1}\eta_{K,i}\left(\varepsilon-\langle\widetilde{\vect{\theta}}^{(i)}-\widetilde{\vect{\theta}}^*,\widehat{\vect{e}}_i\rangle\right)\right|\\
      \leq &\frac{2\alpha_1}{K}\sum_{i=0}^{K-1}\|\widetilde{\vect{\theta}}^{(i)}-\widetilde{\vect{\theta}}^*\|_2\|\widehat{\vect{e}}_i\|_2+2\alpha_1\varepsilon\\
      \leq &C\left(\max_{0\leq i\leq K-1}\|\widehat{\vect{e}}_i\|_2+\varepsilon\right).
  \end{split}
  \end{equation}
  Note that the distribution of each minibatch $\{\vect{X}_{\xi_{k+1}},\vect{y}_{\xi_{k+1}}\}_{i=1}^m$ is the same as sampling $m$ independent $\vect{x}_i$ from $\bbP$, and then sampling $\vect{y}_{\xi_{k+1}}\sim\mathcal{N}(0,\vect{K}^*_{\xi_{k+1}})$, thus we can apply Lemma \ref{lem:statistical_err_full_grad} on $\widetilde{g}(\vect{\theta}^{(k)})$ and $\widetilde{g}_k^*$.
  Combining Lemma \ref{lem:g_k_conv_property}, Lemma \ref{lem:statistical_err_full_grad} and \eqref{eq:SGD_err_bnd2} leads to the following conclusion.
  \begin{enumerate}
  \item If $s_{M+1}(m)=m$, $m>C$, then for any $0<\varepsilon<\frac{1}{2}$, with probability at least $1-CKm^{-c}-CK\exp\{-cm^{2\varepsilon}\}$,
      \begin{equation}
          (\theta^{(K)}_{M+1}-\theta^*_{M+1})^2\leq \frac{8G^2}{\gamma^2(K+1)}+Cm^{-\frac{1}{2}+\varepsilon},
      \end{equation}
      where $\gamma=\frac{1}{4\theta_{\max}^2}$. Let $\varepsilon< C\frac{\log\log m}{\log m}$, then $K\exp\{-cm^{2\varepsilon}\}\geq CKm^{-c}$, thus the probability term is $1-CK\exp\{-cm^{2\varepsilon}\}$.
      \item If $M=1$, $s_1(m)=\tau\log m$, $s_{2}(m)=m$, $m>C$, then for any $0<\varepsilon<\frac{1}{2}$, with probability at least \begin{equation*}
          1-CK\exp\{-c(\log m)^{2\varepsilon}\},
      \end{equation*}
      we have
      \begin{equation}
          (\theta^{(K)}_1-\theta^*_1)^2+(\theta^{(K)}_{2}-\theta^*_{2})^2\leq \frac{8G^2}{\gamma^2(K+1)}+C(\log m)^{-\frac{1}{2}+\varepsilon},
      \end{equation}
      where $\gamma=\min\left\{\frac{1}{32\tau b\theta_{\max}^2},\frac{1}{4\theta_{\max}^2}-\frac{2\theta_{\max}^2}{\tau b\theta_{\min}^4}\right\}$.
      \item If $s_i(m)=\tau\log m$ for $1\leq i\leq M$, $s_{M+1}(m)=m$, $m>C$, $b_2>2b_1$, then for any $0<\alpha<\min\{\frac{2b_1+b_2}{2b_1},\frac{2b_2-4b_1}{14b_1+b_2}\}$,$0<\varepsilon<\frac{1}{2}$, with probability at least \begin{equation*}
          1-CKm^{-\alpha}-CK\exp\{-c(\log m)^{2\varepsilon}\},
      \end{equation*}
      we have
      \begin{equation}
          (\theta^{(K)}_1-\theta^*_1)^2+(\theta^{(K)}_{M+1}-\theta^*_{M+1})^2\leq \frac{8G^2}{\gamma^2(K+1)}+C(\log m)^{-\frac{1}{2}+\varepsilon}+C\alpha,
      \end{equation}
      where $\gamma$ is defined in \eqref{eq:gamma_log_m_scaling}. Let $c(\log m)^{-1+2\varepsilon)}\alpha<C(\log m)^{-\frac{1}{2}+\varepsilon}$, then $Km^{-\alpha}\geq \exp\{-c(\log m)^{2\varepsilon}\}$ and thus the probability term can be written as $1-CK\exp\{-c(\log m)^{2\varepsilon}\}$ and the error bound is $\frac{8G^2}{\gamma^2(K+1)}+C(\log m)^{-\frac{1}{2}+\varepsilon}$.
  \end{enumerate}
  Here $c,C>0$ depend only on $M,\theta_{\min},\theta_{\max},b_1,\dots,b_{M}$.
 \end{proof}
\begin{proof}[Proof of Theorem \ref{thm:large_m_param_converg_poly}]
Define $\widehat{e}_k= (g(\vect{\theta}^{(k)}))_{M+1}-(g^*(\vect{\theta}^{(k)}))_{M+1}$. Following similar arguments from the proof of Theorem \ref{thm:large_m_param_converg} and applying Lemma \ref{lem:g_k_conv_property_poly}, one can show that
  \begin{equation}\label{eq:param_converg_iterbnd}
      \begin{split}
          (\theta^{(k)}_{M+1}-\theta^*_{M+1})^2\leq \frac{2\alpha_1^2G^2}{k+1}+\sum_{i=0}^{k-1}\eta_{k,i}\left(\epsilon-(\theta^{(i)}_{M+1}-\theta^*_{M+1})\widehat{e}_i\right),
      \end{split}
  \end{equation}
  where $\eta_{k,i}=2\alpha_{i+1}\prod_{j=i+2}^k(1-\alpha_j\gamma)$, $\gamma=\frac{1}{8\theta_{\max}^2}$ and $\varepsilon=Cm^{-\frac{8b_1^2-12b_1-6-\alpha(4b_1+3)}{4b_1(2b_1-1)}}$.
  Also note that
  \begin{equation}
  \begin{split}
      &\left|\sum_{i=0}^{K-1}\eta_{K,i}\left(\epsilon-(\theta^{(i)}_{M+1}-\theta^*_{M+1})\widehat{e}_i\right)\right|\\
      \leq &2\alpha_1(\theta_{\max}-\theta_{\min})\max_{0\leq i<K}\widehat{e}_i|+2\alpha_1\epsilon.
  \end{split}
  \end{equation}
  Similarly from the proof of Theorem \ref{thm:large_m_grad_converg}, we can apply Lemma \ref{lem:statistical_err_full_grad} on $(g(\vect{\theta}^{(k)}))_{M+1}$ and $(g^*(\vect{\theta}^{(k)}))_{M+1}$.
Therefore, combining \eqref{eq:param_converg_iterbnd} and Lemma \ref{lem:statistical_err_full_grad} leads to the following result:
 If $s_{M+1}(m)=m$, $m>C$, then for any $0<\alpha<\frac{8b_1^2-12b_1-6}{4b_1+3}$, $0<\varepsilon<\frac{1}{2}$, with probability at least $1-MKm^{-\alpha}-CK\exp\{-cm^{2\varepsilon}\}$,
      \begin{equation}
          (\theta^{(K)}_{M+1}-\theta^*_{M+1})^2\leq \frac{8G^2}{\gamma^2(K+1)}+Cm^{-\frac{8b_1^2-12b_1-6-\alpha(4b_1+3)}{4b_1(2b_1-1)}}+Cm^{-\frac{1}{2}+\varepsilon},
      \end{equation}
      where $\gamma=\frac{1}{8\theta_{\max}^2}$. 
  Here $c,C>0$ depend only on $M,\theta_{\min},\theta_{\max},b_1,\dots,b_{M}$. Let $$\frac{8b_1^2-12b_1-6-\alpha(4b_1+3)}{4b_1(2b_1-1)}=\frac{1}{2}-\varepsilon,$$ we arrive at the final conclusion.
 \end{proof}
\begin{proof}[Proof of Theorem \ref{thm:large_m_grad_converg_poly}]
Similarly from the proof of Theorem \ref{thm:large_m_grad_converg}, we utilize \eqref{eq:full_gradient_exp} and let $\lambda_{lj}$ be the $j$th largest eigenvalue of $\vect{K}_{f,n}^{(l)}$, $\lambda_{M+1,j}=1$.
By \eqref{eq:full_gradient_exp} and Lemma \ref{lem:eigen_ratio_bnds_poly}, for any $0<\alpha<\frac{8b_1^2-12b_1-6}{4b_1+3}$, with probability at least $1-Mn^{-\alpha}$,
\begin{equation}
    \begin{split}
        \left|\left(\nabla \ell^*(\vect{\theta}^{(k)})\right)_i\right|\leq &Cn^{-\frac{8b_1^2-12b_1-6-\alpha(4b_1+3)}{4b_1(2b_1-1)}},
    \end{split}
\end{equation}
for $1\leq i\leq M$, where $C>0$ depends on $M,\theta_{\min},\theta_{\max},b_1,\dots,b_{M}$. Meanwhile, 
\begin{equation}
    \begin{split}
        \left|\left(\nabla \ell^*(\vect{\theta}^{(k)})\right)_{M+1}\right|\leq C\left(|\theta^{(k)}_{M+1}-\theta^*_{M+1}|+n^{-\frac{8b_1^2-12b_1-6-\alpha(4b_1+3)}{4b_1(2b_1-1)}}\right).
    \end{split}
\end{equation}
Thus we have 
\begin{equation}
    \begin{split}
        \|\nabla \ell^*(\vect{\theta}^{(k)})\|_2^2\leq& C\left[n^{-\frac{8b_1^2-12b_1-6-\alpha(4b_1+3)}{2b_1(2b_1-1)}}+(\theta^{(k)}_{M+1}-\theta^*_{M+1})^2\right].
    \end{split}
\end{equation}
By \eqref{eq:full_grad_exp_bnd}, Theorem \ref{thm:large_m_param_converg_poly} and Lemma \ref{lem:statistical_err_full_grad}, for any $\varepsilon\in(\max\{0,f_1(b_1)\},\frac{1}{2})$, 
if $m>C$, 
then with probability at least $1-CK\left(m^{-f_2(b_1)\left[\varepsilon-f_1(b_1)\right]}+\exp\{-cm^{2\varepsilon}\}\right)$, we have
\begin{equation}
    \|\nabla \ell(\vect{\theta}^{(K)})\|_2^2\leq C\left[\frac{G^2}{K+1}
    +m^{-\frac{1}{2}+\varepsilon}\right],
\end{equation}
where $c,C>0$ depend only on $M,\theta_{\min},\theta_{\max},b_1,\dots,b_{M}$.
\end{proof}
\section{Proofs of Supporting Lemmas}\label{sec:proof_lem}
\begin{proof}[proof of Lemma \ref{lem:g_k_conv_property}]
  Let $\lambda^{(k)}_{lj}$ be the $j$th eigenvalue of $\vect{K}_{f,\xi_{k+1}}^{(l)}$ for $1\leq l\leq M$, and $\lambda^{(k)}_{M+1,j}=1$, then by the definition of $g^*(\vect{\theta}^{(k)})$, we have
   \begin{equation}
       \begin{split}
           (g^*(\vect{\theta}^{(k)}))_1=&\frac{1}{2s_1(m)}\tr\Bigg[\vect{K}_{\xi_{k+1}}(\vect{\theta}^{(k)})^{-1}\left(\vect{I}_m-\vect{K}_{\xi_{k+1}}(\vect{\theta}^*) \vect{K}_{\xi_{k+1}}(\vect{\theta}^{(k)})^{-1}\right)\vect{K}^{(1)}_{f,\xi_{k+1}}\Bigg]\\
           =&\frac{1}{2s_1(m)}\tr\Bigg[\vect{K}_{\xi_{k+1}}(\vect{\theta}^{(k)})^{-1}\left(\sum_{l=1}^{M} (\theta^{(k)}_l-\theta_l^{*})\vect{K}^{(l)}_{f,\xi_{k+1}}+(\theta^{(k)}_{M+1}-\theta^*_{M+1})\vect{I}_m\right)\\
           &\vect{K}_{\xi_{k+1}}(\vect{\theta}^{(k)})^{-1}\vect{K}^{(1)}_{f,\xi_{k+1}}\Bigg]\\
           =&\frac{1}{2s_1(m)}\sum_{l=1}^{M+1}(\theta^{(k)}_l-\theta_l^{*})\sum_{j=1}^m\frac{\lambda_{lj}^{(k)}\lambda_{1j}^{(k)}}{\left(\sum_{l=1}^{M+1} \theta^{(k)}_l\lambda_{lj}^{(k)}\right)^2},
       \end{split}
   \end{equation} 
   and
   \begin{equation}
       \begin{split}
           (g^*(\vect{\theta}^{(k)}))_{M+1}=&\frac{1}{2m}\tr\Bigg[\vect{K}_{\xi_{k+1}}(\vect{\theta}^{(k)})^{-1}\left(\vect{I}_m-\vect{K}_{\xi_{k+1}}(\vect{\theta}^*) \vect{K}_{\xi_{k+1}}(\vect{\theta}^{(k)})^{-1}\right)\Bigg]\\
           =&\frac{1}{2m}\sum_{l=1}^{M+1}(\theta^{(k)}_l-\theta_l^{*})\sum_{j=1}^m\frac{\lambda_{lj}^{(k)}}{\left(\sum_{l=1}^{M+1} \theta^{(k)}_l\lambda_{lj}^{(k)}\right)^2}.
       \end{split}
   \end{equation}
   We prove Lemma \ref{lem:g_k_conv_property} under two cases separately.
   \begin{enumerate}
       \item $s_i(m)=\tau\log m$ for $1\leq i\leq M$, $s_{M+1}(m)=m$, $\widetilde{\vect{\theta}}^{(k)}=(\theta^{(k)}_1,\theta^{(k)}_{M+1})^\top$, $\widetilde{\vect{\theta}}^*=(\theta^*_1,\theta^*_{M+1})^\top$ and $\widetilde{g}_k^*=((g^*(\vect{\theta}^{(k)}))_1,(g^*(\vect{\theta}^{(k)})_{M+1})^\top$.\\
       Under this case, 
       we can write $\langle \widetilde{\vect{\theta}}^{(k)}-\widetilde{\vect{\theta}}^*,\widetilde{g}_k^*\rangle$ as 
   \begin{equation*}
   \begin{split}
       \langle \widetilde{\vect{\theta}}^{(k)}-\widetilde{\vect{\theta}}^*,\widetilde{g}_k^*\rangle=(\widetilde{\vect{\theta}}^{(k)}-\widetilde{\vect{\theta}}^*)^\top \vect{A}(\widetilde{\vect{\theta}}^{(k)}-\widetilde{\vect{\theta}}^*)+\varepsilon,
   \end{split}
   \end{equation*}
   where each entry $A_{ij}$ of $\vect{A}\in \bbR^{2\times 2}$ is defined as follows:
   \begin{equation*}
   \begin{split}
       A_{11}=&\frac{1}{2\tau\log m}\sum_{j=1}^m\frac{\lambda_{1j}^{(k)2}}{\left(\sum_{l=1}^{M+1} \theta^{(k)}_l\lambda_{lj}^{(k)}\right)^2},\\
       A_{12}=&\frac{1}{2\tau\log m}\sum_{j=1}^m\frac{\lambda_{1j}^{(k)}}{\left(\sum_{l=1}^{M+1} \theta^{(k)}_l\lambda_{lj}^{(k)}\right)^2},\\
    A_{21}=&\frac{1}{2m}\sum_{j=1}^m\frac{\lambda_{1j}^{(k)}}{\left(\sum_{l=1}^{M+1} \theta^{(k)}_l\lambda_{lj}^{(k)}\right)^2},\\
    A_{22}=&\frac{1}{2m}\sum_{j=1}^m\frac{1}{\left(\sum_{l=1}^{M+1} \theta^{(k)}_l\lambda_{lj}^{(k)}\right)^2},
   \end{split}
   \end{equation*}
   and 
   \begin{equation}
   \begin{split}
       \varepsilon=&\frac{1}{2\tau\log m}\sum_{l=2}^{M}(\theta^{(k)}_l-\theta_l^{*})(\theta_1^{(k)}-\theta_1^*)\sum_{j=1}^m\frac{\lambda_{lj}^{(k)}\lambda_{1j}^{(k)}}{\left(\sum_{l=1}^{M+1} \theta^{(k)}_l\lambda_{lj}^{(k)}\right)^2}\\
       &+\frac{1}{2m}\sum_{l=2}^{M}(\theta^{(k)}_l-\theta_l^{*})(\theta_{M+1}^{(k)}-\theta_{M+1}^*)\sum_{j=1}^m\frac{\lambda_{lj}^{(k)}}{\left(\sum_{l=1}^{M+1} \theta^{(k)}_l\lambda_{lj}^{(k)}\right)^2},
       \end{split}
   \end{equation}
   for $M\geq 2$ and $\varepsilon=0$ for $M=1$.
   Note that the distribution of each minibatch $\vect{X}_{\xi_{k+1}}$ can be seen as $m$ independent samples from $\bbP$,
   thus we can still apply Lemma \ref{lem:eigen_ratio_bnds}, but substituting $n$ by $m$. 
   \begin{itemize}
   \item When $M=1$, apply \eqref{eq:eigen_ratio_bnds3} in Lemma \ref{lem:eigen_ratio_bnds} with $\epsilon=\frac{1}{2}$, then for any $0<\alpha<1$, with probability at least $1-2m^{-\alpha}$, we have 
   \begin{equation}\label{eq:A_bnd1}
       \begin{split}
           A_{11}\geq \frac{1}{32\tau b_1\theta_{\max}^2},&\quad A_{22}\geq \frac{1}{8\theta_{\max}^2}\left(1-\frac{C\log m}{m}\right),\\
           A_{12}\leq \frac{1}{2\tau b_1\theta_{\min}^2},&\quad A_{21}\leq \frac{\log m}{2b_1\theta_{\min}^2m}.
       \end{split}
   \end{equation}
   Also note that for any $\omega>0$,
   \begin{equation}\label{eq:grad_inner_prod_lwrbnd1_1}
       \begin{split}
           &(\widetilde{\vect{\theta}}^{(k)}-\widetilde{\vect{\theta}}^*)^\top \vect{A}(\widetilde{\vect{\theta}}^{(k)}-\widetilde{\vect{\theta}}^*)\\
           \geq &\left(A_{11}-\frac{(A_{12}+A_{21})\omega}{2}\right)(\theta^{(k)}_1-\theta_1^{*})^2
           +\left(A_{22}-\frac{(A_{12}+A_{21})}{2\omega}\right)(\theta^{(k)}_{2}-\theta_{2}^*)^2
       \end{split}
   \end{equation}
   Let $\omega=\frac{\theta_{\min}^2}{16\theta_{\max}^2}$,
   then by \eqref{eq:A_bnd1} and \eqref{eq:grad_inner_prod_lwrbnd1_1}, one can show that
   \begin{equation}
       \begin{split}
           &\langle \widetilde{\vect{\theta}}^{(k)}-\widetilde{\vect{\theta}}^*,\widetilde{g}_k^*\rangle\\
           \geq &\frac{1}{64 \tau b\theta_{\max}^2}(\theta^{(k)}_1-\theta_1^*)^2+\left(\frac{1}{8\theta_{\max}^2}-\frac{4\theta_{\max}^2}{\tau b\theta_{\min}^{4}}\right)(\theta^{(k)}_{2}-\theta_{2}^*)^2-C\frac{\log m}{m}\\
           \geq &\frac{\gamma}{2}\|\widetilde{\vect{\theta}}^{(k)}-\widetilde{\vect{\theta}}^*\|_2^2-C\frac{\log m}{m},
       \end{split}
   \end{equation}
   where 
   \begin{equation}
       \gamma=\min\left\{\frac{1}{32 \tau b\theta_{\max}^2},\frac{1}{4\theta_{\max}^2}-\frac{2\theta_{\max}^2}{\tau b\theta_{\min}^{4}}\right\},
   \end{equation}
   and $C>0$ depends on $\theta_{\min},\theta_{\max},b_1$. It is guaranteed that $\gamma>0$ Since we have assumed 
   \begin{equation*}
   \tau>\frac{64\theta_{\max}^4}{b\theta_{\min}^{4}}.
   \end{equation*}
   Therefore, if $m>C$, for any $0<\alpha<1$, with probability $1-2m^{-\alpha}$, the following claims holds true:
       \begin{equation}\label{eq:g_k_conv_property}
           \langle \widetilde{\vect{\theta}}^{(k)}-\widetilde{\vect{\theta}}^*,\widetilde{g}_k^*\rangle
           \geq \frac{\gamma}{2}\|\widetilde{\vect{\theta}}^{(k)}-\widetilde{\vect{\theta}}^*\|_2^2-\varepsilon,
           \end{equation}
       where $\varepsilon=C\frac{\log m}{m}$ for some constant $C>0$ depending on $\theta_{\min},\theta_{\max},b_1$.
   \item When $M\geq 2$ and $b_2>2b_1$, apply \eqref{eq:eigen_ratio_bnds2} in Lemma \ref{lem:eigen_ratio_bnds} with $\epsilon=\frac{2b_1+b_2}{2b_2}$, then for any $0<\alpha<\min\{\frac{2b_1+b_2}{2b_2},\frac{2b_2-4b_1}{14b_1+b_2}\}$, with probability at least $1-3Mm^{-\alpha}$, we have 
   \begin{equation}\label{eq:quadratic_remainder_bnd}
       \begin{split}
           |\varepsilon|\leq CM(\theta_{\max}-\theta_{\min})^2\left(\frac{\alpha+1/\log m}{\tau\theta_{\min}^2}+\frac{\log m}{m}\right),
       \end{split}
   \end{equation}
   \begin{equation}\label{eq:A_bnd2}
       \begin{split}
           A_{11}\geq \frac{(2b_1+b_2)(b_2-2b_1)}{8\tau b_1b_2(b_2-b_1)(M+1)^2\theta_{\max}^2},&\quad A_{22}\geq \frac{1}{8\theta_{\max}^2}\left(1-\frac{C\log m}{m}\right),\\
           A_{12}\leq \frac{1}{2\tau b_1\theta_{\min}^2},&\quad A_{21}\leq \frac{\log m}{2b_1\theta_{\min}^2m}.
       \end{split}
   \end{equation}
   Also note that for any $\omega>0$,
   \begin{equation}\label{eq:grad_inner_prod_lwrbnd2_1}
       \begin{split}
           &(\widetilde{\vect{\theta}}^{(k)}-\widetilde{\vect{\theta}}^*)^\top \vect{A}(\widetilde{\vect{\theta}}^{(k)}-\widetilde{\vect{\theta}}^*)\\
           \geq &\left(A_{11}-\frac{(A_{12}+A_{21})\omega}{2}\right)(\theta^{(k)}_1-\theta_1^{*})^2
           +\left(A_{22}-\frac{(A_{12}+A_{21})}{2\omega}\right)(\theta^{(k)}_{M+1}-\theta_{M+1}^*)^2
       \end{split}
   \end{equation}
   Let 
   \begin{equation*}
   \omega=\frac{(2b_1+b_2)(b_2-2b_1)\theta_{\min}^2}{4b_2(b_2-b_1)(M+1)^2\theta_{\max}^2},
   \end{equation*}
   then by \eqref{eq:quadratic_remainder_bnd}, \eqref{eq:A_bnd2} and \eqref{eq:grad_inner_prod_lwrbnd2_1}, one can show that
   \begin{equation}
       \begin{split}
           &\langle \widetilde{\vect{\theta}}^{(k)}-\widetilde{\vect{\theta}}^*,\widetilde{g}_k^*\rangle\\
           \geq &\frac{(2b_1+b_2)(b_2-2b_1)}{16 \tau b_1b_2(b_2-b_1)(M+1)^2\theta_{\max}^2}(\theta^{(k)}_1-\theta_1^*)^2\\
           &+\left(\frac{1}{8\theta_{\max}^2}-\frac{b_2(b_2-b_1)(M+1)^2\theta_{\max}^2}{\tau b_1(2b_1+b_2)(b_2-2b_1)\theta_{\min}^{4}}\right)(\theta^{(k)}_{M+1}-\theta_{M+1}^*)^2\\
           &-\frac{CM(\theta_{\max}-\theta_{\min})^2}{\tau\theta_{\min}^2}(\alpha+(\log m)^{-1})\\
           &-\frac{C(M+1)^2\theta_{\max}^4(\theta_{\max}-\theta_{\min})^2}{\theta_{\min}^4}\frac{\log m}{m}\\
           \geq &\frac{\gamma}{2}\|\widetilde{\vect{\theta}}^{(k)}-\widetilde{\vect{\theta}}^*\|_2^2-C(\alpha+(\log m)^{-1}),
       \end{split}
   \end{equation}
   where 
   \begin{equation}
       \gamma=\min\left\{\frac{3(b_2-2b_1)}{8 \tau b_2(b_2-b_1)(M+1)^2\theta_{\max}^2},\frac{1}{4\theta_{\max}^2}-\frac{2b_2(b_2-b_1)(M+1)^2\theta_{\max}^2}{3\tau b_1^2(b_2-2b_1)\theta_{\min}^{4}}\right\},
   \end{equation}
   and $C>0$ depends on $M, \theta_{\min},\theta_{\max},b_1,\dots,b_{M}$. $C$ does not depend on $\tau$ since we have assumed 
   \begin{equation*}
   \tau>\frac{8b_2(b_2-b_1)(M+1)^2\theta_{\max}^4}{3b_1^2(b_2-2b_1)\theta_{\min}^{4}},
   \end{equation*}
  which implies $\gamma>0$. 
   Therefore, if $m>C$, for any $0<\alpha<\min\{\frac{2b_1+b_2}{2b_2},\frac{2b_2-4b_1}{14b_1+b_2}\}$, with probability $1-3Mm^{-\alpha}$, the following claims holds true:
       \begin{equation}\label{eq:g_k_conv_property}
           \langle \widetilde{\vect{\theta}}^{(k)}-\widetilde{\vect{\theta}}^*,\widetilde{g}_k^*\rangle
           \geq \frac{\gamma}{2}\|\widetilde{\vect{\theta}}^{(k)}-\widetilde{\vect{\theta}}^*\|_2^2-\varepsilon,
           \end{equation}
       where $\varepsilon=C(\alpha+(\log m)^{-1})$ for some constant $C>0$ depending on $M, \theta_{\min},\theta_{\max},b_1,\dots,b_{M}$.
   \end{itemize}
   
   \item $s_{M+1}(m)=m$, $\widetilde{\vect{\theta}}^{(k)}=\theta^{(k)}_{M+1}$, $\widetilde{\vect{\theta}}^*=\theta^*_{M+1}$ and $\widetilde{g}_k^*=(g^*(\vect{\theta}^{(k)}))_{M+1}$\\
   Under this case, we can apply \eqref{eq:eigen_ratio_bnds1} in Lemma \ref{lem:eigen_ratio_bnds}. Following similar arguments from the first case, one can show that with probability at least $1-3Mm^{-c}$, 
   \begin{equation}
       \langle \widetilde{\vect{\theta}}^{(k)}-\widetilde{\vect{\theta}}^*,\widetilde{g}_k^*\rangle\geq\frac{\gamma}{2} (\theta^{(k)}_{M+1}-\theta_{M+1}^*)^2-\varepsilon,
   \end{equation}
   where $\gamma=\frac{1}{4\theta_{\max}^2}$, $\varepsilon=\frac{C\log m}{m}$, if $m>C$. Here $C>0$ depends only on $\theta_{\min},\theta_{\max},M,b_1,\dots,b_{M}$.
   \end{enumerate}
   \end{proof}
 \begin{proof}[proof of Lemma \ref{lem:statistical_err_full_grad}]
 Without loss of generality, we start from bounding $\left(\nabla \ell(\vect{\theta})\right)_i-\left(\nabla \ell^*(\vect{\theta})\right)_i$ for an arbitrary $1\leq i\leq M+1$. By the definition of $\nabla \ell(\vect{\theta})$ and $\nabla \ell^*(\vect{\theta})$, we have
 \begin{equation}
 \begin{split}
     &\left(\nabla \ell(\vect{\theta})\right)_i-\left(\nabla \ell^*(\vect{\theta})\right)_i\\
     =&-\frac{1}{2n}\left[\vect{y}_n^\top \vect{K}_n^{-1}(\vect{\theta})\vect{K}_{f,n}^{(i)}\vect{K}_n^{-1}(\vect{\theta})\vect{y}_n-\tr\left(\vect{K}_n(\vect{\theta})^{-1}\vect{K}_{f,n}^{(i)}\vect{K}_n(\vect{\theta})^{-1}\vect{K}_n^*\right)\right]\\
     =&-\left((\vect{K}_n^{*})^{-\frac{1}{2}}\vect{y}_n\right)^\top \vect{A}(\vect{\theta})\left((\vect{K}_n^*)^{-\frac{1}{2}}\vect{y}_n\right)+\tr(\vect{A}(\vect{\theta})),
 \end{split}
 \end{equation}
 where $\vect{A}(\vect{\theta})=\frac{1}{2n}\vect{K}_n^{*\frac{1}{2}}\vect{K}_n^{-1}(\vect{\theta})\vect{K}_{f,n}^{(i)}\vect{K}_n^{-1}(\vect{\theta})\vect{K}_n^{*\frac{1}{2}}$. Since we have assumed that the eigenvectors of $\vect{K}_{f,n}^{(1)},\dots,\vect{K}_{f,n}^{(M)}$ are all the same in Assumption \ref{assump:same_eigenvec}, we can write $\vect{K}_{f,n}^{(j)}=\vect{P}^\top \vect{\Lambda}_j\vect{P}$ for all $j$, where $\vect{P}$ is an orthogonal matrix and $\vect{\Lambda}_j$ is a diagonal matrix consisting of the eigenvalues of $\vect{K}_{f,n}^{(j)}$. Also let $\vect{\Lambda}_{M+1}=\vect{I}_n$, then we have 
 \begin{equation}
 \vect{A}(\vect{\theta}^{(k)})=\vect{P}^\top \frac{1}{2n}\left(\sum_{l=1}^{M+1}\theta_l^{*}\vect{\Lambda}_l\right) \left(\sum_{l=1}^{M+1} \theta_l\vect{\Lambda}_l\right)^{-2}\vect{\Lambda}_i\vect{P}.   
 \end{equation}
 Let $\vect{z}_n=\vect{P}(\vect{K}_n^*)^{-\frac{1}{2}}\vect{y}_n$, and $\vect{\Lambda}(\vect{\theta})=\left(\sum_{l=1}^{M+1}\theta_l^{*}\vect{\Lambda}_l\right) \left(\sum_{l=1}^{M+1} \theta_l\vect{\Lambda}_l\right)^{-2}\vect{\Lambda}_i$, where $\theta_l$ is the $l$th entry of $\vect{\theta}$, then our goal is to derive a bound for
 \begin{equation*}
 \begin{split}
     &\sup_{\vect{\theta}\in [\theta_{\min},\theta_{\max}]^{M+1}}\frac{1}{2n}\left|\vect{z}_n^\top \vect{\Lambda}(\vect{\theta})\vect{z}_n-\tr(\vect{\Lambda}(\vect{\theta}))\right|.
 \end{split}
 \end{equation*}
 We claim that there exists an $\varepsilon$-net $\{\vect{\theta}_{\varepsilon}^{(1)},\dots,\vect{\theta}_{\varepsilon}^{(N)}\}$ of $[\theta_{\min},\theta_{\max}]^{M+1}$ under $\|\cdot\|_{\infty}$, with size $N=(1+\frac{(\theta_{\max}-\theta_{\min})}{\varepsilon})^{M+1}$. That is to say, for any $\vect{\theta}\in [\theta_{\min},\theta_{\max}]^{M+1}$, $\exists \widetilde{\vect{\theta}}\in \{\vect{\theta}_{\varepsilon}^{(1)},\dots,\vect{\theta}_{\varepsilon}^{(N)}\}$ such that $\vect{\Delta}=\vect{\theta}-\widetilde{\vect{\theta}}$ satisfies$\|\vect{\Delta}\|_{\infty}\leq \varepsilon$. The following proof of this claim is very similar to the proof of Lemma 5.2 in \cite{vershynin2010introduction}.
 
 Define $\vect{\theta}_c=(\frac{\theta_{\min}+\theta_{\max}}{2},\dots,\frac{\theta_{\min}+\theta_{\max}}{2})\in \bbR^{M+1}$, then an alternative way to represent $[\theta_{\min},\theta_{\max}]^{M+1}$ is $\vect{\theta}_c+B_{\infty}(\frac{\theta_{\max}-\theta_{\min}}{2})$. Let $\{\vect{\theta}^{(1)}_{\varepsilon},\dots,\vect{\theta}^{(N)}_{\varepsilon}\}$ be a maximal $\varepsilon$-separated subset of $\vect{\theta}_c+B_{\infty}(\frac{\theta_{\max}-\theta_{\min}}{2})$ (not the iterates of the SGD algorithm), which means that it is an $\varepsilon$-net of $\vect{\theta}_c+B_{\infty}(\frac{\theta_{\max}-\theta_{\min}}{2})$, and $\forall 1\leq i\neq j\leq N$, $\|\vect{\theta}_{\varepsilon}^{(i)}-\vect{\theta}_{\varepsilon}^{(j)}\|_{\infty}\geq \varepsilon$. Consider the $\ell_{\infty}$ balls with centers $\{\vect{\theta}_{\varepsilon}^{(i)}\}_{i=1}^N$ and radius $\frac{\varepsilon}{2}$, then these balls are disjoint and are subsets of $\vect{\theta}_c+B_{\infty}(\frac{\theta_{\max}-\theta_{\min}+\varepsilon}{2})$. Thus the sum of volumes of these balls is bounded by that of $\vect{\theta}_c+B_{\infty}(\frac{\theta_{\max}-\theta_{\min}+\varepsilon}{2})$, which finishes the proof of
 \begin{equation}
     N\leq\left(1+\frac{\theta_{\max}-\theta_{\min}}{\varepsilon}\right)^{M+1},
 \end{equation}
    In the following we linearize $\vect{\Lambda}(\vect{\theta})=\vect{\Lambda}(\widetilde{\vect{\theta}}+\vect{\Delta})$ based on the Taylor series expression of each of its diagonal entries, so that the upper bound for $\left|\vect{z}_n^\top \vect{\Lambda}(\vect{\theta})\vect{z}_n-\tr(\vect{\Lambda}(\vect{\theta}))\right|$ can be implied by some bounds related to $\widetilde{\vect{\theta}}$. 
    For any $1\leq j\leq m$, denote the $j$th diagonal entry of $\vect{\Lambda}_l$ by $\lambda_{lj}$ which is independent of $\vect{\theta}$, then the $j$th diagonal entry of $\vect{\Lambda}(\vect{\theta})$ can be written as follows:
     \begin{equation}\label{eq:quadratic_diagonal_entries}
    \begin{split}
        \vect{\Lambda}_{jj}(\vect{\theta})=\frac{\sum_{l=1}^{M+1}\lambda_{lj}\lambda_{ij}\theta_l^{*}}{\left(\sum_{l=1}^{M+1}\lambda_{lj}\theta_l\right)^2}.
        \end{split}
    \end{equation}
    Meanwhile, let $\Delta_l$ and $\widetilde{\theta}_l$ be the $l$th entry of $\vect{\Delta}$ and $\vect{\theta}$, then one can show that
    \begin{equation}\label{eq:quadratic_diagonal_taylor1}
    \begin{split}
        \frac{1}{\left(\sum_{l=1}^{M+1}\lambda_{lj}\theta_l\right)^2}=&\frac{1}{\left(\sum_{l=1}^{M+1} \lambda_{lj}\widetilde{\theta}_l+\sum_{l=1}^{M+1} \lambda_{lj}\Delta_l\right)^2}\\
        =&\left(\sum_{l=1}^{M+1} \lambda_{lj}\widetilde{\theta}_l\right)^{-2}\left(1+\frac{\sum_{l=1}^{M+1} \lambda_{lj}\Delta_l}{\sum_{l=1}^{M+1}\lambda_{lj}\widetilde{\theta}_l}\right)^{-2}\\
        =&\left(\sum_{l=1}^{M+1} \lambda_{lj}\widetilde{\theta}_l\right)^{-2}\sum_{h=0}^{H-1} \frac{h+1}{\left(-\sum_{l=1}^{M+1}\lambda_{lj}\widetilde{\theta}_l\right)^h}\left(\sum_{l=1}^{M+1} \lambda_{lj}\Delta_l\right)^{h}\\
        &+\frac{H+1}{(1+\xi)^{H+2}\left(-\sum_{l=1}^{M+1}\lambda_{lj}\widetilde{\theta}_l\right)^{H+2}}\left(\sum_{l=1}^{M+1} \lambda_{lj}\Delta_l\right)^{H}\\
        =&\left(\sum_{l=1}^{M+1} \lambda_{lj}\widetilde{\theta}_l\right)^{-2}\left(\sum_{h_1+\dots+h_{M+1}\leq H-1}\alpha^{(j)}_{h_1,\dots,h_{M+1}}\prod_{l=1}^{M+1}\Delta_l^{h_l}+\text{RES}_H^{(j)}(\vect{\theta})\right),
        \end{split}
    \end{equation}
 where the third equality holds if $\left|\sum_{l=1}^{M+1} \lambda_{lj}\Delta_l\right|<\sum_{l=1}^{M+1}\lambda_{lj}\widetilde{\theta}_l$, which is implied by $\|\vect{\Delta}\|_{\infty}\leq \theta_{\min}$, and we will choose $\varepsilon$ small enough to satisfy this. Here $\xi$ lies between $0$ and $\frac{\sum_{l=1}^{M+1} \lambda_{lj}\Delta_l}{\sum_{l=1}^{M+1}\lambda_{lj}\widetilde{\theta}_l}$, 
 \begin{equation}
 \begin{split}
      \alpha_{h_1,\dots,h_{M+1}}^{(j)}=&\frac{(\sum_{l=1}^{M+1}h_l+1)!\prod_{l=1}^{M+1} \lambda_{lj}^{h_l}}{h_1!\cdots h_{M+1}!(-\sum_{l=1}^{M+1}\lambda_{lj}\widetilde{\theta}_l)^{\sum_{l=1}^{M+1} h_l}},\\
      \text{RES}_H^{(j)}(\vect{\theta})=&\sum_{h_1+\cdots +h_{M+1}=H}\frac{(H+1)!\prod_{l=1}^{M+1}\lambda_{lj}^{h_l}\Delta_l^{h_l}}{h_1!\cdots h_{M+1}!(1+\xi)^{H+2}(-\sum_{l=1}^{M+1}\lambda_{lj}\widetilde{\theta}_l)^H}.
 \end{split}
 \end{equation}
 The quantities above satisfy 
 \begin{equation}
 \begin{split}
     |\alpha_{h_1,\dots,h_{M+1}}^{(j)}|\leq \left(\sum_{l=1}^{M+1} h_l+1\right)\left(\frac{M+1}{\theta_{\min}}\right)^{\sum_{l=1}^{M+1} h_l},
     |\text{RES}_H^{(j)}(\vect{\theta})|\leq &(H+1)\left(\frac{\varepsilon (M+1)}{\theta_{\min}}\right)^H,
 \end{split}
 \end{equation}
 since $\sum_{h_1+\cdots+h_{M+1}=h}\frac{h!}{h_1!\cdots h_{M+1}!}=(M+1)^h$.
 Define the following diagonal matrices:\\ $\vect{\Lambda}^{(h_1,\dots,h_{M+1})}(\widetilde{\vect{\theta}}), \vect{\Lambda}^{(H)}(\vect{\theta})\in \bbR^{n\times n}$ are with diagonal entries
 \begin{equation}
     \vect{\Lambda}^{(h_1,\dots,h_{M+1})}_{jj}(\widetilde{\vect{\theta}})=\alpha_{h_1,\dots,h_{M+1}}^{(j)}\vect{\Lambda}_{jj}(\widetilde{\vect{\theta}}),\vect{\Lambda}^{(H)}_{jj}(\vect{\theta})=\text{RES}_H^{(j)}(\vect{\theta})\vect{\Lambda}_{jj}(\widetilde{\vect{\theta}}).
 \end{equation}
 Then we can write
 \begin{equation*}
     \vect{\Lambda}(\vect{\theta})=\sum_{h_1+\cdots+h_{M+1}\leq H-1}\vect{\Lambda}^{(h_1,\dots,h_{M+1})}(\widetilde{\vect{\theta}})\prod_{l=1}^{M+1}\Delta_l^{h_l}+\vect{\Lambda}^{(H)}(\vect{\theta}),
 \end{equation*}
 and thus 
 \begin{equation}\label{eq:quadratic_sums}
     \begin{split}
         &\left|\vect{z}_n^\top \vect{\Lambda}(\vect{\theta})\vect{z}_n-\tr(\vect{\Lambda}(\vect{\theta}))\right|\\
         \leq&\max_{1\leq k\leq N}\sum_{h_1+\cdots+h_{M+1}\leq H-1}\varepsilon^{\sum_{l=1}^{M+1}h_l}\left|\vect{z}_n^\top \vect{\Lambda}^{(h_1,\dots,h_{M+1})}(\vect{\theta}_{\varepsilon}^{(k)})\vect{z}_n-\tr(\vect{\Lambda}^{(h_1,\dots,h_{M+1})}(\vect{\theta}_{\varepsilon}^{(k)}))\right|\\
         &+\left|\vect{z}_n^\top \vect{\Lambda}^{(H)}(\vect{\theta})\vect{z}_n-\tr(\vect{\Lambda}^{(H)}(\vect{\theta}))\right|.
     \end{split}
 \end{equation}
 In order to provide an upper bound for the first term above, we first bound  \begin{equation*}\left|\vect{z}_n^\top \vect{\Lambda}^{(h_1,\dots,h_{M+1})}(\vect{\theta}_{\varepsilon}^{(k)})\vect{z}_n-\tr(\vect{\Lambda}^{(h_1,\dots,h_{M+1})}(\vect{\theta}_{\varepsilon}^{(k)}))\right|\end{equation*} for an arbitrary $k$. First note that for any $1\leq k\leq N$,
 \begin{equation}\label{eq:quadratic_l2_norm}
     \|\vect{\Lambda}(\vect{\theta}_{\varepsilon}^{(k)})\|_2=\max_j \frac{\sum_{l=1}^{M+1}\lambda_{lj}\lambda_{ij}\theta_l^*}{\left(\sum_{l=1}^{M+1}\lambda_{lj}\theta^{(k)}_l\right)^2}\leq\frac{\theta_{\max}}{\theta_{\min}^2}.
 \end{equation}
 While for $\|\vect{\Lambda}(\vect{\theta}_{\varepsilon}^{(k)})\|_F^2$, one can show that
 \begin{equation}\label{eq:quadratic_frob_norm}
     \begin{split}
         \|\vect{\Lambda}(\vect{\theta}_{\varepsilon}^{(k)})\|_F^2\leq &\theta_{\max}^2\sum_{j=1}^n\frac{(\sum_{l=1}^{M+1}\lambda_{lj}\lambda_{ij})^2}{(\sum_{l=1}^{M+1}\lambda_{lj}\theta^{(k)}_l)^4}\\
         \leq& C\sum_{j=1}^n\frac{\sum_{l=1}^{M+1}\lambda_{lj}\lambda_{ij}}{(\sum_{l=1}^{M+1}\lambda_{lj}\theta_l)^2}.
     \end{split}
 \end{equation}
 Let \begin{equation*}t_i(n)=\sum_{j=1}^n\frac{\sum_{l=1}^{M+1}\lambda_{lj}\lambda_{ij}}{(\sum_{l=1}^{M+1}\lambda_{lj}\theta_l)^2},\end{equation*} then a deterministic bound for $t_i(n)$ is 
 \begin{equation}\label{eq:t_n_determ_bnd}
     t_i(n)\leq Cn.
 \end{equation} 
 If Assumption \ref{assump:eigen_decay_exp} holds, applying Lemma \ref{lem:eigen_ratio_bnds} without the condition $b_2>2b_1$ leads to 
 \begin{equation}\label{eq:t_n_prob_bnd}
     t_i(n)\leq C\log n\text{ for }1\leq i\leq M, \text{ and }t_{M+1}(n)\leq Cn,
 \end{equation}
 with probability at least $1-3Mn^{-c}$ for any constant $c>0$, if $n>C$. Here $C>0$ depends only on $M,b_1,\dots,b_{M},\theta_{\min},\theta_{\max}$. Therefore, by the definition of $\vect{\Lambda}^{(h_1,\dots,h_{M+1})}(\vect{\theta}_{\varepsilon}^{(k)})$, for any $\vect{\theta}_{\varepsilon}^{(k)}$, 
 \begin{equation}
 \begin{split}
     \|\vect{\Lambda}^{(h_1,\dots,h_{M+1})}(\vect{\theta}_{\varepsilon}^{(k)})\|_2\leq&C\left(\sum_{l=1}^{M+1} h_l+1\right)\left(\frac{M+1}{\theta_{\min}}\right)^{\sum_{l=1}^{M+1} h_l},\\
     \|\vect{\Lambda}^{(h_1,\dots,h_{M+1})}(\vect{\theta}_{\varepsilon}^{(k)})\|_F^2\leq& C\left(\sum_{l=1}^{M+1} h_l+1\right)^2\left(\frac{M+1}{\theta_{\min}}\right)^{2\sum_{l=1}^{M+1} h_l}t_i(n).
 \end{split}
 \end{equation}
 Let $\varepsilon=\frac{\theta_{\min}}{e(M+1)H}$, then by applying Hanson-wright's inequality, one can show that with probability at least
 $1-2\exp\{-c\min\{t,\frac{t^2}{t_i(n)}\}\}$,
 \begin{equation}\label{eq:quadratic_bnds}
     \left|\vect{z}_n^\top \vect{\Lambda}^{(h_1,\dots,h_{M+1})}(\vect{\theta}_{\varepsilon}^{(k)})\vect{z}_n-\tr(\vect{\Lambda}^{(h_1,\dots,h_{M+1})}(\vect{\theta}_{\varepsilon}^{(k)}))\right|\leq (e\varepsilon)^{-\sum_{l=1}^{M+1}h_l}t,
 \end{equation}
 where $c>0$ depends on $M, \theta_{\min},\theta_{\max}, b_1,\dots,b_{M}$.
 Meanwhile, the following lemma provides an upper bound for the residual term:
 \begin{lemma}\label{lem:quadratic_remainder}
   $\exists$ $c, C>0$ depending only on $M,\theta_{\min},\theta_{\max},b_1,\dots,b_{M}$ such that,
    \begin{equation}
    \begin{split}
        &\bbP\left[\sup_{\vect{\theta}\in [\theta_{\min},\theta_{\max}]^{M+1}}|\vect{z}_n^\top \vect{\Lambda}^{(H)}(\vect{\theta})\vect{z}_n-\tr(\vect{\Lambda}^{(H)}(\vect{\theta}))|>e^{-H}(t+Ct_i(n))\right]\\
        \leq& 2\exp\left\{-c\min\left\{\frac{t^2}{t_i(n)},t\right\}\right\}.
    \end{split}
    \end{equation}
 \end{lemma}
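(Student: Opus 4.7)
The plan is to reduce the supremum over $\vect{\theta}$ to a single Hanson--Wright bound on a deterministic diagonal envelope. The first step is a uniform entrywise bound on $\vect{\Lambda}^{(H)}(\vect{\theta})$: starting from the factorization $\vect{\Lambda}^{(H)}_{jj}(\vect{\theta})=\mathrm{RES}_H^{(j)}(\vect{\theta})\,\vect{\Lambda}_{jj}(\widetilde{\vect{\theta}})$, the deterministic bound $|\mathrm{RES}_H^{(j)}(\vect{\theta})|\leq(H+1)(\varepsilon(M+1)/\theta_{\min})^H=(H+1)e^{-H}H^{-H}$ (which is why $\varepsilon=\theta_{\min}/(e(M+1)H)$ was chosen earlier), combined with the observation that $\vect{\Lambda}_{jj}(\widetilde{\vect{\theta}})/\vect{\Lambda}_{jj}(\vect{\theta}^*)\leq(\theta_{\max}/\theta_{\min})^2$ uniformly over $\widetilde{\vect{\theta}}\in[\theta_{\min},\theta_{\max}]^{M+1}$ (since it is the squared ratio of two convex combinations of $\{\lambda_{lj}\}_{l}$ with coefficients in $[\theta_{\min},\theta_{\max}]$), gives $\sup_{\vect{\theta}}|\vect{\Lambda}^{(H)}_{jj}(\vect{\theta})|\leq E_j$ with $E_j:=Ce^{-H}\vect{\Lambda}_{jj}(\vect{\theta}^*)$ for some $C$ depending only on $M,\theta_{\min},\theta_{\max}$.

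Second, I would exploit the non-negativity of $z_j^2$ to collapse the supremum into a quadratic form against the deterministic diagonal matrix $\vect{E}=\mathrm{diag}(E_j)$. Applying the elementary inequality $|a(z_j^2-1)|\leq|a|(z_j^2+1)$ entrywise yields, uniformly in $\vect{\theta}$,
\begin{equation*}
|\vect{z}_n^\top\vect{\Lambda}^{(H)}(\vect{\theta})\vect{z}_n-\tr(\vect{\Lambda}^{(H)}(\vect{\theta}))|\leq\sum_{j}E_j(z_j^2+1)=\vect{z}_n^\top\vect{E}\vect{z}_n+\tr(\vect{E}).
\end{equation*}
Writing the right side as $(\vect{z}_n^\top\vect{E}\vect{z}_n-\tr(\vect{E}))+2\tr(\vect{E})$ and using $\tr(\vect{E})\leq Ce^{-H}\tr(\vect{\Lambda}(\vect{\theta}^*))\leq Ce^{-H}t_i(n)$ produces the deterministic $Ct_i(n)$ offset appearing in the target bound.

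For the centered quadratic form, I would invoke Hanson--Wright on the standard Gaussian $\vect{z}_n\sim\mathcal{N}(0,\vect{I}_n)$ using the norm estimates $\|\vect{E}\|_2\leq Ce^{-H}$ and $\|\vect{E}\|_F^2\leq Ce^{-2H}(\max_j\vect{\Lambda}_{jj}(\vect{\theta}^*))\,\tr(\vect{\Lambda}(\vect{\theta}^*))\leq Ce^{-2H}t_i(n)$. With $s=e^{-H}t$, this gives
\begin{equation*}
\bbP\left(|\vect{z}_n^\top\vect{E}\vect{z}_n-\tr(\vect{E})|>e^{-H}t\right)\leq 2\exp\left(-c\min\{t^2/t_i(n),\,t\}\right),
\end{equation*}
and combining with the deterministic trace bound yields the claimed inequality after absorbing a factor of two into $C$. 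The main obstacle is precisely the $\vect{\theta}$-dependence of $\vect{\Lambda}^{(H)}$; the trick of passing from $\sup_{\vect{\theta}}$ to a single envelope $\vect{E}$ via $|a(z_j^2-1)|\leq|a|(z_j^2+1)$ sidesteps any further covering of $[\theta_{\min},\theta_{\max}]^{M+1}$, and it is tight only because the $(H+1)H^{-H}$ decay of $\mathrm{RES}_H^{(j)}$ leaves plenty of slack to match the target $e^{-H}$ scale.
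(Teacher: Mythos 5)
Your proposal is correct and follows essentially the same route as the paper: both arguments replace the $\vect{\theta}$-dependent diagonal entries of $\vect{\Lambda}^{(H)}(\vect{\theta})$ by a deterministic envelope of size $Ce^{-H}\vect{\Lambda}_{jj}$ (using the $(H+1)(\varepsilon(M+1)/\theta_{\min})^H$ bound on $\mathrm{RES}_H^{(j)}$), absorb the resulting mean term $Ce^{-H}t_i(n)$ into the threshold, and then apply a Bernstein/Hanson--Wright bound to a sum of independent centered sub-exponential variables with the norm estimates $\|\cdot\|_2\leq Ce^{-H}$ and $\|\cdot\|_F^2\leq Ce^{-2H}t_i(n)$. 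The only cosmetic difference is that you center $z_j^2+1$ while the paper centers $|z_{nj}^2-1|$; your explicit construction of the $\vect{\theta}$-independent envelope $E_j$ is in fact a slightly cleaner handling of the uniformity over $\vect{\theta}$ than the paper's.
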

 Now we take a union bound for each term in \eqref{eq:quadratic_sums}, 
 then with probability at least 
 \begin{equation}
     \begin{split}
       &1-2\left({H+M+1\choose M+1}N+1\right)\exp\left\{-c\min\left\{\frac{t^2}{t_i(n)},t\right\}\right\}\\
 \geq &1-CH^{2M+2}\exp\left\{-c\min\left\{\frac{t^2}{t_i(n)},t\right\}\right\},
 \end{split}
 \end{equation}
 we have
 \begin{equation}
     \begin{split}
       &\sup_{\vect{\theta}\in [\theta_{\min},\theta_{\max}]^{M+1}}\left|\vect{z}_n^\top \vect{\Lambda}(\vect{\theta})\vect{z}_n-\tr(\vect{\Lambda}(\vect{\theta}))\right|\\
       \leq &t\left[\sum_{h=0}^{H-1}{h+M+1\choose M+1}e^{-h}+e^{-H}\right]+Ce^{-H}t_i(n)\\
       \leq&C(t+e^{-H}t_i(n)).
     \end{split}
 \end{equation}
 If Assumption \ref{assump:eigen_decay_exp} holds, $s_i(n)=\tau\log n$ for some $\tau$ satisfying \eqref{eq:tau_lowerbnd}, we apply the probabilistic bound \eqref{eq:t_n_prob_bnd} on $t_i(n)$. For any $x>0$, let $H=\log \frac{1}{x}$ and $t=x\log n$, then with probability at least 
  \begin{equation*}1-Cn^{-c}-C(\log x)^{2(M+1)}\exp\left\{-c\log n\min\left\{x^2,x\right\}\right\}, \end{equation*} we have
 \begin{equation}
     \sup_{\vect{\theta}\in [\theta_{\min},\theta_{\max}]^{M+1}}\left|\vect{z}_n^\top \vect{\Lambda}(\vect{\theta})\vect{z}_n-\tr(\vect{\Lambda}(\vect{\theta}))\right|\leq Cx\log n,
 \end{equation}
 which implies
 \begin{equation}
     \sup_{\vect{\theta}\in [\theta_{\min},\theta_{\max}]^{M+1}}\frac{n}{s_i(n)}\left|\left(\nabla \ell(\vect{\theta})\right)_i-\left(\nabla \ell^*(\vect{\theta})\right)_i\right|\leq Cx.
 \end{equation}
 Otherwise, if $s_i(n)=n$, we apply the deterministic bound \eqref{eq:t_n_determ_bnd} on $t_i(n)$. For any $x>0$, let $H=\log \frac{1}{x}$ and $t=xn$, then with probability at least 
  \begin{equation*}1-C(\log x)^{2(M+1)}\exp\left\{-cn\min\left\{x^2,x\right\}\right\}, \end{equation*} we have
 \begin{equation}
     \sup_{\vect{\theta}\in [\theta_{\min},\theta_{\max}]^{M+1}}\left|\vect{z}_n^\top \vect{\Lambda}(\vect{\theta})\vect{z}_n-\tr(\vect{\Lambda}(\vect{\theta}))\right|\leq Cxn,
 \end{equation}
 which implies
 \begin{equation}
     \frac{n}{s_i(n)}\left|\left(\nabla \ell(\vect{\theta})\right)_i-\left(\nabla \ell^*(\vect{\theta})\right)_i\right|\leq Cx.
 \end{equation}
 \end{proof}
 \begin{proof}[proof of Lemma \ref{lem:eigen_ratio_bnds}]
In order to prove Lemma \ref{lem:eigen_ratio_bnds}, we need to derive upper and lower bounds for $\lambda_{lj}, 1\leq l\leq M$ w.h.p. First we restate Theorem 1 and Theorem 4 in \cite{braun2006accurate} on the bounds for $\lambda_{lj}$ in the following:
    \begin{lemma}\label{lem:eigenval_err_bnds}
    Let $k$ be a Mercer kernel on a probability space $\mathcal{X}$ with probability measure $\bbP$, satisfying $k(x,x)\leq 1$ for all $x\in \mathcal{X}$, with eigenvalues $\{\lambda_i^*\}_{i=1}^{\infty}$. Let $\vect{K}_{f,n}\in \bbR^{n\times n}$ be the empirical kernel matrix evaluated on data $\{\vect{x}_1,\dots,\vect{x}_n\}$ i.i.d. sampled from $\bbP$, then the eigenvalues $\lambda_i(\vect{K}_{f,n})$ satisfies the following bound for $1\leq j, r\leq n$:
    \begin{equation*}
        \left|\frac{\lambda_j(\vect{K}_{f,n})}{n}-\lambda_j^*\right|\leq \lambda_j^* C(r,n)+E(r,n),
    \end{equation*}
    and for any $1\leq r\leq n$, there are two bounds for $C(r,n), E(r,n)$:
    \begin{enumerate}[label=(\roman*)]
        \item With probability at least $1-\delta$,
    \begin{equation}\label{eq:eigenval_err_bnd1}
        \begin{split}
            C(r,n)<&r\sqrt{\frac{2}{n\lambda_r^*}\log \frac{2r(r+1)}{\delta}}+\frac{4r}{3n\lambda^*_r}\log \frac{2r(r+1)}{\delta},\\
            E(r,n)<&\lambda_r^*+\sum_{i=r+1}^{\infty}\lambda_i^*+\sqrt{\frac{2\sum_{i=r+1}^{\infty}\lambda_i^*}{n}\log \frac{2}{\delta}}+\frac{2}{3n}\log\frac{2}{\delta};
        \end{split}
    \end{equation}
    \item With probability at least $1-\delta$,
    \begin{equation}\label{eq:eigenval_err_bnd2}
        \begin{split}
            C(r,n)<r\sqrt{\frac{r(r+1)}{n\delta\lambda_r^*}},\quad E(r,n)<\lambda_r^*+\sum_{i=r+1}^{\infty}\lambda_i^*+\sqrt{\frac{2\sum_{i=r+1}^{\infty}\lambda_i^*}{n\delta}}.
        \end{split}
    \end{equation} 
    \end{enumerate} 
    \end{lemma}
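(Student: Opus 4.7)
Since this lemma simply restates Theorem 1 and Theorem 4 of \cite{braun2006accurate}, the natural strategy is to follow their argument, which I sketch below. The key idea is to approximate the full kernel $k$ by its rank-$r$ truncation obtained from the Mercer decomposition, and then separately control (i) how well the empirical top eigenvalues track the truncated population eigenvalues, and (ii) the contribution of the tail eigenvalues via operator-norm perturbation.

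\textbf{Step 1: Mercer decomposition and truncation.} Write $k(x,y)=\sum_{i=1}^\infty \lambda_i^\ast \phi_i(x)\phi_i(y)$, where $\{\phi_i\}$ are the orthonormal eigenfunctions of $\mathcal{K}$. Fix $r$ and split $k=k_r+\tilde k_r$ with $k_r(x,y)=\sum_{i\le r}\lambda_i^\ast\phi_i(x)\phi_i(y)$. Correspondingly, $\vect{K}_{f,n}=\vect{K}_n^{(r)}+\tilde{\vect{K}}_n^{(r)}$, where $\vect{K}_n^{(r)}=\vect{\Phi}_r\mathrm{diag}(\lambda_1^\ast,\dots,\lambda_r^\ast)\vect{\Phi}_r^\top$ and $\vect{\Phi}_r\in\mathbb{R}^{n\times r}$ has entries $\phi_j(\vect{x}_i)$.

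\textbf{Step 2: Relative bound on top-$r$ eigenvalues.} The matrix $\vect{K}_n^{(r)}$ has the same nonzero eigenvalues as $\vect{M}_r:=\mathrm{diag}(\lambda_1^\ast,\dots,\lambda_r^\ast)^{1/2}(\vect{\Phi}_r^\top\vect{\Phi}_r)\mathrm{diag}(\lambda_1^\ast,\dots,\lambda_r^\ast)^{1/2}$. Since $\mathbb{E}[\vect{\Phi}_r^\top\vect{\Phi}_r/n]=\vect{I}_r$, one applies a matrix Bernstein inequality (or a union bound over $r(r+1)/2$ Bernstein scalar bounds on the entries, using $|\phi_i\phi_j|\le 1/\sqrt{\lambda_i^\ast\lambda_j^\ast}$ from the Mercer condition $k(x,x)\le 1$) to show $\|\vect{\Phi}_r^\top\vect{\Phi}_r/n-\vect{I}_r\|_{op}$ is small with high probability, yielding a multiplicative factor $C(r,n)$ on $\lambda_j^\ast$. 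The two different $C(r,n)$ bounds correspond to sharp Bernstein (with $\log(1/\delta)$) versus a Chebyshev-style second-moment argument (with $1/\sqrt{\delta}$).

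\textbf{Step 3: Operator-norm control of the tail.} For the perturbation term $\tilde{\vect{K}}_n^{(r)}$, one uses $\|\tilde{\vect{K}}_n^{(r)}\|_{op}\le \mathrm{tr}(\tilde{\vect{K}}_n^{(r)})/1+\text{(deviation)}$, where $\mathbb{E}[\mathrm{tr}(\tilde{\vect{K}}_n^{(r)})/n]=\sum_{i>r}\lambda_i^\ast$. A scalar Bernstein bound on $\frac{1}{n}\sum_{i=1}^n \tilde k_r(\vect{x}_i,\vect{x}_i)$ produces the additive terms $\sqrt{\sum_{i>r}\lambda_i^\ast \log(1/\delta)/n}$ and $\log(1/\delta)/n$ in \eqref{eq:eigenval_err_bnd1}, while a variance bound yields the $1/\sqrt{\delta}$ version in \eqref{eq:eigenval_err_bnd2}. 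The extra $\lambda_r^\ast$ term comes from absorbing the boundary between retained and truncated spectrum.

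\textbf{Step 4: Combining via Weyl's inequality.} Applying Weyl's inequality to $\vect{K}_{f,n}=\vect{K}_n^{(r)}+\tilde{\vect{K}}_n^{(r)}$ gives $|\lambda_j(\vect{K}_{f,n})/n-\lambda_j(\vect{K}_n^{(r)})/n|\le \|\tilde{\vect{K}}_n^{(r)}\|_{op}/n$, then combining with Step 2 yields the stated form $|\lambda_j(\vect{K}_{f,n})/n-\lambda_j^\ast|\le \lambda_j^\ast C(r,n)+E(r,n)$ after a union bound over the failure events of Steps 2 and 3.

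The main obstacle in this argument is Step 2: producing a concentration bound for $\vect{\Phi}_r^\top\vect{\Phi}_r/n$ that retains the \emph{relative} dependence on $\lambda_r^\ast$ (rather than degenerating for small eigenvalues), which is why Braun's bound on $C(r,n)$ carries the $1/\sqrt{\lambda_r^\ast}$ factor. The rest is bookkeeping of deviation terms.
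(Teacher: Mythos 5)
The paper offers no proof of this lemma at all---it is explicitly imported as a restatement of Theorems 1 and 4 of \cite{braun2006accurate}---so there is nothing to diverge from, and your sketch (rank-$r$ truncation of the Mercer expansion, relative control of $\vect{\Phi}_r^\top\vect{\Phi}_r/n$ using $|\phi_i(x)|\leq 1/\sqrt{\lambda_i^*}$, trace-based operator-norm control of the PSD tail, and a Weyl-type combination) is a faithful outline of Braun's own argument. No gap to flag; the two probability statements correctly correspond to the Bernstein versus Chebyshev variants in that reference.
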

We consider two different upper bounds for $\lambda_{lj}$ that could be useful in later arguments. First we apply Lemma \ref{lem:eigenval_err_bnds} on $\vect{K}_{f,n}^{(l)}$. In particular, plug $r=j$ for each $1\leq j\leq n$ into \eqref{eq:eigenval_err_bnd2} and let $\delta=n^{-(1+\alpha)}$ for some $\alpha>0$. Then with probability at least $1-n^{-\alpha}$, for all $1\leq j\leq n$,
    \begin{equation*}
    	\begin{split}
    	    1+C(j,n)<&C_l^{-\frac{1}{2}}j^2n^{\frac{\alpha}{2}}e^{\frac{b_lj}{2}}\sqrt{\frac{j+1}{j}}+1<Cj^2n^{\frac{\alpha}{2}}e^{\frac{b_lj}{2}},\\
    	    E(r,n)<&\frac{C_le^{-b_lj}}{1-e^{-b_l}}+\sqrt{\frac{2C_le^{-b_l}}{1-e^{-b_l}}}e^{-\frac{b_lj}{2}}n^{\frac{\alpha}{2}}.
    	\end{split}
    \end{equation*}
    Thus we have
    \begin{equation}\label{eq:eigen_upp_bnd2_sigma_f}
    	\begin{split}
    	    \lambda_{lj}\leq &\left(Cj^2+\frac{C_ln^{-\frac{\alpha}{2}}e^{-\frac{b_l}{2}j}}{1-e^{-b_l}}+\sqrt{\frac{2C_le^{-b_l}}{1-e^{-b_l}}}\right)n^{1+\frac{\alpha}{2}}e^{-\frac{b_lj}{2}}\\
    	    \leq &C(\eta)n^{1+\frac{\alpha}{2}}e^{-\frac{b_lj}{2\eta}},
    	\end{split}
    \end{equation}
    where the last line holds for any $\eta>1$, and $C(\eta)>0$ depends on $b_l,\eta$. We will specify $\eta$ later to suit our needs.
    	    
    The second upper bound for $\lambda_{lj}$ requires applying \eqref{eq:eigenval_err_bnd2} with $r=\frac{1+\alpha}{b_l}\log n$, and $\delta=n^{-\alpha}$. Then with probability at least $1-n^{-\alpha}$,
    \begin{equation*}
    	1+C(r,n)<C_l^{-\frac{1}{2}}\sqrt{r^3(r+1)n^{2\alpha}}+1\leq C(\log n)^2n^{\alpha},
    \end{equation*}
    \begin{equation*}
    	\begin{split}
    	    E(r,n)<\frac{n^{-(1+\alpha)}}{1-e^{-b_l}}+\sqrt{\frac{2e^{-b_l}n^{-(1+\alpha)}}{(1-e^{-b_l})n^{(1-\alpha)}}}\leq \frac{C}{n},
        \end{split}
    \end{equation*}
   where $C$ depends on $b_l$. Thus $\lambda_{lj}\leq C(\log n)^2n^{1+\alpha}e^{-b_lj}+C$. Thus for any $\alpha>0$, with probability at least $1-2Mn^{-\alpha}$,
    \begin{equation}\label{eq:eigenval_upp_bnds}
    	\lambda_{lj}\leq  \min\left\{C(\eta)n^{1+\frac{\alpha}{2}}e^{-\frac{b_lj}{2\eta}},C(\log n)^2n^{1+\alpha}e^{-b_lj}+C\right\},
    \end{equation}
    holds for $\eta>1$, $1\leq l\leq M$, $1\leq j\leq n$, where $C>0$ depends on $b_1,\dots,b_{M}$, $C(\eta)$ depends on $b_1,\dots,b_M, \eta$.
    
    While for lower bounding $\lambda_{lj}$, we apply \eqref{eq:eigenval_err_bnd1} in Lemma \ref{lem:eigenval_err_bnds} with $r=\frac{\epsilon}{b_l}\log n$ for some $0<\epsilon<1$, and $\delta=n^{-\alpha}$ for some $0<\alpha<1$. Then when $n>C(\epsilon)$ for some constant $C(\epsilon)>0$ depending on $b_l, \epsilon$, with probability at least $1-n^{-\alpha}$,
    \begin{equation*}
    	\begin{split}
    	    C(r,n)<&r\sqrt{\frac{2\log\left[2r(r+1)n^{\alpha}\right]}{C_ln^{1-\epsilon}}}+\frac{4r\log\left[2r(r+1)n^{\alpha}\right]}{3C_ln^{1-\epsilon}}<\frac{1}{2},\\
    	    E(r,n)<&\frac{C_l}{(1-e^{-b_l})n^{\epsilon}}+\sqrt{\frac{2C_le^{-b_l}}{1-e^{-b_l}}}\sqrt{\frac{\log 2n^{\alpha}}{n^{1+\epsilon}}}+\frac{2}{3n}\log 2n^{\alpha}<Cn^{-\epsilon},
       \end{split}
    \end{equation*}
    thus $\lambda_{lj}\geq \frac{C_l}{2}ne^{-b_l j}-Cn^{1-\epsilon}$ for $C>0$ depending on $b_l$.
    	    
    Therefore, for any $0<\epsilon,\alpha<1$, if $n>C(\epsilon)$ for $C(\epsilon)>0$ depending on $b_1,\dots,b_{M},\epsilon$, then with probability at least $1-Mn^{-\alpha}$,
    \begin{equation}\label{eq:eigenval_lwr_bnds}
    \lambda_{lj}\geq \frac{C_l}{2}ne^{-b_lj}-Cn^{1-\epsilon},
    \end{equation}
    holds for $1\leq l\leq M$, $1\leq j\leq n$, where $C>0$ depends on $b_1,\dots,b_{M}$.
    Now we are ready to prove the bounds for $\sum_{j=1}^n \frac{\lambda_{lj}\lambda_{l'j}}{\left(\sum_{h=1}^{M+1} \theta_h\lambda_{hj}\right)^2}$ for $1\leq l,l'\leq M+1$.
    \begin{enumerate}
    	\item $l=l'=1$\\
    	First we derive an upper bound. Let $\eta=\frac{3}{2}$ in \eqref{eq:eigenval_upp_bnds}, then we have
    	\begin{equation}
    	    \begin{split}
    	        \sum_{j=1}^n \frac{\lambda_{1j}^2}{\left(\sum_{h=1}^{M+1} \theta_h\lambda_{hj}\right)^2}\leq&\frac{|\{n^{1+\frac{\alpha}{2}}e^{-\frac{b_1j}{3}}> 1\}|}{\theta_{\min}^2}+\sum_{n^{1+\frac{\alpha}{2}}e^{-\frac{b_1j}{3}}\leq1}\frac{\lambda_{1j}^2}{\theta_{\min}^2}\\
    	        \leq&\frac{|\{n^{1+\frac{\alpha}{2}}e^{-\frac{b_1j}{3}}> 1\}|}{\theta_{\min}^2}+\frac{Cn^{2+\alpha}}{\theta_{\min}^2}\sum_{e^{-\frac{2b_1j}{3}}\leq n^{-2-\alpha}}e^{-\frac{2b_1}{3}j}.
    	     \end{split}
    	\end{equation}
    	Since 
    	\begin{equation}
    	    \begin{split}
    	        n^{1+\frac{\alpha}{2}}e^{-\frac{b_1j}{3}}>1\Rightarrow j< \frac{6+3\alpha}{2b_1}\log n,
    	    \end{split}
    	\end{equation}
    	one can show that
    	\begin{equation}
    	    \begin{split}
    	        \sum_{j=1}^n \frac{\lambda_{1j}^2}{\left(\sum_{h=1}^{M+1} \theta_h\lambda_{hj}\right)^2}\leq&\frac{6+3\alpha}{2b_1\theta_{\min}^2}\log n+\frac{C}{\theta_{\min}^2}\leq \frac{4+2\alpha}{b_1\theta_{\min}^2}\log n,
    	     \end{split}
       \end{equation}
    	when $n>C$ for $C$ depending on $b_1$.
    	In terms of the lower bound, we discuss the proof for two cases separately:
    	\begin{itemize}
    	\item When $M=1$, first note that
    	\begin{equation}
    	    \begin{split}
    	        \sum_{j=1}^n \frac{\lambda_{1j}^2}{\left(\sum_{h=1}^{2} \theta_h\lambda_{hj}\right)^2}\geq &\sum_{j=1}^n \frac{\lambda_{1j}^2}{4\theta^2_{\max}\max_h\lambda_{hj}^2}\\
    	        \geq&\frac{\left|\{j:\lambda_{1j}=\max_h\lambda_{hj}\}\right|}{4\theta_{\max}^2}.
    	     \end{split}
    	 \end{equation}
    	Due to \eqref{eq:eigenval_lwr_bnds}, we have
    	\begin{equation}
    	    \begin{split}
    	        \lambda_{1j}=\max_h\lambda_{hj}\Leftarrow&\frac{C_1}{2}ne^{-b_1j}\geq Cn^{1-\epsilon}+C\\
    	        \Leftarrow& e^{-b_1j}\geq Cn^{-\epsilon}\\
    	        \Leftarrow& j\leq\frac{\epsilon}{b_1}\log n-C,
    	    \end{split}
    	\end{equation}
    	when $n>C$ for some $C>0$ depending on $b$, which implies
    	\begin{equation*}
    	    \left|\{j:\lambda_{1j}=\max_h\lambda_{hj}\}\right|\geq \frac{\epsilon}{b_1}\log n-C\geq \frac{\epsilon}{2b_1}\log n,
    	\end{equation*}
    	if $n>C$. Thus we have 
    	\begin{equation*}
    	    \begin{split}
    	        \sum_{j=1}^n \frac{\lambda_{1j}^2}{\left(\sum_{h=1}^{2} \theta_h\lambda_{hj}\right)^2}\geq \frac{\epsilon\log n}{8b_1\theta_{\max}^2},
    	    \end{split}
    	\end{equation*}
    	\item When $M\geq 2$ and $b_2>2b_1$, first note that
    	\begin{equation}
    	    \begin{split}
    	        \sum_{j=1}^n \frac{\lambda_{1j}^2}{\left(\sum_{h=1}^{M+1} \theta_h\lambda_{hj}\right)^2}\geq &\sum_{j=1}^n \frac{\lambda_{1j}^2}{(M+1)^2\theta^2_{\max}\max_h\lambda_{hj}^2}\\
    	        \geq&\frac{\left|\{j:\lambda_{1j}=\max_h\lambda_{hj}\}\right|}{(M+1)^2\theta_{\max}^2}.
    	     \end{split}
    	 \end{equation}
    	Due to \eqref{eq:eigenval_lwr_bnds}, we have
    	\begin{equation}
    	    \begin{split}
    	        \lambda_{1j}=\max_h\lambda_{hj}\Leftarrow&\frac{C_1}{2}ne^{-b_1j}\geq Cn^{1-\epsilon}+C(\log n)^2n^{1+\alpha}e^{-b_2j}+C\\
    	        \Leftarrow& e^{-b_1j}\geq Cn^{-\epsilon}+C(\log n)^2n^{\alpha}e^{-b_2j}\\
    	        \Leftarrow&\frac{\alpha}{b_2-b_1}\log n+\frac{3}{b_2-b_1}\log\log n\leq j\leq\frac{\epsilon}{b_1}\log n-C,
    	    \end{split}
    	\end{equation}
    	when $n>C$ for some $C>0$ depending on $b_1,\dots,b_{M}$, which implies
    	\begin{equation*}
    	    \left|\{j:\lambda_{1j}=\max_h\lambda_{hj}\}\right|\geq \frac{\epsilon b_2-(\epsilon+\alpha)b_1}{b_1(b_2-b_1)}\log n-\frac{3\log\log n}{b_2-b_1}-C.
    	\end{equation*}
    	Thus we have 
    	\begin{equation*}
    	    \begin{split}
    	        &\sum_{j=1}^n \frac{\lambda_{1j}^2}{\left(\sum_{h=1}^{M+1} \theta_h\lambda_{hj}\right)^2}\\
    	        \geq& \frac{\epsilon b_2-(\epsilon+\alpha)b_1}{b_1(b_2-b_1)(M+1)^2\theta_{\max}^2}\log n-\frac{3\log\log n}{(b_2-b_1)(M+1)^2\theta_{\max}^2}-C\\
    	        \geq&\frac{\epsilon(b_2-2b_1)\log n}{2b_1(b_2-b_1)(M+1)^2\theta_{\max}^2},
    	    \end{split}
    	\end{equation*}
    	if $n>C$ and $\alpha\leq \epsilon$.
    	\end{itemize}
      \item $l=l'=M+1$\\
    	   The upper bound for $\sum_{j=1}^n \frac{\lambda_{M+1,j}^2}{\left(\sum_{h=1}^{M+1}\theta_h\lambda_{hj}\right)^2}$ in Lemma \ref{lem:eigen_ratio_bnds} is straightforward, since $\sum_{h=1}^{M+1}\theta_h\lambda_{hj}\geq \theta_{\min}\lambda_{M+1,j}$. While for the lower bound, note that $\lambda_{M+1,j}=1$, and thus
    	   \begin{equation}
    	       \sum_{j=1}^n \frac{\lambda_{M+1,j}^2}{\left(\sum_{h=1}^{M+1}\theta_h\lambda_{hj}\right)^2}\geq \frac{|\{j:\sum_{h=1}^{M+1}\theta_h\lambda_{hj}\leq 2\theta_{\max}\}|}{4\theta_{\max}^2}.
    	   \end{equation}
    	   Meanwhile, let $\eta=\frac{3}{2}$ in \eqref{eq:eigenval_upp_bnds}, then one can show that
    	   \begin{equation}
    	       \begin{split}
    	           \sum_{h=1}^{M+1}\theta_h\lambda_{hj}\leq 2\theta_{\max}\Leftarrow & CM\theta_{\max}n^{1+\frac{\alpha}{2}}e^{-\frac{b_1j}{3}}\leq\theta_{\max}\\
    	           \Leftarrow&j\geq \frac{6+3\alpha}{2b_1}\log n+C\\
    	           \Leftarrow&j\geq \frac{6+3\alpha}{b_1}\log n\\
    	       \end{split}
    	   \end{equation}
    	   when $n>C$ for $C>0$ depending on $M,b_1,\dots,b_{M}$.
    	   Therefore, 
    	   \begin{equation}
    	       \sum_{j=1}^n \frac{\lambda_{M+1,j}^2}{\left(\sum_{h=1}^{M+1}\theta_h\lambda_{hj}\right)^2}\geq \frac{n-\frac{6+3\alpha}{b_1}\log n}{4\theta_{\max}^2}.
    	   \end{equation}
      \item $1<l\leq l'\leq M$\\
      First note that by similar arguments from the first case where $l=l'=1$, one can show that
      \begin{equation}
          \sum_{j=1}^n \frac{\lambda_{lj}\lambda_{l'j}}{\left(\sum_{h=1}^{M+1} \theta_h\lambda_{hj}\right)^2}\leq \frac{4+2\alpha}{b_l\theta_{\min}^2}\log n.
      \end{equation}
      Furthermore, if $M\geq 2$ and $b_2>2b_1$ hold, then we can utilize the following upper bound for each term $\frac{\lambda_{lj}\lambda_{l'j}}{\left(\sum_{h=1}^{M+1} \theta_h\lambda_{hj}\right)^2}$:
      \begin{equation}
          \begin{split}
              \frac{\lambda_{lj}\lambda_{l'j}}{\left(\sum_{h=1}^{M+1} \theta_h\lambda_{hj}\right)^2}\leq &\min\left\{\frac{1}{\theta_{\min}^2},\frac{C(\eta)n^{2+\alpha}e^{-\frac{b_l}{\eta}j}}{\theta_{\min}^2\left(\lambda_{1j}+1\right)^2}\right\},
          \end{split}
      \end{equation}
      where $\lambda_{1j}\geq \frac{C_1}{2}ne^{-b_1j}-Cn^{1-\epsilon}$.
      When $j<\frac{\epsilon}{b_1}\log n-C$ for some $C$ depending on $b_1,\dots,b_{M}$, we have $Cn^{1-\epsilon}<\frac{C_1}{4}ne^{-b_1j}$, and thus \begin{equation*}
          \frac{\lambda_{lj}\lambda_{l'j}}{\left(\sum_{h=1}^{M+1} \theta_h\lambda_{hj}\right)^2}\leq \frac{C(\eta)n^{\alpha}}{\theta_{\min}^2}e^{(2b_1-\frac{b_l}{\eta})j};
      \end{equation*}
      while for $j\geq \frac{\epsilon}{b_1}\log n-C$, we have the bound
      \begin{equation*}\frac{\lambda_{lj}\lambda_{l'j}}{\left(\sum_{h=1}^{M+1} \theta_h\lambda_{hj}\right)^2}\leq \frac{C(\eta)n^{2+\alpha}e^{-\frac{b_l}{\eta}j}}{\theta_{\min}^2}.\end{equation*}
      Let $\frac{2b_1}{b_2}<\epsilon<1$, $\eta=\frac{6b_1+\epsilon b_2}{8b_1}$, $\alpha\leq \frac{2\epsilon b_2-4b_1}{6b_1+\epsilon b_2}$, then one can show that
      \begin{equation}
          \begin{split}
              \sum_{j=1}^n \frac{\lambda_{lj}\lambda_{l'j}}{\left(\sum_{h=1}^{M+1} \theta_h\lambda_{hj}\right)^2}\leq&\frac{\alpha\log n}{\theta_{\min}^2(\frac{b_l}{\eta}-2b_1)}+\sum_{j=\lceil\frac{\alpha}{b_l/\eta-2b_1}\log n\rceil}^{\lfloor\frac{\epsilon}{b_1}\log n-C\rfloor}\frac{C(\eta)n^{\alpha}e^{-(\frac{b_l}{\eta}-2b_1)j}}{\theta_{\min}^2}\\
              &+\sum_{j=\lceil\frac{\epsilon}{b_1}\log n-C\rceil}^n \frac{C(\eta)n^{2+\alpha}e^{-\frac{b_lj}{\eta}}}{\theta_{\min}^2}\\
              \leq&\frac{(6b_1+b_2)\alpha}{2b_1(4b_l-b_2-6b_1)\theta_{\min}^2}\log n+\frac{C(\epsilon)}{\theta_{\min}^2},
          \end{split}
      \end{equation}
      for $C(\epsilon)>0$ depending on $\epsilon, b_1,\dots,b_{M}$. Here the last line is due to that when $n>C(\epsilon)$, we have $\frac{\epsilon b_2}{b_1}-\frac{Cb_2}{\log n}\geq \frac{2b_1+\epsilon b_2}{2b_1}$, and thus
       \begin{equation*}
      n^{2+\alpha}\exp\left\{-\frac{b_l}{\eta}\left(\frac{\epsilon}{b_1}\log n-C\right)\right\}\leq n^{\frac{4b_1-2\epsilon b_2}{6b_1+\epsilon b_2}+\alpha}\leq 1.
      \end{equation*}
      \item $1=l<l'\leq M$\\
      Similarly from the previous case, we first have the bound 
      \begin{equation}
          \sum_{j=1}^n \frac{\lambda_{lj}\lambda_{l'j}}{\left(\sum_{h=1}^{M+1} \theta_h\lambda_{hj}\right)^2}\leq \frac{4+2\alpha}{b_1\theta_{\min}^2}\log n,
      \end{equation}
      which holds with as long as $0<b_1<b_2<\dots<b_M$.
      If $M\geq 2$ and $b_2>2b_1$ hold, then we bound each term in the summation as follows
      \begin{equation}
          \begin{split}
              \frac{\lambda_{1j}\lambda_{l'j}}{\left(\sum_{h=1}^{M+1} \theta_h\lambda_{hj}\right)^2}\leq &\min\left\{\frac{1}{\theta_{\min}^2},\frac{C(\eta)n^{1+\frac{\alpha}{2}}e^{-\frac{b_{l'}}{2\eta}j}}{\theta_{\min}^2\left(\lambda_{1j}+1\right)}\right\}.
          \end{split}
      \end{equation}
     Let $\frac{2b_1}{b_2}<\epsilon<1$, $\eta=\frac{6b_1+\epsilon b_2}{8b_1}$, $\alpha\leq \frac{2\epsilon b_2-4b_1}{6b_1+\epsilon b_2}$, then one can show that
     \begin{equation}
         \begin{split}
             \sum_{j=1}^n \frac{\lambda_{1j}\lambda_{l'j}}{\left(\sum_{h=1}^{M+1} \theta_h\lambda_{hj}\right)^2}\leq&\frac{\alpha\log n}{\theta_{\min}^2(\frac{b_{l'}}{\eta}-2b_1)}+\sum_{j=\lceil\frac{\alpha}{b_{l'}/\eta-2b_1}\log n\rceil}^{\lfloor\frac{\epsilon}{b_1}\log n-C\rfloor}\frac{C(\eta)n^{\frac{\alpha}{2}}e^{-(\frac{b_{l'}}{2\eta}-b_1)j}}{\theta_{\min}^2}\\
              &+\sum_{j=\lceil\frac{\epsilon}{b_1}\log n-C\rceil}^n \frac{C(\eta)n^{1+\frac{\alpha}{2}}e^{-\frac{b_{l'}j}{2\eta}}}{\theta_{\min}^2}\\
              \leq&\frac{(6b_1+b_2)\alpha}{2b_1(4b_{l'}-b_2-6b_1)\theta_{\min}^2}\log n+\frac{C(\epsilon)}{\theta_{\min}^2},
         \end{split}
     \end{equation}
     for $C(\epsilon)>0$ depending on $\epsilon, b_1,\dots,b_{M}$. 
     \item $1<l<l'=M+1$\\
     Note that
     \begin{equation}
         \begin{split}
             \frac{\lambda_{lj}\lambda_{M+1,j}}{\left(\sum_{h=1}^{M+1} \theta_h\lambda_{hj}\right)^2}\leq &\min\left\{\frac{1}{\theta_{\min}^2},\frac{C(\eta)n^{1+\frac{\alpha}{2}}e^{-\frac{b_{l}}{2\eta}j}}{\theta_{\min}^2\left(\lambda_{1j}+1\right)}\right\},
         \end{split}
     \end{equation}
     thus based on the same argument as the previous case, we have
     \begin{equation}
          \sum_{j=1}^n \frac{\lambda_{lj}\lambda_{M+1,j}}{\left(\sum_{h=1}^{M+1} \theta_h\lambda_{hj}\right)^2}\leq \frac{4+2\alpha}{b_1\theta_{\min}^2}\log n,
      \end{equation}
      and when $M\geq 2$, $b_2>2b_1$,
     \begin{equation}
         \sum_{j=1}^n \frac{\lambda_{lj}\lambda_{M+1,j}}{\left(\sum_{h=1}^{M+1} \theta_h\lambda_{hj}\right)^2}\leq\frac{(6b_1+b_2)\alpha}{2b_1(4b_l-b_2-6b_1)\theta_{\min}^2}\log n+\frac{C(\epsilon)}{\theta_{\min}^2}.
     \end{equation}
    \item $l=1, l'=M+1$\\
    Since 
    \begin{equation}
        \frac{\lambda_{1j}\lambda_{M+1,j}}{\left(\sum_{h=1}^{M+1} \theta_h\lambda_{hj}\right)^2}\leq \min\left\{\frac{1}{4\theta_{\min}^2},\frac{C(\eta)n^{1+\frac{\alpha}{2}}e^{-\frac{b_1}{2\eta}j}}{\theta_{\min}^2}\right\},
    \end{equation}
    one can show that
    \begin{equation}
        \begin{split}
            \sum_{j=1}^n\frac{\lambda_{1j}\lambda_{M+1,j}}{\left(\sum_{h=1}^{M+1} \theta_h\lambda_{hj}\right)^2}\leq& \frac{(2+\alpha)\eta\log n}{4b_1\theta_{\min}^2}+\frac{C(\eta)n^{1+\frac{\alpha}{2}}}{\theta_{\min}^2}\sum_{j=\lceil\frac{(2+\alpha)\eta}{b_1}\log n\rceil}^ne^{-\frac{b_1}{2\eta}j}\\
            \leq&\left(\frac{(2+\alpha)\eta}{4}+\frac{C(\eta)}{\log n}\right)\frac{\log n}{b_1\theta_{\min}^2}.
        \end{split}
    \end{equation}
    Let $\eta=\frac{8}{7}$, then when $n>C$ for some $C>0$ depending on $b_1,\dots,b_{M}$,  \begin{equation*}\sum_{j=1}^n\frac{\lambda_{1j}\lambda_{Mj}}{\left(\sum_{h=1}^{M+1} \theta_h\lambda_{hj}\right)^2}\leq\frac{(5+2\alpha)\log n}{7b_1\theta_{\min}^2}.\end{equation*}
\end{enumerate}
Therefore, for any $\alpha>0$, if $n>C$ for $C>0$ depending on $M,b_1,\dots,b_{M}$, then with probability at least $1-3Mn^{-\alpha}$, \eqref{eq:eigen_ratio_bnds1} holds.

If $M=1$, then for any $0<\alpha,\epsilon<1$, with probability at least $1-2n^{-\alpha}$, \eqref{eq:eigen_ratio_bnds3} holds in addition to \eqref{eq:eigen_ratio_bnds1}; If $M\geq 2$ and $b_2>2b_1$ hold, then any $\frac{2b_1}{b_2}<\epsilon<1$, $0<\alpha< \min\left\{\frac{2\epsilon b_2-4b_1}{6b_1+\epsilon b_2},1\right\}$, if $n>C(\epsilon)$ for $C(\epsilon)>0$ depending on $M,b_1,\dots,b_{M},\epsilon$, with probability at least $1-3Mn^{-\alpha}$, \eqref{eq:eigen_ratio_bnds2} holds in addition to \eqref{eq:eigen_ratio_bnds1}.
\end{proof}
   \begin{proof}[Proof of Lemma \ref{lem:quadratic_remainder}]
    First note that
    \begin{equation}
    \begin{split}
        \left|z_n^\top \vect{\Lambda}^{(H)}(\vect{\theta})z_n-\tr(\vect{\Lambda}^{(H)}(\vect{\theta}))\right|=&\left|\sum_{j=1}^n \vect{\Lambda}_{jj}^{(H)}(\vect{\theta})(z_{nj}^2-1)\right|\\
\leq &\sum_{j=1}^n\vect{\Lambda}^{(H)}_{jj}(\vect{\theta})|z_{nj}^2-1|.
    \end{split}
    \end{equation}
     By the definition of $\vect{\Lambda}^{(H)}(\vect{\theta})$, $\varepsilon$, $t_i(n)$, \eqref{eq:quadratic_l2_norm} and \eqref{eq:quadratic_frob_norm},
     \begin{equation*}
     \begin{split}
     \|\vect{\Lambda}^{(H)}(\vect{\theta})\|_2\leq&\|\vect{\Lambda}(\widetilde{\vect{\theta}})\|_2(H+1)\left(\frac{\varepsilon (M+1)}{\theta_{\min}}\right)^H\leq Ce^{-H}, \\
         \|\vect{\Lambda}^{(H)}(\vect{\theta})\|_F^2\leq& e^{-2H}\|\vect{\Lambda}(\widetilde{\vect{\theta}})\|_F^2\leq Ce^{-2H}t_i(n).
     \end{split}
     \end{equation*}
    Also note that following similar arguments for bounding $\|\vect{\Lambda}(\vect{\theta}_{\varepsilon}^{(k)})\|_F^2$, we have
    \begin{equation}
        \sum_{j=1}^n\left|\vect{\Lambda}_{jj}(\widetilde{\vect{\theta}})\right|\leq Ct_i(n),
    \end{equation}
    and thus 
    \begin{equation}
    \begin{split}
        \sum_{j=1}^n\left|\vect{\Lambda}^{(H)}_{jj}(\vect{\theta})\right|\leq &e^{-H}\sum_{j=1}^n\left|\vect{\Lambda}_{jj}(\widetilde{\vect{\theta}})\right|\leq Ce^{-H}t_i(n).
        \end{split}
    \end{equation}
     Therefore, 
    \begin{equation}
    \begin{split}
        &\bbP\left(\left|z_n^\top \vect{\Lambda}^{(H)}(\vect{\theta})z_n-\tr(\vect{\Lambda}^{(H)}(\vect{\theta}))\right|>e^{-H}\left(t+Ct_i(n)\right)\right)\\
        \leq &\bbP\left(\sum_{i=1}^m\vect{\Lambda}_{jj}^{(H)}(\vect{\theta})(|z_{nj}^2-1|-\bbE(|z_{nj}^2-1|))>e^{-H}t\right),
    \end{split}
    \end{equation}
    where $C>0$ depends on $M, \theta_{\min},\theta_{\max},b_1,\dots,b_{M}$.
    Since $|z_{nj}^2-1|$ is sub-exponential with constant parameter, 
    \begin{equation}
        \bbP\left(\sum_{j=1}^n\vect{\Lambda}^{(H)}_{jj}(\vect{\theta})\left(|z_{nj}^2-1|-\bbE|z_{nj}^2-1|\right)>e^{-H}t\right)\leq 2\exp\left\{-c\min\left\{\frac{t^2}{t_i(n)},t\right\}\right\}.
    \end{equation}
    \end{proof}
\begin{proof}[proof of Lemma \ref{lem:g_k_conv_property_poly}]
    Following the calculations in the proof of Lemma \ref{lem:g_k_conv_property}, one can show that
    \begin{equation}
        \begin{split}
           &(g^*(\vect{\theta}^{(k)}))_{M+1}(\theta^{(k)}_{M+1}-\theta^*_{M+1})\\
           =&\frac{1}{2m}\sum_{l=1}^{M+1}(\theta^{(k)}_l-\theta^*_l)(\theta^{(k)}_{M+1}-\theta^*_{M+1})\sum_{j=1}^m\frac{\lambda^{(k)}_{lj}}{\left(\sum_{l=1}^{M+1}\theta_l\lambda^{(k)}_{lj}\right)^2}\\
           \geq &\frac{1}{2m}(\theta^{(k)}_{M+1}-\theta^*_{M+1})^2\sum_{j=1}^m\frac{1}{\left(\sum_{l=1}^{M+1}\theta_l\lambda^{(k)}_{lj}\right)^2}-\frac{1}{2m}(\theta_{\max}-\theta_{\min})^2\sum_{l=1}^{M}\sum_{j=1}^m\frac{\lambda^{(k)}_{lj}}{\left(\sum_{l=1}^{M+1}\theta_l\lambda^{(k)}_{lj}\right)^2}\\
           \geq&(\theta^{(k)}_{M+1}-\theta^*_{M+1})^2\frac{1-M\max_l a_l m^{\frac{(2+\alpha)(4b_1+3)}{4b_1(2b_1-1)}-1}}{8\theta_{\max}^2}\\
           &-(\theta_{\max}-\theta_{\min})^2M\left(\frac{1}{2\theta_{\min}^2}+\frac{\max_l a_l(4b_1+3)}{2\theta_{\min}^2(4b_1^2-6b_1-3)}\right)m^{\frac{(2+\alpha)(4b_1+3)}{4b_1(2b_1-1)}-1},
        \end{split}
    \end{equation}
    with probability at least $1-Mm^{-\alpha}$ for any $0<\alpha<\frac{8b_1^2-12b_1-6}{4b_1+3}$, where the last line is due to the following Lemma \ref{lem:eigen_ratio_bnds_poly}.
    
Therefore, 
   \begin{equation}
       (g^*(\vect{\theta}^{(k)}))_{M+1}(\theta^{(k)}_{M+1}-\theta^*_{M+1})\geq\frac{\gamma}{2} (\theta^{(k)}_{M+1}-\theta_{M+1}^*)^2-\varepsilon,
   \end{equation}
   where $\gamma=\frac{1}{8\theta_{\max}^2}$, $\varepsilon=Cm^{\frac{(2+\alpha)(4b_1+3)}{4b_1(2b_1-1)}-1}$, if $m>C$. Here $C>0$ depends only on $\theta_{\min}$,$\theta_{\max}$,$M$,$b_1,\dots,b_{M}$.
   \end{proof}
\begin{proof}[proof of Lemma \ref{lem:eigen_ratio_bnds_poly}]
Similarly from the proof of Lemma \ref{lem:eigen_ratio_bnds}, we apply Lemma \ref{lem:eigenval_err_bnds} on $\vect{K}_{f,n}^{(l)}$ to derive upper bounds for $\lambda_{lj}, 1\leq l\leq M-1$ w.h.p.
In particular, plug $r=j^{\frac{4b_l}{4b_l+3}}$ for each $1\leq j\leq n$ into \eqref{eq:eigenval_err_bnd2} and let $\delta=n^{-(\alpha+1)}$ for $0<\alpha<\frac{8b_1^2-12b_1-6}{4b_1+3}$,
then with probability at least $1-n^{-(\alpha+1)}$, 
    \begin{equation*}
    	\begin{split}
    	    C(r,n)<& \sqrt{\frac{2}{C_l}}r^{b_l+2}n^{\frac{\alpha}{2}}\leq\sqrt{\frac{2}{C_l}}j^{\frac{4b_l(b_l+2)}{4b_l+3}}n^{\frac{\alpha}{2}},\\
    	    E(r_j,n)<&\frac{C_l}{2b_l-1}r^{-(2b_l-1)}+\sqrt{\frac{2C_l}{2b_l-1}}r^{-(b_l-\frac{1}{2})}n^{\frac{\alpha}{2}}\\
    	    \leq&\left(\frac{C_l}{2b_l-1}+\sqrt{\frac{2C_l}{2b_l-1}}\right)j^{-\frac{2b_l(2b_l-1)}{4b_l+3}}n^{\frac{\alpha}{2}},
    	\end{split}
    \end{equation*}
    Thus we have
    \begin{equation}\label{eq:eigen_upp_bnd_poly}
    	\begin{split}
    	    \lambda_{lj}\leq &C_lj^{-2b_l}\left(n+\sqrt{\frac{2}{C_l}}j^{\frac{4b_l(b_l+2)}{4b_l+3}}n^{1+\frac{\alpha}{2}}\right)+\left(\frac{C_l}{2b_l-1}+\sqrt{\frac{2C_l}{2b_l-1}}\right)j^{-\frac{2b_l(2b_l-1)}{4b_l+3}}n^{1+\frac{\alpha}{2}}\\
    	    \leq&\left(2\sqrt{2C_l}+\sqrt{\frac{2C_l}{2b_l-1}}+\frac{C_l}{2b_l-1}\right)j^{-\frac{2b_l(2b_l-1)}{4b_l+3}}n^{1+\frac{\alpha}{2}}\\
    	    :=&a_lj^{-\frac{2b_l(2b_l-1)}{4b_l+3}}n^{1+\frac{\alpha}{2}},
    	\end{split}
    \end{equation}
   for $1\leq j\leq n, 1\leq l\leq M$, with probability at least $1-Mn^{-\alpha}$. 	    
    Now we are ready to prove the bounds for $\sum_{j=1}^n \frac{\lambda_{lj}\lambda_{l'j}}{\left(\sum_{h=1}^{M+1} \theta_h\lambda_{hj}\right)^2}$ for $1\leq l,l'\leq M+1$.
    \begin{enumerate}
    	\item $1\leq l\leq l'\leq M$\\
    	For any $0<L\leq n$, one can show that
    	\begin{equation}
    	    \begin{split}
    	        \sum_{j=1}^n \frac{\lambda_{lj}\lambda_{l'j}}{\left(\sum_{h=1}^{M+1}\theta_h\lambda_{hj}\right)^2}\leq&\frac{L}{\theta_{\min}^2}+\frac{a_la_{l'}}{\theta_{\min}^2}\sum_{j=L}^{\infty}j^{-\frac{4b_l(2b_l-1)}{4b_l+3}}n^{2+\alpha}\\
    	        \leq&\frac{L}{\theta_{\min}^2}+\frac{a_la_{l'}L^{1-\frac{4b_l(2b_l-1)}{4b_l+3}}}{\theta_{\min}^2(\frac{4b_l(2b_l-1)}{4b_l+3}-1)}n^{2+\alpha}
    	     \end{split}
    	\end{equation}
    	Let $L=n^{\frac{(2+\alpha)(4b_l+3)}{4b_l(2b_l-1)}}$, then we have
    	\begin{equation}
    	    \begin{split}
    	        \sum_{j=1}^n \frac{\lambda_{lj}\lambda_{l'j}}{\left(\sum_{h=1}^{M+1}\theta_h\lambda_{hj}\right)^2}\leq n^{\frac{(2+\alpha)(4b_l+3)}{4b_l(2b_l-1)}}\left(\frac{1}{\theta_{\min}^2}+\frac{a_la_{l'}(4b_l+3)}{\theta_{\min}^2(8b_l^2-8b_l-3)}\right)
    	     \end{split}
    	\end{equation}
      \item $l=l'=M+1$\\
    	   The upper bound for $\sum_{j=1}^n \frac{\lambda_{M+1,j}^2}{\left(\sum_{h=1}^{M+1}\theta_h\lambda_{hj}\right)^2}$ in Lemma \ref{lem:eigen_ratio_bnds_poly} is straightforward, since $\sum_{h=1}^{M+1}\theta_h\lambda_{hj}\geq \theta_{\min}\lambda_{M+1,j}$. While for the lower bound, note that $\lambda_{M+1,j}=1$, and
    	   \begin{equation}
    	       \sum_{j=1}^n \frac{\lambda_{M+1,j}^2}{\left(\sum_{h=1}^{M+1}\theta_h\lambda_{hj}\right)^2}\geq \frac{|\{j:\sum_{h=1}^{M+1}\theta_h\lambda_{hj}\leq 2\theta_{\max}\}|}{4\theta_{\max}^2}.
    	   \end{equation}
    	   Meanwhile, by \eqref{eq:eigen_upp_bnd_poly} one can show that
    	   \begin{equation}
    	       \begin{split}
    	           \sum_{h=1}^{M+1}\theta_h\lambda_{hj}\leq 2\theta_{\max}\Leftarrow & M\max_l a_l\theta_{\max}j^{-\frac{2b_1(2b_1-1)}{4b_1+3}}n^{1+\frac{\alpha}{2}}\leq\theta_{\max}\\
    	           \Leftarrow&j\geq M\max_l a_l n^{\frac{(2+\alpha)(4b_1+3)}{4b_1(2b_1-1)}}.
    	       \end{split}
    	   \end{equation}
    	   Therefore, 
    	   \begin{equation}
    	       \sum_{j=1}^n \frac{\lambda_{M+1,j}^2}{\left(\sum_{h=1}^{M+1}\theta_h\lambda_{hj}\right)^2}\geq \frac{n-M\max_l a_l n^{\frac{(2+\alpha)(4b_1+3)}{4b_1(2b_1-1)}}}{4\theta_{\max}^2}.
    	   \end{equation}
      \item $1\leq l\leq M$, $l'=M+1$\\
      First note that by similar arguments from the first case where $1\leq l\leq l'\leq M$, one can show that for any $L>0$,
      \begin{equation}
      \begin{split}
          \sum_{j=1}^n \frac{\lambda_{lj}\lambda_{l'j}}{\left(\sum_{h=1}^{M+1} \theta_h\lambda_{hj}\right)^2}\leq &\frac{L}{\theta_{\min}^2}+\frac{a_l}{\theta_{\min}^2}\sum_{j=L}^{\infty}j^{-\frac{2b_l(2b_l-1)}{4b_l+3}}n^{1+\frac{\alpha}{2}}\\
    	        \leq&\frac{L}{\theta_{\min}^2}+\frac{a_lL^{1-\frac{2b_l(2b_l-1)}{4b_l+3}}}{\theta_{\min}^2(\frac{2b_l(2b_l-1)}{4b_l+3}-1)}n^{1+\frac{\alpha}{2}},
    	     \end{split}
    	\end{equation}
      Let $L=n^{\frac{(2+\alpha)(4b_l+3)}{4b_l(2b_l-1)}}$, then we have
    	\begin{equation}
    	    \begin{split}
    	        \sum_{j=1}^n \frac{\lambda_{lj}\lambda_{l'j}}{\left(\sum_{h=1}^{M+1}\theta_h\lambda_{hj}\right)^2}\leq n^{\frac{(2+\alpha)(4b_l+3)}{4b_l(2b_l-1)}}\left(\frac{1}{\theta_{\min}^2}+\frac{a_l(4b_l+3)}{\theta_{\min}^2(4b_l^2-6b_l-3)}\right)
    	     \end{split}
    	\end{equation}
\end{enumerate}
Therefore, for any $0<\alpha<\frac{8b_1^2-12b_1-6}{4b_1+3}$, with probability at least $1-Mn^{-\alpha}$, \eqref{eq:eigen_ratio_bnds_poly} holds.
\end{proof}
\begin{proof}[proof for Lemma~\ref{lem:curv_lengthscale}]
	Consider the case where $x_i\sim \mathcal{N}(0,\sigma^2)$, $k(x,x')=\exp\{-(x-x')^2/2l^2\}$, then $\lambda_j$ takes the following analytical form \cite[see]{zhu1997gaussian}: 
	\begin{equation}
	\lambda_j=(1-\beta)\beta^{j-1},
	\end{equation}
	where $\beta=\frac{2\sigma^2}{2\sigma^2+l^2+l\sqrt{l^2+4\sigma^2}}$ is a decreasing function of positive $l$. We want to see if $\widetilde{\gamma}_{\epsilon}$ is a decreasing function of $\beta$. First note that
	\begin{equation}\label{eq:gamma_epsilon_deriv}
	\begin{split}
	\frac{\partial \widetilde{\gamma}^{(k)}_{\epsilon}(\beta,m)}{\partial \beta}=&-2\sum_{j=1}^m \frac{(j-1)\beta^{j-2}-j\beta^{j-1}}{\left(\theta^{(k)}_1m(1-\beta)\beta^{j-1}+\theta^{(k)}_2\right)^{3}}\\
	=&2\sum_{j=1}^{m}\frac{j\beta^{j-1}}{\left(\theta^{(k)}_1m(1-\beta)\beta^{j-1}+\theta^{(k)}_2\right)^{3}}-2\sum_{j=1}^{m-1}\frac{j\beta^{j-1}}{\left(\theta^{(k)}_1m(1-\beta)\beta^{j}+\theta^{(k)}_2\right)^{3}}\\
	=&2\sum_{j=1}^{m}j\beta^{j-1}\left[\left(\theta^{(k)}_1m(1-\beta)\beta^{j-1}+\theta^{(k)}_2\right)^{-3}-\left(\theta^{(k)}_1m(1-\beta)\beta^{j}+\theta^{(k)}_2\right)^{-3}\right]\\
	&+2\frac{m\beta^{m-1}}{\left(\theta^{(k)}_1m(1-\beta)\beta^{m}+\theta^{(k)}_2\right)^{3}}.
	\end{split}
	\end{equation}
	Let $a=\frac{\theta_1^{(k)}(1-\beta)}{\theta^{(k)}_2}$, and we provide an upper bound for $\frac{\theta^{(k)3}_2m}{2\log m}\frac{\partial \widetilde{\gamma}^{(k)}_{\epsilon}(\beta,m)}{\partial \beta}$ in the following:
	\begin{equation}\label{eq:curv_epsilon_deriv_upp}
	\begin{split}
	\frac{\theta^{(k)3}_2m}{2\log m}\frac{\partial \widetilde{\gamma}^{(k)}_{\epsilon}(\beta,m)}{\partial \beta}&=\frac{m}{\log m}\sum_{j=1}^m j\beta^{j-1}\left[\left(am\beta^{j-1}+1\right)^{-3}-\left(am\beta^j+1\right)^{-3}\right]\\
	&+\frac{m^2\beta^{m-1}}{\left(am\beta^{m}+1\right)^{3}\log m}\\
	\leq& -\frac{3a(1-\beta)m^2}{\log m}\sum_{j=1}^m \frac{j\beta^{2j-2}}{\left(
		am\beta^{j-1}+1\right)^4}+\frac{m^2\beta^{m-1}}{\left(am\beta^{m}+1\right)^{3}\log m}\\
	\leq&-\frac{3a(1-\beta)}{\log (\beta^{-1})\beta^2\left(
		a/\beta+1\right)^4}+\frac{m^2\beta^{m-1}}{\log m},
	\end{split}
	\end{equation}
	where we let $j=\frac{\log m}{\log (\beta^{-1})}$ on the last line. Since 
	\begin{equation}
	\lim_{m\rightarrow \infty}\frac{m^2\beta^{m-1}}{\log m}=0,
	\end{equation}
	\eqref{eq:curv_epsilon_deriv_upp} implies that for any $l_0>0$, there exists a $m_0>0$ depending on $\theta^{(k)}_1,\theta^{(k)}_2,\sigma,l_0$ such that as long as $m>m_0$, $\widetilde{\gamma}_{\epsilon}$ is a increasing function of $l\geq l_0$. That is to say, for large enough minibatch, larger length scale leads to faster convergence for $\sigma_{\epsilon}^2$, which suggests the potential benefit of nearby sampling.
	\end{proof}
\section{Explanation on the connection between Assumption~\ref{assump:param_bnd} and Assumption~\ref{assump:sg_bnd}}\label{append:sg_bnd}
We explain the how Assumption~\ref{assump:sg_bnd} can be proved with Assumption~\ref{assump:param_bnd} under the exponential eigendecay and polynomial eigendecay cases separately.
\begin{itemize}
    \item \textbf{Exponential eigendecay (Assumption~\ref{assump:eigen_decay_exp})}: Consider Lemma~\ref{lem:statistical_err_full_grad} with $M=1$, $s_1(m)=\tau\log m$ and $s_2(m)=m$, then we have 
$$
\|g(\vect{\theta}^{(k)};\vect{X}_{\xi_{k+1}},\vect{y}_{\xi_{k+1}})-g^*(\vect{\theta}^{(k)})\|_2\leq C(\log m)^{-\frac{1}{2}+\varepsilon},
$$
with probability at least $1-C\exp\{-c(\log m)^{2\varepsilon}\}$. Hence it suffices to show that $\|g^*(\vect{\theta}^{(k)})\|_2$ is bounded with high probability. Meanwhile, some calculation suggests 
\begin{equation}\label{eq:sg_formula}
\begin{split}
    \mathbb{E}[g_1(\vect{\theta}^{(k)};\vect{X}_{\xi_{k+1}},\vect{y}_{\xi_{k+1}})|\vect{X}_{\xi_{k+1}}]=\frac{1}{2s_1(m)}\sum_{l=1}^2(\theta_l^{(k)}-\theta_l^*)\sum_{j=1}^m\frac{(\lambda_{j}^{(k)})^{1+\ind{l=1}}}{(\theta_1^{(k)}\lambda_j^{(k)}+\theta_2^{(k)})^2},\\
    \mathbb{E}[g_2(\vect{\theta}^{(k)};\vect{X}_{\xi_{k+1}},\vect{y}_{\xi_{k+1}})|\vect{X}_{\xi_{k+1}}]=\frac{1}{2s_2(m)}\sum_{l=1}^2(\theta_l^{(k)}-\theta_l^*)\sum_{j=1}^m\frac{(\lambda_{j}^{(k)})^{\ind{l=1}}}{(\theta_1^{(k)}\lambda_j^{(k)}+\theta_2^{(k)})^2}.
    \end{split}
\end{equation}
Under the exponential eigendecay assumption, Lemma~\ref{lem:eigen_ratio_bnds} suggests that 
\begin{equation}\label{eq:bnd_sg_eigenratio_bnds_exp}
\begin{split}
    \sum_{j=1}^m\frac{(\lambda_{j}^{(k)})^{1+\ind{l=1}}}{(\theta_1^{(k)}\lambda_j^{(k)}+\theta_2^{(k)})^2}\leq& C\log m,\\
    \sum_{j=1}^m\frac{1}{(\theta_1^{(k)}\lambda_j^{(k)}+\theta_2^{(k)})^2}\leq& Cm.
\end{split}
\end{equation}
By the boundedness of $\vect{\theta}^{(k)}$ (Assumption~\ref{assump:param_bnd}), \eqref{eq:bnd_sg_eigenratio_bnds_exp} and \eqref{eq:sg_formula}, we have $$\|g(\vect{\theta^{(k)}};\vect{X}_{\xi_{k+1}},\vect{y}_{\xi_{k+1}})\|_2\leq C$$ when $s_1(m)\geq c\log m$ and $s_2(m)=m$ and $m$ is large enough.
\item \textbf{Polynomial eigendecay (Assumption~\ref{assump:eigen_decay_poly})}: Consider Lemma~\ref{lem:statistical_err_full_grad} with $M=1$ and Assumption~\ref{assump:eigen_decay_poly}, then when $s_1(m)=s_2(m)=m$,
$$
\|g(\vect{\theta}^{(k)};\vect{X}_{\xi_{k+1}},\vect{y}_{\xi_{k+1}})-g^*(\vect{\theta}^{(k)})\|_2\leq Cm^{-\frac{1}{2}+\varepsilon},
$$
with probability at least $1-C\exp\{-cm^{2\varepsilon}\}$. Meanwhile, note that Lemma~\ref{lem:eigen_ratio_bnds} suggests that for $i=0,1$,
\begin{equation}\label{eq:bnd_sg_eigenratio_bnds_poly}
    \sum_{j=1}^m\frac{(\lambda_j^{(k)})^i}{(\theta_1^{(k)}\lambda_j^{(k)}+\theta_2^{(k)})^2}\leq Cm,
\end{equation}
and since \eqref{eq:sg_formula} still holds,
we have $\|g^*(\vect{\theta}^{(k)})\|_2\leq C$ when $s_1(m)=s_2(m)=m$.
Therefore, by the boundedness of $\vect{\theta}^{(k)}$ (Assumption~\ref{assump:param_bnd}),  $\|g(\vect{\theta}^{(k)};\vect{X}_{\xi_{k+1}},\vect{y}_{\xi_{k+1}})\|_2\leq C$ when $s_1(m)=s_2(m)=m$ and $m$ is large enough.
\end{itemize}

\bibliography{reference}
\end{document}